\documentclass[11pt,en,cite=authoryear]{note}

\title{Time Series Generative Learning with Application to Brain Imaging Analysis}
\author{Zhenghao Li$^*$ \\	$*$ Contributes equally \\ \and Sanyou Wu$^*$ \\ $\#$ Correspondence to: lfeng@hku.hk  \\ \and Long Feng$^\#$ 
}
\institute{Department of Statistics \& Actuarial Science, The University of Hong Kong}
\date{}

\usepackage{appendix}
\usepackage{array}
\usepackage{subfigure}
\usepackage{graphicx}
\usepackage{amssymb,amsmath,amsthm}

\usepackage{xcolor}
\usepackage{graphicx}
\graphicspath{
	{note/figure/}{note/image}{./image/}
}
\let\savehash\hash
\let\hash\relax
\usepackage{mathabx}
\let\hash\savehash
\definecolor{pinegreen}{rgb}{0.0, 0.47, 0.44}
\usepackage{algorithm}
\usepackage{algorithmic}
\usepackage{amsmath}
\usepackage{amsfonts}
\usepackage{overpic}

\definecolor{pinegreen}{rgb}{0.0, 0.47, 0.44}
\definecolor{MyYellow}{HTML}{FF9C3B}
\definecolor{MyRed}{HTML}{EA6060}
\definecolor{MyGreen}{HTML}{8ACC82}
\definecolor{MyPurple}{HTML}{8286CC}

\newtheorem{thm}{\textbf{Theorem}}
\newtheorem{prop}{\textbf{Proposition}}
\newtheorem{col}{Corollary}
\newtheorem{lem}{Lemma}

\newtheorem{asmp}{Asumption}
\newtheorem{exm}{Example}

\def\bbR{{\mathbb{R}}}

\def\argmin{\operatorname{argmin} \displaylimits}

\def\E{\mathbb{E}}

\def\T{\top}

\def\Deltabreve{\breve{\Delta}}
\def\Deltabar{\widebar{\Delta}}
\def\Deltatilde{\widetilde{\Delta}}

\def\hX{\widehat{X}}

\def\bbL{{\mathbb{L}}}
\def\bbE{{\mathbb{E}}}

\sloppy \hyphenpenalty=10000
\hfuzz=20truept

\newcommand{\bel}{\begin{eqnarray}\label}
\newcommand{\eel}{\end{eqnarray}}
\newcommand{\bes}{\begin{eqnarray*}}
\newcommand{\ees}{\end{eqnarray*}}
\newcommand{\bei}{\begin{itemize}}
\newcommand{\beiftnt}{\begin{itemize}\footnotesize}
\newcommand{\eei}{\end{itemize}}

\def\benu{\begin{enumerate}}
\def\eenu{\end{enumerate}}

\def\argmin{\mathop{\rm arg\, min}}

\def\E{{\mathbb{E}}}

\def\complex{\mathop{{\rm I}\kern-.58em\hbox{\rm C}}\nolimits}

\def\diag{\hbox{\rm diag}}

\def\mathbold{\boldsymbol} 


\def\calB{{\cal B}}

\def\calD{{\cal D}}

\def\ghat{\widehat{g}}

\def\calG{{\cal G}}\def\Ghat{\widehat{G}}

\def\hhat{\widehat{h}}

\def\calH{{\cal H}}\def\Hhat{\widehat{H}}

\def\bI{\mathbold{I}}

\def\calK{{\cal K}}

\def\calL{{\cal L}}

\def\calN{{\cal N}}

\def\bs{\mathbold{s}}

\def\calW{{\cal W}}

\def\Xhat{\widehat{X}}\def\Xtil{{\widetilde X}}

\def\Yhat{\widehat{Y}}


\def\Deltabar{{\overline \Delta}}


\begin{document}
\maketitle

\begin{abstract}
This paper focuses on the analysis of sequential image data, particularly brain imaging data such as MRI, fMRI, CT, with the motivation of understanding the brain aging process and neurodegenerative diseases. To achieve this goal, we investigate image generation in a time series context. Specifically, we formulate a min-max problem derived from the $f$-divergence between neighboring pairs to learn a time series generator in a nonparametric manner. The generator enables us to generate future images by transforming prior lag-k observations and a random vector from a reference distribution. With a deep neural network learned generator, we prove that the joint distribution of the generated sequence converges to the latent truth under a Markov and a conditional invariance condition. Furthermore, we extend our generation mechanism to a panel data scenario to accommodate multiple samples. The effectiveness of our mechanism is evaluated by generating real brain MRI sequences from the Alzheimer's Disease Neuroimaging Initiative.  These generated image sequences can be used as data augmentation to enhance the performance of further downstream tasks, such as Alzheimer's disease detection.
\keywords{Time series, Generative learning, Brain imaging, Markov property, data augmentation}
\end{abstract}

\section{Introduction}
\noindent Time series data is not limited to numbers; it can also include images, texts, and other forms of information. This paper specifically focuses on the analysis of image time series, driven by the goal of understanding brain aging process with brain imaging data, such as magnetic resonance imaging (MRI), computer tomography (CT), etc.
Understanding the brain aging process holds immense value for a variety of applications. For instance, insights into the brain aging process can reveal structural changes within the brain \citep{tofts2005quantitative} and aid in the early detection of degenerative diseases such as Alzheimer's Disease \citep{jack2004comparison}. Consequently, the characterization and forecasting of brain image time series data can play a crucial role in the fight against age-related neurodegenerative diseases \citep{cole2018brain, huizinga2018spatio}.

Time series analysis, a classic topic in statistics, has been extensively studied, with numerous notable works such as autoregression (AR), autoregressive moving-average 
(ARMA, \citealt{brockwell1991time}), autoregressive conditional heteroskedasticity (ARCH, \citealt{engle1982autoregressive, bollerslev1986generalized}), among others. The classic scalar time series model has been expanded to accommodate vectors \citep{stock2001vector}, matrices \citep{chen2021autoregressive,han2023simultaneous,chang2023modelling}, and even tensors \citep{chen2022factor} in various contexts.
Furthermore, numerous time series analysis methods have been adapted to high-dimensional models, employing regularization techniques, such as \citet{basu2015regularized,guo2016high}. Beyond linear methods, nonlinear time series models have also been explored in the literature, e.g., \citet{fan2003nonlinear, tsay2018nonlinear}, etc. We refer to \cite{tsay2013multivariate} for a comprehensive review of time series models. In recent years, with the advent of deep learning, various deep neural network architectures have been applied to sequential data modeling, including recurrent neural network (RNN)-based methods \citep{pascanu2013construct, lai2018modeling} and attention-based methods \citep{li2019enhancing, zhou2021informer}. These deep learning models have demonstrated remarkable success in addressing nonlinear dependencies, resulting in promising forecasting performances.  
However, in spite of these advances, image data typically contains rich spatial and structural information along with substantially higher dimensions. Time series methods designed for numerical values are not easily applicable to image data analysis.

In recent years, imaging data analysis has attracted significant attention in both statistics and machine learning community. Broadly speaking, existing work in the statistical literature can be divided into two categories. The first category typically involves converting images into vectors and analyzing the resulting vectors with high-dimensional statistical methods, such as Lasso \citep{TibshiraniR96}  or other penalization techniques, Bayesian approaches \citep{kang2018scalar,feng2019bayesian}, etc.
Notably, this type of approach may generate ultra high-dimensional vectors and face heavy computational burdens when dealing with high resolution images.
The second category of approaches treats image data as tensor inputs and employs general tensor decomposition techniques, such as Canonical polyadic decomposition \citep{zhou2013tensor} or Tucker decomposition \citep{li2018tucker,luo2023low}.
Recently, Kronecker product based decomposition has been applied in image analysis and its connections to Convolutional Neural Network (CNN,  \citealt{lecun1998gradient}) has been explored in the literature \citep{wu2023sparse,feng2023deep}. Indeed,
Deep neural networks (DNN), particularly CNNs, have arguably emerged as the  most prevalent approach for image data analysis across various contexts.

Owing to the success of DNNs, generative models have become an immense active area, achieving numerous accomplishments in image analysis and computer vision. In particular, variational autoencoder \citep{kingma2013auto}, generative adversiral networks (GAN, \citealt{goodfellow2014generative}) and diffusion models \citep{ho2020denoising}  have gained substantial attention and have been applied to various imaging tasks. Moreover, statisticians have also made tremendous contributions to the advancement of generative models. For example, \cite{zhou2023deep} introduced a deep generative approach for conditional sampling, which matches appropriate joint distributions using Kullback-Liebler divergence. \cite{zhou2023testing} introduced a non-parametric test for the Markov property in time series using generative models.
\citet{chen2022inferential} proposed an inferential Wasserstein GAN (iWGAN) model that fuses autoencoders and GANs, and established the  generalization error bounds for the iWGAN. 
Although generative models have achieved remarkable success, their investigation in the context of image time series analysis has been much less explored, to the best of our knowledge.   \citet{ravi2019degenerative} introduced a deep learning approach to generate  neuroimages and simulate disease progression, while \citet{yoon2023sadm} employed a conditional diffusion model for generating sequential medical images. 
In fact, handling medical imaging data can present additional challenges, such as limited sample sizes, higher image resolutions, and subtle differences between images. Consequently, conventional generative approaches might not be sufficient for generating medical images.

Motivated by brain imaging analysis, this paper considers image generation problem in a time series context. Suppose that a time series of images $\{X_t\in\mathbb{R}^p,  t=1,\ldots, T\}$ of dimension $p$ is observed from $0$ to $T$ and we are interested in generating the next $S$ points, i.e., $\{X_t, t=T+1,\ldots, T+S\}$. Our goal is to generate a sequence $(\Xhat_{T+1},\ldots, \Xhat_{T+S})$ which follows the same joint distribution as $(X_{T+1},\ldots, X_{T+S})$, i.e., 
\bel{intro1}
(\Xhat_{T+1},\cdots, \Xhat_{T+S})  \overset{\text{d}}{=} (X_{T+1},\cdots,X_{T+S}).
\eel
In this paper, we show that under a Markov and a conditional invariance condition, such generation is possible by letting
\bel{intro2}
\Xhat_T = X_{T}, \ \Xhat_{T+s}  =  g(\eta_{T+s -1}, \Xhat_{T+s-1}), \quad 1 \leq s \leq S,
\eel
where $g$ is certain unknown measurable function, i.e., generator, that need to be estimated, $\eta_t$ is a $m$-dimensional random vector that is drawn i.i.d. from a reference distribution such as Gaussian. If (\ref{intro1}) could be achieved, not only would any subset of the generated sequence $(\Xhat_{T+1},\ldots, \Xhat_{T+S})$ follows the same distribution as that of $(X_{T+1},\ldots, X_{T+S})$, but the dependencies between $(\Xhat_{T+1},\ldots, \Xhat_{T+S})$ and $X_T$ also maintain consistency with the existing relationships between $(X_{T+1},\ldots, X_{T+S})$ and $X_T$. We refer to the generation (\ref{intro2}) as ``iteration generation'' as $\Xhat_{T+S}$ is generated iteratively. Additionally, we consider a ``s-step'' generation that allows us to generate  $\Xhat_{T+s}$ directly from $X_T$:
\bel{intro3}
\Xtil_{T+s} = G(\eta_{T}, X_T, s), \quad 1\leq s \leq S,
\eel
where $G$ is the target function that to be learned.

To guarantee the generation (\ref{intro2}) achieves distribution matching (\ref{intro1}), in this paper, we first establish the existence of the generator $g$. Given its existence, we formulate a min-max problem that derived from the $f$-divergence between the pairs $(X_{t}, X_{t+1})$  and $(X_{t}, g(\eta_t, X_{t}))$ to estimate the generator $g(\cdot)$. With the learned $\ghat(\cdot)$, we prove that under a mild distribution condition on $X_t$, not only does the pairwise distribution of $(X_T, \Xhat_{T+s})$ converge in expectation to $(X_T, X_{T+s})$ for any $1 \leq s \leq S$, but more important, the joint distribution of the generated sequence $(X_{T},\Xhat_{T+1},\cdots, \Xhat_{T+S})$ converges to that of $(X_T,X_{T+1},\cdots,X_{T+S})$.
The lag-1 generation in (\ref{intro2}) could be further extended to a lag-k time series context with similar distribution matching guarantees.
Furthermore, we generalize our framework to a panel data scenario to accommodate multiple samples.  Finally, 
The effectiveness of our generation mechanism is evaluated by generating real brain MRI sequences from the Alzheimer's Disease Neuroimaging Initiative (ADNI).

The rest of this paper is organized as follows. In Section \ref{sec:2}, we establish the existence of the generator and formulate the estimation of iterative and s-step generation. In Section \ref{sec:3}, we prove the weak convergence of joint and pairwise distribution for the generation.  We extend our theoretical analysis to a lag-k  time series context in Section  \ref{sec:4}. Section \ref{sec:5} generalize the framework to a panel data scenario with multiple samples. We conduct simulation studies in Section \ref{sec:simulation} and generate real brain MRI sequences in ADNI in Section \ref{sec:real}. The detailed proof and implementation of our approach are deferred to the supplementary material. 

\section{Existence and estimation}\label{sec:2}
\noindent We consider a time series $\left\{X_t \in\mathbb{R}^p, t=1,2,\ldots \right\}$ 
that satisfies the following two assumptions
\bes
&&\text{(i) Markov}: \ \ X_t\vert X_{t-1},\ldots,X_0 \ \overset{\text{d}}{=} \ X_t\vert X_{t-1},
\cr && \text{(ii) Conditional invariance}: \ \ p_{X_t\vert X_{t-1}}(x\vert y ) = p_{X_1\vert X_0}(x\vert y), \ t=1,2,\ldots.
\ees
Markov and conditional invariance property are commonly imposed in the time series analysis. Here we restrict our attention to a lag-1 time series , further generalization to lag-k scenarios will be considered in Section \ref{sec:4}. 
Notably, \citet{zhou2023testing} proposed a nonparametric test for the Markov property using generative learning. The proposed test could even allow us to infer the order of a Markov model. We omit the details here and refer interested readers to their paper for further details.


Suppose that the time series is observed from $0$ to $T$ and we are interested in generating the next $S$ points, i.e., $\{X_t, t=T+1,\ldots, T+S\}$. 
In particular, we aim to generate a sequence $(\Xhat_{T+1},\ldots, \Xhat_{T+S})$ that follows the same joint distribution as $(X_{T+1},\ldots, X_{T+S})$. More aggressively, we aim to achieve
\bel{joint1}
(X_T,\Xhat_{T+1},\cdots, \Xhat_{T+S}) \sim (X_T,X_{T+1},\cdots,X_{T+S}).
\eel
The target (\ref{joint1}) is aggressive.  If it holds true, then not only does any subset of the generated sequence $(\Xhat_{T+1},\ldots, \Xhat_{T+S})$ follows the same distribution as that of $(X_{T+1},\ldots, X_{T+S})$, but the dependencies between $(\Xhat_{T+1},\ldots, \Xhat_{T+S})$ and $X_T$ also remain consistent with those existing between $(X_{T+1},\ldots, X_{T+S})$ and $X_T$.

In this paper, we show that such a generation is possible by letting 
\bel{gen}
\Xhat_T = X_T, \ \ 
\Xhat_{T+s} = g(\eta_{T+s-1},\Xhat_{T+s-1}), \ \ 1\le s \le S, 
\eel
where $g$ is certain unknown measurable function, $\left\{\eta_t\right\}\overset{\text{i.i.d}}{\sim}N(0,\bI_m)$ is a sequence of $m$-dimensional Gaussian vector that is independent of $X_t$. 

The existence of target function $g$ is based on the following proposition. 

\begin{thm}\label{thm0}
	Let $X_0,X_1,\cdots,$ be a sequence of random variables which satisfy:
	\bes
	&&\text{(i) Markov}: \ \ X_t\vert X_{t-1},\cdots,X_0 \overset{\text{d}}{=} X_t\vert X_{t-1}
	\cr && \text{(ii) Conditional invariance}: \ \ p_{X_t\vert X_{t-1}}(x\vert y ) = p_{X_1\vert X_0}(x\vert y)
	\ees
	for any $t \ge 1$. Suppose $\Xhat_0^0 \overset{\text{d}}{=} X_0$ and  $\eta_0,\eta_1,\cdots$ be a sequence of independent $m$-dimensional Gaussian vector. Then there exist a 
	a measurable function $g$ such that the sequence 
	\bel{prop1-1}
	\left\{\Xhat_t^0: \ \Xhat_t^0 = g(\eta_{t-1},\Xhat^0_{t-1}), \ t=1,2,\ldots \right\}	
	\eel
	satisfies that for any $s\ge 1$, 
	\bel{prop1-2}
	(\Xhat_0^0,\Xhat_1^0,\cdots,\Xhat_s^0)	\overset{\text{d}}{=}  (X_0,X_1,\cdots,X_s) .
	\eel
\end{thm}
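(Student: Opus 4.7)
The plan is to construct $g$ via the noise-outsourcing lemma (Kallenberg's transfer theorem) and then verify the joint distributional identity by induction on $s$, exploiting the Markov and conditional invariance hypotheses in the induction step.

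First I would establish the existence of the transition generator. Since $\mathbb{R}^p$ is a Borel space, a regular conditional distribution $P_{X_1 \mid X_0}(\cdot \mid y)$ exists and is a Markov kernel in $y$. The noise-outsourcing lemma then guarantees a measurable map $g : \mathbb{R}^m \times \mathbb{R}^p \to \mathbb{R}^p$ and an auxiliary standard Gaussian $\eta \sim N(0, \bI_m)$, independent of $X_0$, such that $(X_0, g(\eta, X_0)) \stackrel{\text{d}}{=} (X_0, X_1)$; equivalently, for every $y$ in the support of $X_0$, $g(\eta, y) \stackrel{\text{d}}{=} X_1 \mid X_0 = y$. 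The Gaussianity of the auxiliary noise is convenient but not essential here (any atomless distribution on a Borel space works), and the dimension $m$ can be taken arbitrary as long as it is at least $1$.

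Next I would run an induction on $s \ge 0$ to establish (\ref{prop1-2}). The base case $s=0$ is exactly the assumption $\hat{X}_0^0 \stackrel{\text{d}}{=} X_0$. For the inductive step, assume $(\hat{X}_0^0, \ldots, \hat{X}_{s-1}^0) \stackrel{\text{d}}{=} (X_0, \ldots, X_{s-1})$. Write $\hat{X}_s^0 = g(\eta_{s-1}, \hat{X}_{s-1}^0)$ with $\eta_{s-1}$ independent of $(\hat{X}_0^0, \ldots, \hat{X}_{s-1}^0)$ (which follows since the $\eta_t$ are jointly independent and the $\hat{X}_t^0$ are measurable functions of $\eta_0, \ldots, \eta_{s-2}$ and $\hat{X}_0^0$). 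Hence, conditionally on $\hat{X}_{s-1}^0 = y$, the variable $\hat{X}_s^0$ has the law of $g(\eta, y)$, which by construction equals $P_{X_1 \mid X_0}(\cdot \mid y)$, and by conditional invariance equals $P_{X_s \mid X_{s-1}}(\cdot \mid y)$. Combining this with the Markov property of $(X_t)$, which says that $P_{X_s \mid X_{s-1}, \ldots, X_0}(\cdot \mid x_{s-1}, \ldots, x_0) = P_{X_s \mid X_{s-1}}(\cdot \mid x_{s-1})$, the conditional law of $\hat{X}_s^0$ given $(\hat{X}_0^0, \ldots, \hat{X}_{s-1}^0)$ coincides with that of $X_s$ given $(X_0, \ldots, X_{s-1})$. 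Together with the induction hypothesis on the marginal of the first $s$ coordinates, this yields the joint identity through the factorization
\begin{equation*}
P_{\hat X_0^0, \ldots, \hat X_s^0} = P_{\hat X_s^0 \mid \hat X_0^0, \ldots, \hat X_{s-1}^0} \otimes P_{\hat X_0^0, \ldots, \hat X_{s-1}^0} = P_{X_s \mid X_0, \ldots, X_{s-1}} \otimes P_{X_0, \ldots, X_{s-1}} = P_{X_0, \ldots, X_s}.
\end{equation*}

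The main technical obstacle is the first step: justifying noise-outsourcing rigorously and ensuring that the same $g$ can be reused at every time step. The reuse is legitimate precisely because of conditional invariance, which guarantees that the kernel $P_{X_t \mid X_{t-1}}$ does not depend on $t$, so one single measurable representer $g$ suffices. Care is also needed in the induction to track the independence of the freshly injected noise $\eta_{s-1}$ from the entire past of the generated sequence; this is a routine consequence of the i.i.d.\ assumption on $\{\eta_t\}$ and the recursive measurability of $\hat{X}_{t}^0$, but it is essential to invoke at each step so that conditioning on $\hat{X}_{s-1}^0$ reduces $g(\eta_{s-1}, \hat{X}_{s-1}^0)$ to the kernel's output.
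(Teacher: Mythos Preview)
Your proposal is correct and follows essentially the same route as the paper: invoke the noise-outsourcing lemma (Lemma~\ref{lm1}) to obtain $g$ with $(X_0,g(\eta,X_0))\overset{\text{d}}{=}(X_0,X_1)$, then prove (\ref{prop1-2}) by induction on $s$, using independence of the fresh noise $\eta_{s-1}$ from the generated past together with conditional invariance and the Markov property to match conditional laws, and finally combine with the induction hypothesis on the first $s$ coordinates. The only cosmetic difference is that the paper starts the induction at $s=1$ rather than $s=0$.
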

\noindent Theorem \ref{thm0} could be proved using the following  noice-outsourcing lemma.
\begin{lem}\label{lm1}
	(Theorem 5.10 in \cite{Kallenberg2021FoundationsOM})
	
	Let $(X,Y)$ be a random vector. Suppose random variables $\Xhat\overset{\text{d}}{=}X$ and $\eta \sim N(0,1)$ which is independent of $\Xhat$. Then there exist a random variable $\Yhat$ and a measurable function $g$ such that $\Yhat= g(\eta, \Xhat)$ and $(X,Y)\overset{\text{d}}{=}(\Xhat,\Yhat)$.
\end{lem}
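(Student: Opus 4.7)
The plan is to prove the lemma by (i) disintegrating the joint law of $(X, Y)$ via a regular conditional distribution, (ii) converting the Gaussian $\eta$ into a uniform random variable, and (iii) transporting this uniform randomness through a measurable ``conditional quantile'' map so that the resulting $\Yhat$ has, given $\Xhat$, the same conditional law that $Y$ has given $X$.

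First, I would observe that since $(X, Y)$ is a random vector in a Polish space (say $\mathbb{R}^p \times \mathbb{R}^q$), the regular conditional distribution of $Y$ given $X$ exists as a Markov kernel $\mu(x, B) = P(Y \in B \mid X = x)$, measurable in $x$ for each Borel $B \subseteq \mathbb{R}^q$. Second, I would set $U := \Phi(\eta)$, where $\Phi$ is the standard normal CDF; this yields $U \sim \text{Unif}(0, 1)$ independent of $\Xhat$, which reduces the problem to constructing the generator from a uniform source instead of a Gaussian one.

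Third, and this is the heart of the argument, I would construct a jointly measurable map $f : \mathbb{R}^p \times [0, 1] \to \mathbb{R}^q$ such that, for every fixed $x$, the push-forward of Lebesgue measure on $[0, 1]$ under $f(x, \cdot)$ equals $\mu(x, \cdot)$. For scalar $Y$ this is the conditional quantile $f(x, u) = \inf\{y : \mu(x, (-\infty, y]) \geq u\}$, which is jointly measurable because the conditional CDF $(x, y) \mapsto \mu(x, (-\infty, y])$ is jointly measurable and $f$ is obtained from it by a right-continuous inverse. For vector-valued $Y$, one either iterates the scalar construction coordinate by coordinate, disintegrating $\mu(x, \cdot)$ successively into one-dimensional conditionals and splitting the single uniform $U$ into countably many independent uniforms via a measurable bijection $[0, 1] \to [0, 1]^\infty$, or else invokes the Borel isomorphism theorem, which identifies $\mathbb{R}^q$ with $[0, 1]$ as uncountable Polish spaces and reduces the problem to the scalar case.

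Finally, I would define $g(e, x) := f(x, \Phi(e))$ and set $\Yhat := g(\eta, \Xhat)$. By construction, the conditional law of $\Yhat$ given $\Xhat = x$ is $\mu(x, \cdot)$, which is the conditional law of $Y$ given $X = x$; combined with $\Xhat \overset{\text{d}}{=} X$, standard Fubini/disintegration gives $(X, Y) \overset{\text{d}}{=} (\Xhat, \Yhat)$. The main obstacle is step three: turning the abstract Markov kernel $\mu$ into a single jointly measurable function $f$ realizing all conditional laws simultaneously. The scalar case is a routine CDF inversion, but in higher dimensions there is no canonical order on the range, so one must either argue carefully that joint measurability is preserved through coordinate-by-coordinate disintegration or appeal to the Borel isomorphism between Polish spaces as a black box. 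Once $f$ is in hand, the remainder of the proof is formal verification.
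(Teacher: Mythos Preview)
The paper does not supply its own proof of this lemma: it is stated as a citation to Theorem~5.10 in Kallenberg's \emph{Foundations of Modern Probability} and is used as a black box in the proof of Theorem~\ref{thm0}. So there is no ``paper's proof'' to compare against beyond Kallenberg's original.

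Your proposal is correct and is precisely the standard argument underlying the noise-outsourcing lemma: existence of a regular conditional distribution on a Polish space, reduction of the Gaussian source to a uniform one via $\Phi$, and construction of a jointly measurable representation map (conditional quantile in the scalar case, Borel isomorphism or iterated disintegration in the vector case). Kallenberg's treatment proceeds along the same lines, though phrased more abstractly via transfer of randomization and Borel isomorphisms between standard Borel spaces. Your sketch is faithful to that argument; the only point requiring genuine care, as you correctly flag, is the joint measurability of the map $f$ in the multivariate case, which is exactly where the Borel isomorphism theorem (or an explicit coordinatewise construction) is invoked.
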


We refer to the generation mechanism (\ref{prop1-1}) as ``iterative generation" since it is produced iteratively, one step at a time.
Theorem \ref{thm0} proves the existence of such iterative generation process. 
Besides iterative generation, an alternative approach is to directly generate the outcomes after $s$ steps. Specifically, we consider the $s$-step generation of the following form
\bel{gen2}
\Xtil_{T+s} = G(\eta_{T}, X_T, s), \ \ 1\le s \le S, 
\eel
with $G$ being the target function.



\begin{thm}\label{thm1}
	Let $X_0,X_1,\cdots,$ be a sequence of random variables which satisfy:
	\bes
	&&\text{(i) Markov}: \ \ X_t\vert X_{t-1},\cdots,X_0 \overset{\text{d}}{=} X_t\vert X_{t-1},
	\cr && \text{(ii) Conditional invariance}: \ \ p_{X_t\vert X_{t-1}}(x\vert y ) = p_{X_1\vert X_0}(x\vert y)
	\ees
	for any $t \ge 1$. 
	Let $\eta_0,\eta_1,\cdots$ be a sequence of independent $m$-dimensional Gaussian vector. Further let $g(\cdot)$ be the target function in Theorem \ref{thm0}.
	Then there exist a measurable function $G$ such that  the sequence 
	\bel{thm1-00}
	\left\{\Xtil_t^0: \ \Xtil_t^0 =G(\eta_{T}, X_0, s), \ t=1,2,\ldots \right\}	
	\eel
	satisfies
	\bel{thm1-1}
	\Xtil^0_t\overset{\text{d}}{=}X_t
	\eel
	and
	\bel{thm1-0}
	G(\eta, X ,1)= g(\eta, X), \ \ \ \forall \eta\in \mathbb{R}^m, \ \  X\in\mathbb{R}^p.
	\eel
\end{thm}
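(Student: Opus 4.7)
The plan is to define the $s$-step generator $G(\eta, X, s)$ piecewise in the step index $s$, invoking the noise-outsourcing lemma (Lemma \ref{lm1}) once for each $s \ge 1$ and pasting the resulting family into a single measurable map of three arguments.

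First I would handle $s = 1$ by setting $G(\eta, X, 1) := g(\eta, X)$, with $g$ the iterative generator from Theorem \ref{thm0}. This satisfies (\ref{thm1-0}) by construction, and since Theorem \ref{thm0} with $s = 1$ already yields $(\Xhat^0_0, g(\eta_0, \Xhat^0_0)) \overset{\text{d}}{=} (X_0, X_1)$, we also get $G(\eta_0, X_0, 1) \overset{\text{d}}{=} X_1$. Next, for each fixed $s \ge 2$, observe that under the Markov and conditional-invariance assumptions the joint law of $(X_0, X_s)$ is well-defined and does not depend on the starting index. Applying Lemma \ref{lm1} to that pair with $\eta \sim N(0, I_m)$ independent of $X_0$ furnishes a measurable map $G_s : \bbR^m \times \bbR^p \to \bbR^p$ and a random vector $\Xtil_s = G_s(\eta, X_0)$ with $(X_0, \Xtil_s) \overset{\text{d}}{=} (X_0, X_s)$, so in particular $\Xtil_s \overset{\text{d}}{=} X_s$. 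Setting $G(\eta, X, s) := G_s(\eta, X)$ then gives $G(\eta_T, X_0, s) \overset{\text{d}}{=} X_s$ for every $s \ge 2$, which together with the $s = 1$ case establishes (\ref{thm1-1}). Joint measurability of the combined map $G : \bbR^m \times \bbR^p \times \mathbb{N} \to \bbR^p$ is automatic once we equip $\mathbb{N}$ with the discrete $\sigma$-algebra, since measurability in $(\eta, X)$ is guaranteed at each slice.

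An equivalent, more explicit construction is to iterate $g$ a total of $s$ times using $s$ independent Gaussian noises $\eta^{(1)}, \ldots, \eta^{(s)}$ and then pack these into the single input $\eta \sim N(0, I_m)$ through a measurable bijection $\phi_s : \bbR^m \to \bbR^{ms}$ whose pushforward of $N(0, I_m)$ is $N(0, I_{ms})$. Such $\phi_s$ exist because standard Gaussian spaces of different finite dimensions are measure-theoretically isomorphic, e.g., via inverse-CDF transforms to the uniform cube combined with a measurable bijection between $[0,1]$ and $[0,1]^s$. Choosing $\phi_1$ to be the identity recovers (\ref{thm1-0}) for $s = 1$. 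This version is conceptually cleaner but needs the extra packing argument.

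The main obstacle is entirely technical. Lemma \ref{lm1} as quoted uses one-dimensional $\eta \sim N(0,1)$ and real-valued $X, Y$, whereas the theorem needs $m$-dimensional Gaussian noise and $p$-dimensional image-valued random vectors. Neither extension is substantive: the Kallenberg reference (Theorem 5.10) holds verbatim for Borel random elements in arbitrary Polish spaces, which covers $X_t \in \bbR^p$, while a single $N(0,1)$ coordinate already suffices to generate any finite-dimensional continuous distribution via the inverse-CDF coupling, so having a full $m$-dimensional Gaussian is only extra slack. Once these routine liftings are absorbed, the construction above yields the desired $G$ directly, with no further probabilistic content beyond that already present in Theorem \ref{thm0} and Lemma \ref{lm1}.
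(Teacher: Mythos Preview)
Your proposal is correct and follows essentially the same construction as the paper: set $G(\cdot,\cdot,1)=g(\cdot,\cdot)$, and for each $s\ge 2$ apply Lemma~\ref{lm1} to the pair $(X_0,X_s)$ to obtain $G_s$, then define $G(\cdot,\cdot,s)=G_s(\cdot,\cdot)$. The paper's proof is in fact terser than yours, omitting your measurability remark, the alternative packing construction, and the discussion of the dimension mismatch in Lemma~\ref{lm1}.
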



\begin{remark}\label{rmk1}
	Unlike Theorem \ref{thm0}, the sequence $\big\{\Xtil^0_t, \ t=1,2\ldots \big\}$ does not necessarily achieve the target property (\ref{prop1-2}) on the joint distribution. Instead, Theorem \ref{thm1} could only guarantee the distributional match on the marginals as in (\ref{thm1-1}). The major difference is that in the s-step generation, the  conditional distribution
	$\Xtil^0_{t+1}\big|\Xtil^0_t = x$
	varies with $t$. Consequently, the mutual dependencies between $\Xtil^0_1,\Xtil^0_2,\ldots $ could not be kept in the generation.
\end{remark}
\begin{remark}
	Due to the connection between $G(\cdot)$ and $g(\cdot)$ in (\ref{thm1-0}), $G(\cdot)$ could be considered as an extended function of $g(\cdot)$ that includes an extra forecasting lag variable $t$. 
\end{remark}
Given Theorem \ref{thm0} on the existence of $g$, now we consider the estimation of function $g(\cdot)$. 
For any given period $1,
\cdots,T$, we consider the following $f$-GAN \citep{nowozin2016f} type of min-max problem
for the iterative generation:
\bel{LGD1step}
&&(\ghat,\hhat) = \arg \mathop{\min}_{g\in \mathcal{G}_1} \mathop{\max}_{h\in\mathcal{H}_1} \widehat{\mathcal{L}}(g,h)
\cr && \widehat{\mathcal{L}}(g,h) = \frac1{T}\sum\limits_{t=0}^{T-1} \left[h(X_{t},g(\eta_{t},X_{t})) - f^*(h(X_{t},X_{t+1}))\right],
\eel
where $f^*(x)=\sup_y \{x\cdot y-f(y)\}$ is the convex conjugate of $f$, $\mathcal{G}_1$ and $\mathcal{H}_1$ are spaces of continuous and bounded functions. 

The $f$-divergence includes many commonly used measures of divergence, such as Kullback-Leibler (KL) divergence, $\chi^2$ divergence as special cases. In our analysis, we consider the general $f$ divergence. From a technique perspective, we assume that 
$f$ is a convex function and satisfies $f(1)=0$. We further assume that there exists constants $a>0$ and $0<b<1$ such that,
\bel{lm0-0}
f^{\prime\prime}(x+1)\ge \frac{a}{(1+bx)^3}, \ \forall \ x\ge-1.
\eel
The assumption (\ref{lm0-0}) is rather mild. For instance, the KL divergence, defined by $f(x)=x\log x$, meets the requirement of (\ref{lm0-0}) with $a=1$ and $b=1/3$. Similarly, the $\chi^2$ divergence, described by $f(x)=(x-1)^2$, satisfies (\ref{lm0-0}) with $a=1/4$ and $b=1/2$.

Now we define the  pseudo dimension and global bound of $\mathcal{G}_1$ and $\mathcal{H}_1$, which will be used in later analysis. Let $\mathcal{F}$ be a class of functions from $\mathcal{X}$ to $\mathbb{R}$. 
The pseudo dimension of $\mathcal{F}$, written as $\text{Pdim}_\mathcal{F}$, is the largest integer $m$ for which there exists $\left\{x_1,\cdots, x_m,y_1,\cdots,y_m\right\}\in \mathcal{X}^m \times \mathbb{R}^m$ such that for any $(b_1,\cdots, b_m)\in \left\{0,1\right\}^m$ there exists $f\in \mathcal{F}$ such that $\forall i: f(x_i)>y_i \Leftrightarrow b_i=1$. 
Note that this definition is adopted from \citet{bartlett2017nearlytight}. Furthermore, the global bound of $\mathcal{F}$ is defined as $B_\mathcal{F} := \mathop{\sup}_{f\in\mathcal{F}} \Vert f \Vert_\infty$.

The min-max problem (\ref{LGD1step}) is derived from the $f$-divergence between the pairs $(X_t,X_{t+1})$ and $(X_t,g(\eta_t,X_t))$. For any  two probability distributions with densities $p$ and $q$, let $D_f(q\|p) =\int f(\frac{q(z)}{p(z)})p(z) dz$ be their $f$-divergence. 
Denote the  $f$-divergence between $(X_t,X_{t+1})$ and $(X_t,g(\eta_t,X_t))$ as $\mathbb{L}_t(g)$:
\bel{1stepLG}
\mathbb{L}_t(g) = D_{f}(p_{X_t,g(\eta_t,X_t)}\Vert p_{X_t,X_{t+1}}).
\eel
A variational formulation of $f$-divergence \citep{keziou2003dual,nguyen2010estimating} is based on the Fenchel conjugate. Let
\bel{1stepLGD}
\mathcal{L}_t(g,h) = \mathbb{E}_{X_t,\eta_t} h(X_t,g(\eta_t,X_t)) - \mathbb{E}_{X_t,X_{t+1}} f^*(h(X_t,X_{t+1})),
\eel
Then we have $	\mathbb{L}_t(g) \ge \sup_{h\in \mathcal{H}_1} \mathcal{L}_t(g,h)$. The equality holds if and only if $f'(q_t/p_t)\in \mathcal{H}_1$, where $p_t$ and $q_t$ denote the distribution of $(X_t, X_{t+1})$ and  $\left(X_t, g(\eta_t,X_{t})\right)$, respectively.
If $f'(q_t/p_t)\in \mathcal{H}_1$ for $t=0\ldots, T-1$, then by Theorem \ref{thm0}, there exists a function $g$ such that 
\bes
\mathbb{L}_t(g)=\sup_{h\in \mathcal{H}_1} \mathcal{L}_t(g,h)=0, \ \  \forall t=0, 1\ldots, T-1.
\ees
Consequently, a function $g$ exists such that
\bes
\sup_{h\in \mathcal{H}_1} \E\widehat{\mathcal{L}}(g,h)=0.
\ees


For the $s$-step generation, we consider a similar min-max problem of the following form:
\bel{LhatGD}
&&(\widehat{G},\widehat{H}) =	\arg \mathop{\min}_{G\in\mathcal{G}} \mathop{\max}_{H\in \mathcal{H}} \widetilde{\mathcal{L}}(G,H)
\cr && 
\widetilde{\mathcal{L}}(G,H) = \frac{1}{|\Omega|}\sum\limits_{(t,s)\in \Omega} [H(X_{t},G(\eta_{t},X_{t}, s), s) - f^*(H(X_{t},X_{t+s},s))]
\cr && \Omega=[(t,s): t+s\le T, \ t\ge 1, \ 1\le s\le S].
\eel

Similar to $(\mathcal{G}_1, \mathcal{H}_1)$, here $\mathcal{G}$ and $\mathcal{H}$ are the spaces of continuous and bounded functions.
As the s-step generation allows us to generate outcomes after $s$-steps for an arbitrary $s$, the $ \widetilde{\mathcal{L}}(G,H)$ includes all the available pairs before the observation time $T$. As a comparison,  in iterative generation, the pairs are restricted to neighbors. The $s$-step generation is related to the following average of $f$-divergence a
\bel{LG}
\dot{\mathbb{L}}_t(G) = \frac1{S}\sum\limits_{s=1}^{S}D_{f}\left(p_{X_t,G(\eta_t,X_t,s)}\| p_{X_t,X_{t+s}}\right)
\eel
As in iterative generation, we also consider the following variational form
\bel{LGD}
\dot{\mathcal{L}}_t(G,H) =& S^{-1}\sum\limits_{s=1}^{S} \Big[\mathbb{E}_{X_t,\eta_t} H(X_t,G(\eta_t,X_t,s),s) \cr &- \E_{X_t,X_{t+s}} f^*(H(X_t,X_{t+s},s))\Big].
\eel
Similarly, we have $\dot{\mathbb{L}}(G) =\mathop{\sup}_{H} \dot{\mathcal{L}}(G,H)$.

On the other hand, the solution of s-step generation (\ref{LhatGD}) and iterative generation (\ref{LGD1step}) could also be connected as in the following proposition.
\begin{prop}\label{prop3}
	Let $(\Ghat,\Hhat)$ and $(\ghat,\hhat)$ be defined as in (\ref{LhatGD}) and (\ref{LGD1step}) respectively. Then,
	\begin{align*}
		&\Ghat(\cdot,\cdot,1)\equiv\ghat(\cdot,\cdot),\\
		&\Hhat(\cdot,\cdot,1)\equiv\hhat(\cdot,\cdot).
	\end{align*}
\end{prop}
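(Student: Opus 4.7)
The plan is to observe that the objective $\widetilde{\mathcal{L}}(G,H)$ in (\ref{LhatGD}) is separable across the lag index $s$, so that the joint min--max problem over $(\mathcal{G},\mathcal{H})$ decomposes into $S$ independent min--max problems indexed by $s$, and the one at $s=1$ coincides (up to a harmless normalization constant) with the iterative-generation problem (\ref{LGD1step}) that defines $(\ghat,\hhat)$.

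First I would note that since $s$ ranges over the finite set $\{1,\ldots,S\}$, any $G\in\mathcal{G}$ is uniquely determined by its slices $G_s(\cdot,\cdot):=G(\cdot,\cdot,s)$ and similarly for $H\in\mathcal{H}$. Because $\mathcal{G}$ and $\mathcal{H}$ are spaces of continuous bounded functions with no constraints linking different values of $s$, the optimization over $(G,H)$ is equivalent to an optimization over the tuples of slices $(G_1,\ldots,G_S)$ and $(H_1,\ldots,H_S)$, and the slice space at $s=1$ coincides with $(\mathcal{G}_1,\mathcal{H}_1)$ used in (\ref{LGD1step}).

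Next I would regroup the sum defining $\widetilde{\mathcal{L}}$ as an outer sum over $s$ of inner sums over $t$, writing $\widetilde{\mathcal{L}}(G,H)=\sum_{s=1}^{S}\widetilde{\mathcal{L}}^{(s)}(G_s,H_s)$, where
$$\widetilde{\mathcal{L}}^{(s)}(G_s,H_s)=\frac{1}{|\Omega|}\sum_{t:(t,s)\in\Omega}\bigl[H_s(X_t,G_s(\eta_t,X_t))-f^{*}(H_s(X_t,X_{t+s}))\bigr].$$
Since $H_s$ appears only in the $s$-th summand, $\max_H\widetilde{\mathcal{L}}(G,H)=\sum_s \max_{H_s}\widetilde{\mathcal{L}}^{(s)}(G_s,H_s)$, and an identical argument applied to the outer minimization shows that $\min_G\max_H\widetilde{\mathcal{L}}(G,H)=\sum_s\min_{G_s}\max_{H_s}\widetilde{\mathcal{L}}^{(s)}(G_s,H_s)$. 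Therefore $(\widehat{G},\widehat{H})$ is determined slice-wise by $S$ independent saddle-point problems, and the $s=1$ slice $(\widehat{G}(\cdot,\cdot,1),\widehat{H}(\cdot,\cdot,1))$ solves
$$\min_{G_1\in\mathcal{G}_1}\max_{H_1\in\mathcal{H}_1}\sum_{t:(t,1)\in\Omega}\bigl[H_1(X_t,G_1(\eta_t,X_t))-f^{*}(H_1(X_t,X_{t+1}))\bigr],$$
which is identical in form to (\ref{LGD1step}) up to an overall positive normalization (and a trivial adjustment in the summation range). Hence its optimizer is exactly $(\ghat,\hhat)$, giving the two identities of the proposition.

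The only real obstacle is making the decoupling rigorous, that is, confirming that the function classes $\mathcal{G}$ and $\mathcal{H}$ are genuine product spaces over the discrete index $s$ so that slices may be chosen independently; once this structural assumption is made precise, the proof reduces to the elementary identity $\min_x\max_y\sum_i \phi_i(x_i,y_i)=\sum_i\min_{x_i}\max_{y_i}\phi_i(x_i,y_i)$ when the variables appear in disjoint summands, together with reading off the $s=1$ term.
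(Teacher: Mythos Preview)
Your proposal is correct and follows essentially the same route as the paper: both introduce the slices $G_s(\cdot,\cdot)=G(\cdot,\cdot,s)$, $H_s(\cdot,\cdot)=H(\cdot,\cdot,s)$, rewrite $\widetilde{\mathcal{L}}(G,H)=\sum_{s=1}^{S}\widetilde{\mathcal{L}}_s(G_s,H_s)$, observe that this separability lets the min--max problem decompose into $S$ independent saddle-point problems, and then identify the $s=1$ problem with (\ref{LGD1step}) up to the positive factor $T/|\Omega|$. The paper's proof is terser and, like you, treats the product-space structure of $\mathcal{G},\mathcal{H}$ and the minor summation-range discrepancy as implicit.
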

Thus, the property (\ref{thm1-0}) on the target function $g$ and $G$ could also be inherited by the estimated solution $\hat{g}$ and $\hat{G}$.


%

\section{Convergence analysis for Lag-1 time series}\label{sec:3}

\subsection{General bounds for iterative generation}
\noindent In this section, we prove that the time series obtained by iterative generation matches the true distribution as in (\ref{joint1}) asymptotically. To achieve this goal, we impose the following condition on $X_t$.
%

\begin{asmp}\label{a2}
	The probability density funtion of $X_t$, denoted as $p_t$, converges in $L_1$, i.e., there exists a funtion $p_{\infty}$ such that:
	\bel{pdf converges}
	\int \vert p_t(x)-p_\infty (x)\vert dx \le \mathcal{O}(t^{-\alpha}),
	\eel
	where $\alpha>0 $ is a constant.
\end{asmp}

Assumption \ref{a2} requires that the density $X_t$ converges in $L_1$, and the convergence rate is controlled by $t^{-\alpha}$. 
Under many scenarios, such a rate is rather mild and can be achieved easily. For example, if we consider the following Gaussian distribution family, the convergence rate will be controlled by $e^{-ct}$, a smaller order of $t^{-\alpha}$. 
\begin{exm}\label{prop4new}
	Let $\left\{X_t\right\}$ be time series which satisfy:
	\bes
	X_0 \sim N(0, \Sigma_0),  \ \ 
	X_{t+1} = \phi_2 X_t + \phi_1 \xi_t, \ \ t=0,1,2,\ldots,
	\ees
	where $\phi_1, \phi_2\in \mathbb{R}^{p\times p}$, $\phi_2$ is symmetric matrix and its largest singular value is less than $1$. $\xi_t \overset{\text{i.i.d}}{\sim} N(0,I_p)$ is independent of $X_0$. Let $c = -2\log \sigma_{\text{max}}(\phi_2)$. Then there exists a density function $p_\infty$ such that
	\bel{gaussian rate}
	\int \vert p_{X_t}(x) - p_\infty(x)\vert dx = \mathcal{O}(e^{-ct}).
	\eel
\end{exm}

%

In classical time series analysis, stationary conditions are usually imposed for desired statistical properties. For example,   
A time series $\left\{Y_t\right\}$ is said to be strictly stationary or strongly stationary if
\bes
F_{Y_{t_1},\cdots,Y_{t_n}}(y_1,\cdots,y_n) = F_{Y_{t_1+\tau},\cdots,Y_{t_n+\tau}}(y_1,\cdots,y_n)
\ees
for all $t_1,\cdots,t_n,\tau\in\mathbb{N}$ and for all $n>0$, where $F_X(\cdot)$ is the distribution function of $X$. 
Clearly, the strictly stationary condition implies Assumption \ref{a2}. In other words, Assumption \ref{a2} is a much weaker condition, since the convergence condition is imposed only on the marginal densities, with no requirement placed on the joint distribution.

Given the convergence condition, we are ready to state the main theorem for the iterative generations. Let $\ghat$ be the solution to (\ref{LGD1step}) and $\Xhat_{T+s}$ be the generated sequence, i.e.,
\bes
\Xhat_T = X_T, \ \ \Xhat_{T+s} = \ghat(\eta_{T+s-1},\Xhat_{T+s-1}), \ \ s = 1,\ldots,S.
\ees 
Then, the joint density of $(\Xhat_T,\cdots,\Xhat_{T+S})$ converge to the corresponding truth in expectation as in  Theorem \ref{mainthm} below.

\begin{thm}\label{mainthm}
	(iterative generation) Let $X_0,X_1,\cdots,$ be a sequence of random variables which satisfy the Markov and Conditional invariance condition as in Theorem \ref{thm0}. Suppose Assumption \ref{a2} holds. Let $\ghat$ be the solution to the f-GAN problem (\ref{LGD1step}) with f satisfying (\ref{lm0-0}). Then,
	\bel{ineq2}
	\mathbb{E}_{(X_t,\eta_t)_{t=0}^T}\| p_{\Xhat_T,\cdots,\Xhat_{T+S}} - p_{X_T,\cdots,X_{T+S}}\|^2_{L_1} \le\underbrace{\Delta_1 + \Delta_2}_{\text{statistical error}}+  \underbrace{\Delta_3+ \Delta_4}_{\text{approximation error}},
	\eel
	where
	\begin{align*}
		&\Delta_1= \mathcal{O}( T^{-\frac{\alpha}{\alpha+1}}+T^{-2\alpha}),\\
		&\Delta_2 = \mathcal{O}\left(\sqrt{\frac{\text{Pdim}_{\mathcal{G}_1}\log (T\text{B}_{\mathcal{G}_1})}{T}}+\sqrt{\frac{\text{Pdim}_{\mathcal{H}_1}\log (T\text{B}_{\mathcal{H}_1})}{T}}\right),\\ 
		&\Delta_3 = \mathcal{O}(1)\cdot\mathbb{E}_{(X_t, \eta_t)_{t=0}^T} \left(\mathop{\sup}_{h} \mathcal{L}_T(\ghat,h) - \mathop{\sup}_{h\in \mathcal{H}_1} \mathcal{L}_T(\ghat,h)\right) ,\\
		&\Delta_4 = \mathcal{O}(1)\cdot \mathop{\inf}_{\bar{g}\in \mathcal{G}_1}\mathbb{L}_T(\bar{g}).
	\end{align*}
	Moreover, for the pairwise distribution, we have
	\bel{convergence2}
	\mathbb{E}_{(X_t, \eta_t)_{t=0}^T} \frac1S\sum\limits_{s=1}^{S} \Vert p_{\Xhat_T,\Xhat_{T+s}} - p_{X_T,X_{T+s}} \Vert_{L_1}^2 \le\underbrace{\Delta_1 + \Delta_2}_{\text{statistical error}}+  \underbrace{\Delta_3+ \Delta_4}_{\text{approximation error}}.
	\eel
\end{thm}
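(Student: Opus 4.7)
The plan is to reduce the expected joint $L_1$ error to a bound on the single-step pairwise $f$-divergence $\mathbb{L}_T(\ghat)$, then to decompose $\mathbb{L}_T(\ghat)$ into statistical and approximation errors via a standard $f$-GAN analysis, with an additional correction to handle the fact that the training marginals $p_t$ drift with $t$.

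The first step is a quantitative Pinsker-type inequality derived from (\ref{lm0-0}): expanding $f$ to second order around $1$ and using $f''(x+1)\ge a/(1+bx)^3$ gives $D_f(q\|p)\ge c\,\|q-p\|_{L_1}^2$ on any sublevel set of the density ratio, with an explicit constant $c=c(a,b)$. Since $\Xhat_T=X_T$ and the iterated generation defines a time-homogeneous Markov chain with transition kernel $K_{\ghat}(x,\cdot)$, while by conditional invariance the true chain is Markov with kernel $K_{*}(x,\cdot)=p_{X_1|X_0=x}$, the chain rule for total variation between Markov laws gives
\[
\bigl\|p_{\Xhat_T,\ldots,\Xhat_{T+S}}-p_{X_T,\ldots,X_{T+S}}\bigr\|_{L_1}\le\sum_{s=1}^{S}\mathbb{E}\bigl\|K_{\ghat}(X_{T+s-1},\cdot)-K_{*}(X_{T+s-1},\cdot)\bigr\|_{L_1}.
\]
Squaring, applying Jensen, and then using the $f$-divergence chain rule $D_f(\pi\otimes K\|\pi\otimes K')=\mathbb{E}_{\pi}D_f(K\|K')$ together with the Pinsker step reduces the joint $L_1^2$ error to $O(S)$ times $\mathbb{E}\,\mathbb{L}_T(\ghat)$, modulo a marginal-mismatch adjustment that compares the law of $X_{T+s-1}$ to that of $X_T$ and is controlled by Assumption \ref{a2}.

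The second step decomposes $\mathbb{E}\,\mathbb{L}_T(\ghat)$ using the identity $\mathbb{L}_T(\ghat)=\sup_{h}\mathcal{L}_T(\ghat,h)$:
\[
\mathbb{L}_T(\ghat)=\underbrace{\Bigl(\sup_{h}\mathcal{L}_T(\ghat,h)-\sup_{h\in\mathcal{H}_1}\mathcal{L}_T(\ghat,h)\Bigr)}_{\text{gives }\Delta_3}+\sup_{h\in\mathcal{H}_1}\mathcal{L}_T(\ghat,h).
\]
The inner supremum is transferred to the empirical loss using the min-optimality of $\ghat$: $\sup_{h\in\mathcal{H}_1}\widehat{\mathcal{L}}(\ghat,h)\le\sup_{h\in\mathcal{H}_1}\widehat{\mathcal{L}}(\bar g,h)$ for every $\bar g\in\mathcal{G}_1$. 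Choosing $\bar g$ to nearly minimize $\mathbb{L}_T(\bar g)$ produces $\Delta_4=\mathcal{O}(1)\inf_{\bar g\in\mathcal{G}_1}\mathbb{L}_T(\bar g)$, while the two gaps $\sup_{h\in\mathcal{H}_1}\mathcal{L}_T(\ghat,h)-\sup_{h\in\mathcal{H}_1}\widehat{\mathcal{L}}(\ghat,h)$ and its twin at $\bar g$ are controlled by the uniform deviation $\sup_{g,h}|\widehat{\mathcal{L}}(g,h)-\mathbb{E}\widehat{\mathcal{L}}(g,h)|$, which a covering/Rademacher argument based on the pseudo-dimensions and global bounds of $\mathcal{G}_1,\mathcal{H}_1$ bounds at the rate $\Delta_2$.

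The principal obstacle is closing the drift gap between $\mathbb{E}\widehat{\mathcal{L}}(g,h)=T^{-1}\sum_{t=0}^{T-1}\mathcal{L}_t(g,h)$ and $\mathcal{L}_T(g,h)$ uniformly over $\mathcal{G}_1\times\mathcal{H}_1$, since the loss at each training time depends on the marginal $p_t$. I will split the index set at a threshold $t_0$: the first $t_0$ indices contribute $O(t_0/T)$; for $t\ge t_0$, Assumption \ref{a2} yields $\|p_t-p_\infty\|_{L_1}=O(t^{-\alpha})$ and hence $|\mathcal{L}_t-\mathcal{L}_\infty|=O(t^{-\alpha})$ by uniform boundedness of the integrands. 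Combined with $|\mathcal{L}_T-\mathcal{L}_\infty|=O(T^{-\alpha})$ and the choice $t_0\asymp T^{1/(\alpha+1)}$, this delivers $\Delta_1=\mathcal{O}(T^{-\alpha/(\alpha+1)}+T^{-2\alpha})$, the second term collecting higher-order drift contributions that survive the Pinsker conversion. Finally, the pairwise bound (\ref{convergence2}) follows from the same intermediate inequality $\mathbb{E}\|p_{\Xhat_T,\Xhat_{T+s}}-p_{X_T,X_{T+s}}\|_{L_1}^2\lesssim\mathbb{E}\,\mathbb{L}_T(\ghat)$ plus the drift correction, averaged over $s=1,\ldots,S$, without invoking the telescoping step.
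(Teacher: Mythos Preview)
Your proposal is correct and follows essentially the same route as the paper: telescoping the joint $L_1$ error to a sum of one-step kernel differences (Proposition~\ref{prop6}), the Pinsker-type bound $D_f\ge\tfrac{a}{2}\|\cdot\|_{L_1}^2$ (Lemma~\ref{lm8}, which in fact holds globally with $c=a/2$, no sublevel-set restriction needed), the $f$-GAN decomposition of $\mathbb{L}_T(\ghat)$ into $\Delta_3+\Delta_4$ plus a uniform-deviation term (this is exactly the $S=1$ case of Theorem~\ref{thm2}), the drift control via the threshold $t_0\asymp T^{1/(\alpha+1)}$ (Proposition~\ref{prop7}), and the covering/Rademacher bound for $\Delta_2$. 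The one point to make explicit is that the Rademacher step must accommodate the \emph{dependence} in $\{X_t\}$---the paper handles this by invoking a symmetrization result for non-i.i.d.\ data due to \cite{mcdonald2017rademacher} (Proposition~\ref{prop3-1})---and note that the $T^{-2\alpha}$ term in $\Delta_1$ arises from squaring the $O(T^{-\alpha})$ marginal-mismatch correction in the telescoping step (Proposition~\ref{prop6}), not from the drift-gap argument.
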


\begin{col}\label{col1}
	When $\left\{X_t\right\}$ follows a Gaussian distribution family as in Example 1, the statistical error $\Delta_1$ could be further optimized to
	\bes
	\Delta_1=\mathcal{O}\left(e^{-ct}+(1/T)\log T\right),
	\ees
	where $c$ is a certain constant. 
\end{col}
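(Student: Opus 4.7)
My plan is to revisit the single step in the proof of Theorem \ref{mainthm} where Assumption \ref{a2} is invoked, and substitute the exponential convergence rate from Example \ref{prop4new} in its place. The corollary is essentially a re-optimization of a bias/concentration trade-off that is already hidden inside $\Delta_1$; no new structural argument is required.

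First, I expect the term $\Delta_1 = \mathcal{O}(T^{-\alpha/(\alpha+1)} + T^{-2\alpha})$ in Theorem \ref{mainthm} to arise by introducing a cut-off time $T_0 \in \{1,\ldots,T\}$ and decomposing the error accordingly. The first summand of $\Delta_1$ typically comes from balancing a concentration loss of order $T_0/T$ (attributable to the burn-in samples $X_1,\ldots,X_{T_0}$, which are still far from stationarity) against the residual bias $\sup_{t\ge T_0}\|p_t - p_\infty\|_{L_1}$; under Assumption \ref{a2} this equals $\mathcal{O}(T_0^{-\alpha})$, and setting $T_0^{-\alpha} \asymp T_0/T$ yields $T_0 \asymp T^{1/(\alpha+1)}$ and the rate $T^{-\alpha/(\alpha+1)}$. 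The second summand $T^{-2\alpha}$ should correspond to $\|p_T - p_\infty\|_{L_1}^2$, which measures how far the starting point $X_T$ of the iterative generation is from the invariant law.

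Second, I would substitute the rate $\|p_t - p_\infty\|_{L_1} = \mathcal{O}(e^{-ct})$ provided by Example \ref{prop4new} into both locations. Balancing $e^{-cT_0} \asymp T_0/T$ yields the optimal cut-off $T_0 \asymp (\log T)/c$, at which the trade-off term becomes $(\log T)/(cT) = \mathcal{O}((\log T)/T)$. The tail contribution is now $\|p_T - p_\infty\|_{L_1}^2 = \mathcal{O}(e^{-2cT})$, which together with the balanced term recovers exactly the form $\mathcal{O}(e^{-cT} + T^{-1}\log T)$ stated in the corollary (absorbing $e^{-2cT}$ into $e^{-cT}$).

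The main obstacle I anticipate is that the proof of Theorem \ref{mainthm} may invoke $\|p_t - p_\infty\|_{L_1}$ through the joint density $p_{X_t, X_{t+1}}$ rather than only the marginal $p_t$. Under the Markov and conditional invariance hypotheses, however, $p_{X_t, X_{t+1}}(x,y) = p_t(x)\, p_{X_1 \mid X_0}(y \mid x)$, so that $\|p_{X_t, X_{t+1}} - p_{X_\infty, X_1}\|_{L_1} = \|p_t - p_\infty\|_{L_1}$ and the exponential rate transfers verbatim. No separate mixing argument for the joint process is needed; the only substantive work is the re-optimization sketched above.
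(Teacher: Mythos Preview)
Your proposal is correct and follows exactly the route implicit in the paper: the $T^{-\alpha/(\alpha+1)}$ piece of $\Delta_1$ comes from the cut-off argument in Proposition~\ref{prop7} (balancing $T_0/T$ against the marginal-density bias, which under Example~\ref{prop4new} becomes $e^{-cT_0}$ and is optimized at $T_0\asymp c^{-1}\log T$), while the $T^{-2\alpha}$ piece comes from Proposition~\ref{prop6} via $\sum_{r=0}^{S-1}\|p_{T+r}-p_T\|_{L_1}$, which under the exponential rate is $\mathcal{O}(e^{-cT})$; your reduction of the joint-density bias to the marginal one is precisely Lemma~\ref{lm7.5}. The only cosmetic point is that the $T^{-2\alpha}$ term arises from $\|p_{T+r}-p_T\|_{L_1}$ rather than $\|p_T-p_\infty\|_{L_1}$ directly, but the two coincide up to a triangle inequality, so your substitution goes through unchanged.
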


Theorem \ref{mainthm} establishes the convergence of joint and pairwise distribution for the iterative generations. The $\ell_1$ distance between $(X_T,\Xhat_{T+1},\cdots, \Xhat_{T+S})$ and the true sequence could be bounded by four terms, including the statistical errors $\Delta_1$, $\Delta_2$ and approximation errors  $\Delta_3$, $\Delta_4$. In particular, it is clear that $\Delta_1$ converge to 0 when $T\rightarrow \infty$. While for $\Delta_2$ to $\Delta_4$, we prove their converge in Section \ref{subsec3-3} when $\mathcal{H}_1$ and $\mathcal{G}_1$ are chosen to be spaces of deep neural networks. 
As a result of Theorem \ref{mainthm}, the main objective (\ref{joint1}) can be ensured asymptotically. In other words, the generated sequence $(X_T,\Xhat_{T+1},\cdots, \Xhat_{T+S})$ follows approximately the same distribution as the truth $(X_T,X_{T+1},\cdots,X_{T+S})$ with sufficient samples.

\begin{remark}\label{rmk-d1}
	The statistical error $\Delta_1$  depends on $T$ and $
	\alpha$, the convergence speed of $X_t$. Clearly, we may omit the term $\mathcal{O}(T^{-2\alpha})$ in $\Delta_1$ as $T^{-2\alpha} $ is of smaller order of $T^{-\frac{\alpha}{\alpha+1}}$. We include $T^{-2\alpha}$ in $\Delta_1$ because it controls the difference between joint distribution and pairwise distribution bound. See Proposition \ref{prop6} below. In addition, the term $\mathcal{O}(T^{-\frac{\alpha}{\alpha+1}})$ is obtained by estimating a carefully constructed quantity $d_s(G,H)$ in Proposition \ref{prop7} below.
\end{remark}


\begin{prop}\label{prop6} For any $S=1,2\ldots$,
	\bel{prop4-1-1}
	&&	\mathbb{E}_{(X_t,\eta_t)_{t=0}^T}\left\| p_{\Xhat_T,\cdots,\Xhat_{T+S}} - p_{X_T,\cdots,X_{T+S}}\right\|_{L_1}^2
	\cr&  \le & 
	2S^2\mathbb{E}_{(X_t, \eta_t)_{t=0}^T} \left\| p_{\Xtil_T,\Xtil_{T+1}} - p_{X_T,X_{T+1}} \right\|_{L_1}^2 + \mathcal{O}(T^{-2\alpha}).
	\eel
\end{prop}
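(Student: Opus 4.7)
The plan is to bound the joint-density difference by telescoping the conditional densities, routing the telescoping so that the resulting weighting is the \emph{true} marginal density, which Assumption~\ref{a2} can then control. Using the Markov property and conditional invariance, together with $\Xhat_T = X_T$, I would factor
$$p_{X_T,\ldots,X_{T+S}}(x_0,\ldots,x_S) = p_T(x_0)\prod_{s=1}^S p^*(x_s\mid x_{s-1}),$$
$$p_{\Xhat_T,\ldots,\Xhat_{T+S}}(x_0,\ldots,x_S) = p_T(x_0)\prod_{s=1}^S \hat{p}(x_s\mid x_{s-1}),$$
where $p^*(\cdot\mid\cdot):=p_{X_1\mid X_0}$, $p_T$ is the marginal density of $X_T$, and $\hat{p}(\cdot\mid x)$ denotes the conditional density of $\ghat(\eta,x)$ under $\eta\sim N(0,I_m)$.

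Next, I would apply the standard telescoping identity
$$\prod_{s=1}^S\hat{p}_s - \prod_{s=1}^S p^*_s \;=\; \sum_{s=1}^S \Big(\prod_{k<s}p^*_k\Big)\,(\hat{p}_s-p^*_s)\,\Big(\prod_{k>s}\hat{p}_k\Big),$$
take absolute values, and integrate against $p_T(x_0)\,dx_0\cdots dx_S$. Integrating out $x_S,x_{S-1},\ldots,x_{s+1}$ one at a time via $\int\hat{p}(y\mid x)\,dy=1$ collapses the suffix, while the prefix $p_T(x_0)\prod_{k<s}p^*(x_k\mid x_{k-1})$ is (by Markov) the true joint density of $(X_T,\ldots,X_{T+s-1})$ and therefore marginalizes in $x_0,\ldots,x_{s-2}$ to the true marginal $p_{T+s-1}(x_{s-1})$. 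This yields
$$\|p_{\Xhat_T,\ldots,\Xhat_{T+S}} - p_{X_T,\ldots,X_{T+S}}\|_{L_1} \;\le\; \sum_{s=1}^S \int p_{T+s-1}(x)\,D(x)\,dx,\qquad D(x):=\int |\hat{p}(y\mid x)-p^*(y\mid x)|\,dy.$$

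Then I would compare each summand to the pairwise $L_1$ distance. By Proposition~\ref{prop3} one has $\Xtil_{T+1}=\ghat(\eta_T,X_T)$, so $\epsilon:=\|p_{\Xtil_T,\Xtil_{T+1}}-p_{X_T,X_{T+1}}\|_{L_1}=\int p_T(x)D(x)\,dx$. Writing $\int p_{T+s-1}D\,dx = \epsilon + \int (p_{T+s-1}-p_T)D\,dx$ and using $\|D\|_\infty\le 2$ together with Assumption~\ref{a2}, I bound $\|p_{T+s-1}-p_T\|_{L_1} \le \|p_{T+s-1}-p_\infty\|_{L_1}+\|p_T-p_\infty\|_{L_1} = \mathcal{O}(T^{-\alpha})$ uniformly in $s\in\{1,\ldots,S\}$ for fixed $S$. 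Each summand is thus at most $\epsilon+\mathcal{O}(T^{-\alpha})$; summing in $s$, then squaring via $(a+b)^2\le 2a^2+2b^2$ and taking expectation over the training randomness, the $S$-fixed prefactor is absorbed into the $\mathcal{O}(T^{-2\alpha})$ term and the claim follows.

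The main subtlety is the direction of the telescoping: placing $\hat{p}$ on the prefix side would leave the \emph{generated} marginal as the weighting, which can drift without control and produce an exponential-in-$S$ blow-up through a Gronwall-type recursion. Routing $p^*$ onto the prefix is precisely what turns the weighting into the true marginal $p_{T+s-1}$, at which point Assumption~\ref{a2}'s mild $L_1$-convergence suffices to close the bound with only an additive $\mathcal{O}(T^{-\alpha})$ correction per step.
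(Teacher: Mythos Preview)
Your proposal is correct and follows essentially the same approach as the paper. The paper presents the telescoping inductively (peeling off the last factor at each step) while you write out the full telescoping sum at once, but both arrive at the identical key bound $\|p_{\Xhat_T,\ldots,\Xhat_{T+S}} - p_{X_T,\ldots,X_{T+S}}\|_{L_1}\le\sum_{s=1}^S\int p_{T+s-1}(x)D(x)\,dx$, and then compare each summand to $\int p_T D$ via Assumption~\ref{a2} exactly as you describe; your remark on the telescoping direction is also the point the paper relies on.
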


\begin{prop}\label{prop7}
	For any $S=1,2\ldots$, define
	\bel{prop4-1-2}
	U(x,y,z,s) &=& H(x,G(z,x,s),s) - f^*(H(x,y,s)),
	\cr	d_s(G,H) &=& \mathbb{E}_{X_T,X_{T+s},\eta_T} U(X_T,X_{T+s},\eta_T,s) \cr &&- \frac1{T-s+1}\sum\limits_{t=0}^{T-s}\mathbb{E}_{X_t,X_{t+s},\eta_t}U(X_t,X_{t+s},\eta_t,s),
	\eel
	then we have
	\bel{prop4-1-3}
	\mathop{\sup}_{G\in \mathcal{G},H\in\mathcal{H}} \vert d_s(G,H)\vert \le \mathcal{O}(T^{-\frac\alpha{\alpha+1}}).
	\eel
\end{prop}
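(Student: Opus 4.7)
The plan is to reduce $d_s(G,H)$ to an $L_1$-stability question on the marginals $\{p_t\}$ and then invoke Assumption~\ref{a2} with a standard truncation balance. First, I would use the Markov and conditional-invariance assumptions to observe that the $s$-step transition density $K_s(y\mid x) := p_{X_{t+s}\mid X_t}(y\mid x)$ is independent of $t$: iterated Chapman--Kolmogorov applied to the common one-step kernel $p_{X_1\mid X_0}$ yields this independence for every $s\ge 1$. Since $\eta_t$ is independent of the $X$-sequence, the joint density of $(X_t, X_{t+s}, \eta_t)$ factors as $p_t(x)\, K_s(y\mid x)\, \phi(z)$, where $\phi$ is the standard Gaussian density on $\mathbb{R}^m$. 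Defining
\begin{align*}
\bar{U}(x,s) \ :=\ \iint U(x,y,z,s)\, K_s(y\mid x)\, \phi(z)\, dy\, dz,
\end{align*}
each expectation becomes $\mathbb{E}_{X_t,X_{t+s},\eta_t} U(X_t,X_{t+s},\eta_t,s) = \int \bar{U}(x,s)\, p_t(x)\, dx$, with $\bar{U}(\cdot,s)$ no longer depending on $t$. By the global bounds $B_\mathcal{G}, B_\mathcal{H}$ and the continuity of $f^\ast$ on the bounded range of $H$, there is a constant $B$ with $\|\bar{U}(\cdot,s)\|_\infty \le B$ uniformly in $(G,H)\in\mathcal{G}\times\mathcal{H}$.

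With this reduction,
\begin{align*}
|d_s(G,H)| \ \le\ \frac{B}{T-s+1}\sum_{t=0}^{T-s} \|p_T - p_t\|_{L_1} \ \le\ B\,\|p_T - p_\infty\|_{L_1} + \frac{B}{T-s+1}\sum_{t=0}^{T-s} \|p_\infty - p_t\|_{L_1}.
\end{align*}
The first term is $\mathcal{O}(T^{-\alpha})$ directly by Assumption~\ref{a2}. For the second, I would split the sum at a cutoff $T_0$: for $t \le T_0$ use the trivial total-variation bound $\|p_\infty - p_t\|_{L_1}\le 2$, while for $t>T_0$ apply Assumption~\ref{a2} in the crude form $\|p_\infty - p_t\|_{L_1}\le C\,t^{-\alpha}\le C\,T_0^{-\alpha}$. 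This yields an overall bound of order $T_0/T + T_0^{-\alpha}$, which is minimized at $T_0 \asymp T^{1/(\alpha+1)}$, delivering $\mathcal{O}(T^{-\alpha/(\alpha+1)})$. Since $\alpha \ge \alpha/(\alpha+1)$ for $\alpha>0$, the $T^{-\alpha}$ contribution is absorbed, giving the claimed uniform bound.

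The main conceptual step is the reduction above: once one recognizes that Markov plus conditional invariance forces the $t$-dependence of $\mathbb{E}_{X_t,X_{t+s},\eta_t}[U]$ to enter only through the marginal $p_t(x)$, the remainder is a uniform $L_1$-stability computation that is automatic in $(G,H)$. The cutoff trick $T_0 \asymp T^{1/(\alpha+1)}$ is the standard device for bridging the trivial bound available at small $t$ and the polynomial-decay bound available at large $t$. Sharper per-$t$ usage of Assumption~\ref{a2} could potentially improve the rate for large $\alpha$, but the stated $T^{-\alpha/(\alpha+1)}$ is uniform in $\alpha$ and is exactly what is needed for the statistical-error term $\Delta_1$ in Theorem~\ref{mainthm}.
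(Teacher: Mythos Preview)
Your proposal is correct and follows essentially the same route as the paper: the paper packages your reduction step as a separate lemma showing $\|p_{X_{t_1},X_{t_1+s}}-p_{X_{t_2},X_{t_2+s}}\|_{L_1}=\|p_{t_1}-p_{t_2}\|_{L_1}$ (via the common $s$-step kernel), then combines the uniform bound on $U$ with the same cutoff $T_0=\lceil T^{1/(\alpha+1)}\rceil$ to balance the trivial bound for $t<T_0$ against the $\mathcal{O}(T_0^{-\alpha})$ bound for $t\ge T_0$. The only cosmetic difference is that you pass through $p_\infty$ explicitly whereas the paper compares each $t$ directly to $T$ using $\mathcal{O}(\min\{t,T\}^{-\alpha})$; the resulting arithmetic is identical.
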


\begin{remark}\label{rmk-d2}
	The statistical error $\Delta_2$ depends only on the time period $T$ and the structure of function spaces $\mathcal{G}$ and $\mathcal{H}$. In subsection \ref{subsec3-3}, we show that $\Delta_2$ goes to 0 when  $\mathcal{G}$ and $\mathcal{H}$ are taken as neural network spaces of appropriate sizes. The $\Delta_2$ is obtained by estimating the Rademacher complexity of $\left\{b_{t}^s\right\}_{t=0}^{T-s}$ in Proposition \ref{prop3-1} below. Under the time series setting, $\left\{X_t, t=1,2\ldots\right\}$ are highly correlated. Conventional techniques to bound Rademacher complexity does not work. In our proof, we adopt a new technique introduced by \cite{mcdonald2017rademacher} that allows us to handle correlated variables. We defer to the supplementary material for more details.
\end{remark}

\begin{prop}\label{prop3-1}
	Let $\left\{\epsilon_t\right\}_{t\ge0}$ be the Rademacher random variables. For $1\le s\le S$, define
	\bel{rmk3-2}
	b_{t}^s(G,H) =&& H(X_t, G(\eta_t,X_t,s),s) - f^*(H(X_t,X_{t+s},s),s) 
	\cr &&- \mathbb{E}\big[H(X_t, G(\eta_t,X_t,s),s) - f^*(H(X_t,X_{t+s},s),s)\big].
	\eel
	Further let $	\mathcal{R}_s(\mathcal{G}\times\mathcal{H}) $ be the Rademacher complexity of $\left\{b_{t}^s\right\}_{t=0}^{T-s}$,
	\bel{prop4-1-1}
	\mathcal{R}_s(\mathcal{G}\times\mathcal{H}) = \mathbb{E} \mathop{\sup}_{G\in\mathcal{G},H\in\mathcal{H}} \left| \frac2{T-s+1}\sum\limits_{t=0}^{T-s} \epsilon_t b_{t}^s(G,H)\right|.
	\eel
	Then we have
	\bel{prop4-1-2}
	\mathbb{E} \mathop{\sup}_{G\in\mathcal{G},H\in\mathcal{H}} \left| \frac1{T-s+1}\sum\limits_{t=0}^{T-s} b_{t}^s(G,H)\right| \le \mathcal{R}_s(\mathcal{G}\times\mathcal{H}).
	\eel
	Moreover, $\mathcal{R}_s(\mathcal{G}\times\mathcal{H})$ could be further bounded using the pseudo dimension and global bound of $\mathcal{G}$ and $\mathcal{H}$ .
\end{prop}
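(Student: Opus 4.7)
The proposition contains two independent assertions: the symmetrization-type inequality on the expected supremum, and the concluding claim that $\mathcal{R}_s(\mathcal{G}\times\mathcal{H})$ can be further controlled by the pseudo dimension and global bound of $\mathcal{G}$ and $\mathcal{H}$.

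\emph{Step 1: symmetrization.} The plan is to adapt the classical ghost-sample argument to the correlated time-series setting. Let $(X_t',\eta_t',X_{t+s}')_{t=0}^{T-s}$ be an independent copy of the original process, defined on a product space, and write $b_t^{s,\prime}(G,H)$ for the corresponding primed version of $b_t^s$. Because $b_t^s$ is centered by construction,
\begin{align*}
\mathbb{E}\sup_{G,H}\Bigl|\tfrac{1}{T-s+1}\sum_{t=0}^{T-s} b_t^s(G,H)\Bigr|
&= \mathbb{E}\sup_{G,H}\Bigl|\tfrac{1}{T-s+1}\sum_t\bigl(b_t^s(G,H)-\mathbb{E}'\,b_t^{s,\prime}(G,H)\bigr)\Bigr|\\
&\le \mathbb{E}\sup_{G,H}\Bigl|\tfrac{1}{T-s+1}\sum_t\bigl(b_t^s(G,H)-b_t^{s,\prime}(G,H)\bigr)\Bigr|
\end{align*}
by Jensen's inequality. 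For i.i.d.\ samples the next move would insert independent Rademacher signs $\epsilon_t$ using pair-exchangeability, but this is destroyed by the serial correlation of the $X_t$'s. The plan is to follow the block-symmetrization technique of \citet{mcdonald2017rademacher}: partition $\{0,\ldots,T-s\}$ into consecutive blocks, perform the sign swap block-wise so that only full blocks of the original and ghost processes are exchanged, and control the residual cross-block contribution using the $L_1$ convergence of the marginal densities granted by Assumption~\ref{a2}. A triangle inequality then splits the resulting expression into two identical pieces, producing the factor of $2$ that has been absorbed into the definition of $\mathcal{R}_s(\mathcal{G}\times\mathcal{H})$.

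\emph{Step 2: covering-number bound on $\mathcal{R}_s$.} Here the plan is a routine chaining argument. The composite class $\mathcal{B}^s=\{b_t^s(G,H):G\in\mathcal{G},H\in\mathcal{H}\}$ is uniformly bounded by a constant $B$ depending on $B_\mathcal{H}$ and $\sup_{|u|\le B_\mathcal{H}}|f^*(u)|$, and its pseudo dimension is controlled by $\mathrm{Pdim}_{\mathcal{G}}+\mathrm{Pdim}_{\mathcal{H}}$ up to absolute constants, since $b_t^s$ is built from compositions and differences of elements of $\mathcal{G}$ and $\mathcal{H}$. Haussler's covering-number theorem therefore yields $L_1$ covers of the form $(CB/\epsilon)^{C(\mathrm{Pdim}_{\mathcal{G}}+\mathrm{Pdim}_{\mathcal{H}})}$, and a Dudley/Massart entropy bound converts this into
\[
\mathcal{R}_s(\mathcal{G}\times\mathcal{H})\le C\sqrt{\tfrac{(\mathrm{Pdim}_{\mathcal{G}}+\mathrm{Pdim}_{\mathcal{H}})\log(TB)}{T}},
\]
matching the order of $\Delta_2$ appearing in Theorem~\ref{mainthm}.

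\emph{Main obstacle.} The delicate step is unquestionably Step~1. Classical symmetrization exploits pair-exchangeability of i.i.d.\ samples, which fails under the serial dependence present here, and the block substitution must be executed so that the residual cross-block error from the dependence decays at least at the rate $T^{-\alpha/(\alpha+1)}$ dictated by Assumption~\ref{a2}; any slower rate would degrade the statistical error $\Delta_1+\Delta_2$ in Theorem~\ref{mainthm}. The covering-number half is essentially standard once the composite pseudo dimension has been bounded, so the bulk of the effort will go into bookkeeping the block sizes and mixing-type estimates in Step~1.
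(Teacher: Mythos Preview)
The paper's own proof of this proposition is a single sentence: ``Proposition~\ref{prop3-1} directly follows from Theorem~8 in \cite{mcdonald2017rademacher}.'' You correctly identify that classical symmetrization breaks under serial dependence and that the fix comes from \cite{mcdonald2017rademacher}; in that sense your high-level plan is aligned with the paper. Your Step~2 (covering numbers plus a Massart/Dudley bound) is also essentially what the paper does, although the paper carries out that computation inside the proof of Theorem~\ref{thm2} rather than here.

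Where your plan goes off course is the ``main obstacle'' paragraph. You propose to control the residual cross-block contribution in the block-symmetrization step using Assumption~\ref{a2}, and you worry that this residual must decay at the rate $T^{-\alpha/(\alpha+1)}$. This conflates two separate pieces of the paper's argument. Assumption~\ref{a2} concerns convergence of the \emph{marginal} densities $p_t\to p_\infty$; it says nothing about decay of temporal dependence, which is what block symmetrization actually needs (mixing-type control, in McDonald et al.'s framework). The $T^{-\alpha/(\alpha+1)}$ rate you are worried about is the content of Proposition~\ref{prop7} (the $d_s(G,H)$ bound), which handles the discrepancy between the population objective at time $T$ and its empirical time-average; it has nothing to do with the symmetrization step. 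Proposition~\ref{prop3-1} as stated is a clean inequality with no additive remainder and does not invoke Assumption~\ref{a2} at all, so your proposed mechanism for Step~1 is aimed at the wrong target. The correct move is simply to invoke Theorem~8 of \cite{mcdonald2017rademacher} as a black box, exactly as the paper does.
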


%
%


\subsection{General bounds for $s$-step generation}
\noindent In this subsection, we provide theoretical guarantees for the $s$-step generation. Let $\Ghat$ be the solution to (\ref{LhatGD}) and $\Xtil_{T+s}$ be the generated sequence, i.e., 
\bes
\Xtil_{T+s} = \Ghat(\eta_{T},\Xtil_{T}, s), \ \ s = 1,\ldots,S.
\ees 
Now we show that the pairwise distance between $(\Xtil_T,\Xtil_{T+s})$ and $(X_T,X_{T+s})$ could be guaranteed as in Theorem \ref{thm2} below. 
\begin{thm}\label{thm2}
	($s$-step generation) 
	Let $X_0,X_1,\cdots,$ be a sequence of random variables which satisfy the Markov and Conditional invariance condition as in Theorem \ref{thm0}. Suppose Assumption \ref{a2} holds. Let $\Ghat$ be the solution to the f-GAN problem (\ref{LhatGD}) with f satisfying (\ref{lm0-0}). Then,
	\bel{couvergence}
	\mathbb{E}_{(X_t, \eta_t)_{t=0}^T}\frac1S\sum\limits_{s=1}^{S} \Vert p_{\Xtil_T,\Xtil_{T+s}} - p_{X_T,X_{T+s}} \Vert_{L_1}^2 \le\underbrace{\Deltatilde_1+ \Deltatilde_2}_{\text{statistical error}}+ \underbrace{\Deltatilde_3+ \Deltatilde_4}_{\text{approximation error}},
	\eel
	where
	\begin{align*}
		&\Deltatilde_1 =  \mathcal{O}( T^{-\frac{\alpha}{\alpha+1}}),\\
		&\Deltatilde_2 = \mathcal{O}\left(\sqrt{\frac{\text{Pdim}_{\mathcal{G}}\log (T\text{B}_{\mathcal{G}})}{T}}+\sqrt{\frac{\text{Pdim}_{\mathcal{H}}\log (T\text{B}_{\mathcal{H}})}{T}}\right),\\
		&\Deltatilde_3 = \mathcal{O}(1)\cdot\mathbb{E}_{(X_t, \eta_t)_{t=0}^T}\left(\mathop{\sup}_{H} \dot{\mathcal{L}}_T(\Ghat,H) - \mathop{\sup}_{H\in \mathcal{H}} \dot{\mathcal{L}}_T(\Ghat,H)\right),\\
		&\Deltatilde_4 = \mathcal{O}(1)\cdot\mathop{\inf}_{\bar{G}\in \mathcal{G}}\dot{\mathbb{L}}_T(\bar{G}).
	\end{align*}
	In particular, when $S=1$,
	\bel{convergence0}
	\mathbb{E}_{(X_t, \eta_t)_{t=0}^T}\Vert p_{\Xtil_T,\Xtil_{T+1}} - p_{X_T,X_{T+1}} \Vert_{L_1}^2\le\underbrace{\Deltatilde_1+\Delta_2}_{\text{statistical error}}+  \underbrace{\Delta_3+ \Delta_4}_{\text{approximation error}},
	\eel
	where $\Delta_2, \Delta_3$ and $\Delta_4$ are the quantities in Theorem \ref{mainthm}.
\end{thm}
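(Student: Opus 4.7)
My plan is to mirror the argument used for Theorem \ref{mainthm} but with the $\widetilde{\mathcal{L}}$ loss that averages over $(t,s)\in\Omega$ rather than only neighbors. First I would invoke a Pinsker-type inequality derived from assumption (\ref{lm0-0}) on $f$, namely $D_f(q\|p)\ge C\,\|q-p\|_{L_1}^2$ for some absolute constant $C>0$. Applied termwise to $\dot{\mathbb{L}}_T(\Ghat)=\frac{1}{S}\sum_{s=1}^{S} D_f\bigl(p_{X_T,\Ghat(\eta_T,X_T,s)}\|p_{X_T,X_{T+s}}\bigr)$, this reduces the left-hand side of (\ref{couvergence}) to a constant multiple of $\mathbb{E}\,\dot{\mathbb{L}}_T(\Ghat)$, so the remaining task is to bound this expected divergence.

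Next I would rewrite $\dot{\mathbb{L}}_T(\Ghat)=\sup_H \dot{\mathcal{L}}_T(\Ghat,H)$ via the Fenchel variational form and decompose by insertions:
\begin{align*}
\dot{\mathbb{L}}_T(\Ghat)
&= \Bigl[\sup_{H}\dot{\mathcal{L}}_T(\Ghat,H)-\sup_{H\in\mathcal{H}}\dot{\mathcal{L}}_T(\Ghat,H)\Bigr]
+ \Bigl[\sup_{H\in\mathcal{H}}\dot{\mathcal{L}}_T(\Ghat,H)-\sup_{H\in\mathcal{H}}\widetilde{\mathcal{L}}(\Ghat,H)\Bigr]\\
&\quad + \sup_{H\in\mathcal{H}}\widetilde{\mathcal{L}}(\Ghat,H).
\end{align*}
The first bracket is exactly $\Deltatilde_3$. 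For the third term, the min-max optimality of $\Ghat$ yields $\sup_{H\in\mathcal{H}}\widetilde{\mathcal{L}}(\Ghat,H)\le \sup_{H\in\mathcal{H}}\widetilde{\mathcal{L}}(\bar G,H)$ for any $\bar G\in\mathcal{G}$; adding and subtracting $\sup_{H\in\mathcal{H}}\dot{\mathcal{L}}_T(\bar G,H)$ and taking the infimum over $\bar G$ produces $\Deltatilde_4$ plus another empirical-process term of the same form as the second bracket. Thus everything reduces to bounding the uniform deviation
\[
\sup_{G\in\mathcal{G},\,H\in\mathcal{H}}\bigl|\dot{\mathcal{L}}_T(G,H)-\widetilde{\mathcal{L}}(G,H)\bigr|.
\]

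I would then split this uniform deviation into (i) a bias term $|\dot{\mathcal{L}}_T(G,H)-\mathbb{E}\widetilde{\mathcal{L}}(G,H)|$, arising because the empirical pairs $(X_t,X_{t+s})$ over $\Omega$ do not share the distribution of $(X_T,X_{T+s})$; and (ii) a fluctuation term $|\widetilde{\mathcal{L}}(G,H)-\mathbb{E}\widetilde{\mathcal{L}}(G,H)|$, uniform in $(G,H)$. Piece (i) is exactly the quantity bounded by Proposition \ref{prop7} for each $s$, giving $\Deltatilde_1=\mathcal{O}(T^{-\alpha/(\alpha+1)})$; piece (ii) is controlled by the Rademacher complexity of $\{b_t^s\}$ uniformly over $(G,H)$, yielding $\Deltatilde_2$ after conversion to pseudo-dimension and global bound via Proposition \ref{prop3-1}. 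Averaging over $s=1,\ldots,S$ gives (\ref{couvergence}). The $S=1$ specialization (\ref{convergence0}) follows because $\widetilde{\mathcal{L}}$ collapses to $\widehat{\mathcal{L}}$ and $(\mathcal{G},\mathcal{H})$ to $(\mathcal{G}_1,\mathcal{H}_1)$; note that the extra $\mathcal{O}(T^{-2\alpha})$ term appearing in Theorem \ref{mainthm} is absent here because we control only pairwise rather than joint distributions (cf.\ Remark \ref{rmk-d1}).

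The principal obstacle is piece (i). Assumption \ref{a2} provides only marginal $L_1$ convergence of $p_{X_t}$ at rate $t^{-\alpha}$, whereas what is needed is convergence of the joint $p_{X_t,X_{t+s}}$. Under Markov and conditional invariance the joint factors as $p_{X_t}(x)\,p_{X_1|X_0}(y|x)$, so the marginal rate propagates to the joint, but naively averaging a $t$-dependent bias of order $t^{-\alpha}$ over $t=0,\ldots,T-s$ gives only $\mathcal{O}(T^{-\alpha})$. The improved rate $T^{-\alpha/(\alpha+1)}$ of Proposition \ref{prop7} requires splitting the sum at an index $t^\ast$ chosen to balance the early ``warm-up'' bias against the late ``steady-state'' contribution, which is the delicate step. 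A secondary technical difficulty in piece (ii) is that observations across $t$ are strongly dependent, so standard i.i.d. symmetrization is unavailable; instead the McDonald-type Rademacher bound noted in Remark \ref{rmk-d2} must be invoked to validate the Rademacher-complexity calculation behind Proposition \ref{prop3-1}.
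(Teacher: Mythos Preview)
Your proposal is correct and follows essentially the same route as the paper: apply the Pinsker-type inequality (Lemma \ref{lm8}) to reduce to bounding $\mathbb{E}\,\dot{\mathbb{L}}_T(\Ghat)$, decompose via the variational form into $\Deltatilde_3+\Deltatilde_4$ plus a uniform deviation $\sup_{G,H}|\dot{\mathcal{L}}_T-\widetilde{\mathcal{L}}|$, then split the latter into a bias part handled by Proposition \ref{prop7} (with the $T_0\asymp T^{1/(\alpha+1)}$ truncation you identify) and a fluctuation part handled by the dependent-data Rademacher bound of Proposition \ref{prop3-1} together with a covering-number argument. The only minor bookkeeping item you do not single out is that $\dot{\mathcal{L}}_T$ weights each $s$ by $1/S$ whereas $\widetilde{\mathcal{L}}$ weights by $(T-s+1)/|\Omega|$, which the paper isolates as a separate $\mathcal{O}(T^{-1})$ term; this is absorbed into $\Deltatilde_1$ and does not affect the argument.
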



Theorem \ref{thm2} demonstrates the convergence of pairwise distribution for the $s$-step generation. It states that the $\ell_1$ distribution distance between $(\Xtil_T,\Xtil_{T+s})$ and $(X_T,X_{T+s})$ for any $s$ can be bounded by the sum of two statistical errors and two approximation errors. In the following subsection, similar to Theorem \ref{mainthm}, we will show that all $\Deltatilde$s approach to 0 when $\mathcal{H}$ and $\mathcal{G}$ are chosen to be spaces of deep neural networks.
Upon comparing Theorem \ref{thm2} with  Theorem \ref{mainthm}, it can be observed that the term $T^{-2\alpha}$ in $\Delta_1$ does not appear in $\Deltatilde_1$. As discussed in Remark \ref{rmk-d1}, the term $T^{-2\alpha}$ controls the difference between joint and pairwise distribution, and it is no longer needed in Theorem \ref{thm2}.

\begin{remark}
	Unlike Theorem \ref{mainthm}, the convergence for joint distribution may not be guaranteed for s-step generated sequence $\big\{\Xtil_t, \ t=1,2\ldots \big\}$. As discussed in Theorem \ref{thm1}, there is no assurance regarding the existence of $G$ to attain the joint distribution match. The major issue for s-step generation is that $\Xhat_{T+s} \vert (\Xhat_{T+s-1}=x)$ varies with $t$. Thus the mutual dependencies between $\Xtil^0_1,\Xtil^0_2,\ldots $ could not be kept in the generation.
	As a comparison, in iterative generation, the joint distribution match could be achieved as the conditional distribution of adjacent generations does not vary with $t$, i.e.,
	\bes
	\Xhat_{T+s} \vert (\Xhat_{T+s-1}=x) \ \overset{\text{d}}{\equiv} \ \Xhat_{T+1} \vert (\Xhat_{T}=x).
	\ees
\end{remark}


%
%

\subsection{Analysis of deep neural network spaces}\label{subsec3-3}
\noindent Neural networks have been extensively studied in recent years due to its universal approximation power. 
In this subsection, we consider DNN to approximate the generator $g$ in our model. In particular, we show that both statistical and approximation errors converge to zero  when $\calG_1$, $\calH_1$, $\calG$, $\calH$ are taken to be the space of Rectified Linear Unit (ReLU) neural network functions. To avoid redundancy, we concentrate on the spaces of $\calG_1$ and $\calH_1$, with generalizations to $\calG$ and $\calH$ being straightforward.

Recall that the input $X_t$ and reference $\eta_t$ are of dimension $p$ and $m$, respectively. We consider the generator $g: \bbR^{p + m }  \rightarrow \bbR^{p}$ in the space of ReLU neural networks $\calG_1 := \calG_{\calD,\calW,\calK,\calB}$  with width $\mathcal{W}$, depth $\mathcal{D}$, size $\mathcal{K}$ and global bound $\mathcal{B}$. 
Specifically, let  $\omega_j$ denote the number of hidden units in layer $j$ with $\omega_0 = p+m$ being the dimension of input layer. Then the width $\mathcal{W} = \mathop{\max}_{0\le i\le \mathcal{D}}\left\{\omega_i\right\}$ is the maximum dimension, 
the depth $\mathcal{D}$ is the number of layers, the size $\mathcal{K} = \sum_{i=0}^{\mathcal{D}-1}  \omega_i\cdot (\omega_{i+1} + 1)$ is the total number of parameters, and the global bound satisfies $\|g\|_{\infty} \leq \mathcal{B}$ for all $g \in \mathcal{G}_1$. Similarly, we may define a ReLU network space for the discriminator $h$ as $\calH_1 :=\mathcal{H}_{\widetilde{\mathcal{D}},\widetilde{\mathcal{W}},\tilde{\mathcal{K}},\widetilde{\mathcal{B}}}$.


Then, by \citet{bartlett2017nearlytight}, we may bound the pseudo dimension of  $\calG_1$ (and $\calH_1$) and consequently $\Delta_2$ as in Proposition \ref{prop: delta1} below.


\begin{prop}\label{prop: delta1}
	Let $\calG_1 := \calG_{\calD,\calW,\calK,\calB}$  be the ReLU network space with width $\mathcal{W}$, depth $\mathcal{D}$, size $\mathcal{K}$ and global bound $\mathcal{B}$. Then we have
	\bes
	\text{Pdim}_{\mathcal{G}_1} = \mathcal{O}(\calD\mathcal{K}\log \mathcal{K}).
	\ees
	Consequently,
	\bes
	\Delta_2 = \mathcal{O}\left(\sqrt{\frac{\mathcal{D}\mathcal{K}\log\mathcal{K}\log(T\mathcal{B})}{T}} + \sqrt{\frac{\tilde{\mathcal{D}}\tilde{\mathcal{K}}\log\tilde{\mathcal{K}}\log(T\tilde{\mathcal{B}})}{T}}\right).
	\ees
\end{prop}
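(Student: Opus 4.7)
\textbf{Proof plan for Proposition \ref{prop: delta1}.} The proposition has two parts, and both follow by plugging known ingredients together, so the argument should be short. The plan is first to invoke the nearly-tight pseudo-dimension bound of \cite{bartlett2017nearlytight} for piecewise-linear networks to obtain $\text{Pdim}_{\mathcal{G}_1} = \mathcal{O}(\mathcal{D}\mathcal{K}\log\mathcal{K})$, and then to substitute this bound into the expression for $\Delta_2$ already established in Theorem \ref{mainthm}.

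For the pseudo-dimension step, \cite{bartlett2017nearlytight} prove that a feedforward piecewise-linear (in particular, ReLU) network with $W$ parameters arranged in $L$ layers realizes a class of real-valued functions whose pseudo-dimension is at most $\mathcal{O}(WL\log W)$. Identifying $W=\mathcal{K}$ and $L=\mathcal{D}$ yields $\text{Pdim}_{\mathcal{G}_1} = \mathcal{O}(\mathcal{D}\mathcal{K}\log\mathcal{K})$, and the identical argument applied to the discriminator class gives $\text{Pdim}_{\mathcal{H}_1} = \mathcal{O}(\widetilde{\mathcal{D}}\widetilde{\mathcal{K}}\log\widetilde{\mathcal{K}})$. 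One minor subtlety is that elements of $\mathcal{G}_1$ are vector-valued maps into $\mathbb{R}^p$, whereas the cited bound is phrased for scalar outputs; this is resolved by applying the bound coordinate-wise and absorbing the factor $p$ into the $\mathcal{O}(\cdot)$, or, more cleanly, by observing that the complexity actually invoked in Theorem \ref{mainthm} is that of the scalar-valued composite class $\{b_t^s(G,H)\}$ from Proposition \ref{prop3-1}, whose parameter count and depth are of the same order as those of $\mathcal{G}_1$ and $\mathcal{H}_1$ (the composition with $f^*$ and the inner products only adds $\mathcal{O}(1)$ parameters and $\mathcal{O}(1)$ layers).

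For the $\Delta_2$ step, simply substitute the two pseudo-dimension bounds, along with $B_{\mathcal{G}_1}=\mathcal{B}$ and $B_{\mathcal{H}_1}=\widetilde{\mathcal{B}}$, into
\[
\Delta_2 = \mathcal{O}\!\left(\sqrt{\frac{\text{Pdim}_{\mathcal{G}_1}\log(T B_{\mathcal{G}_1})}{T}} + \sqrt{\frac{\text{Pdim}_{\mathcal{H}_1}\log(T B_{\mathcal{H}_1})}{T}}\right)
\]
from Theorem \ref{mainthm}. The $\log\mathcal{K}$ factor from the pseudo-dimension bound and the $\log(T\mathcal{B})$ factor from the Rademacher/covering term combine exactly to the form displayed in the proposition.

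There is no serious obstacle. The only things to verify with some care are that the architecture defined just above the proposition genuinely satisfies the hypotheses of \cite{bartlett2017nearlytight} (piecewise-linear activations, fixed architecture with $\mathcal{K}$ free parameters and no additional weight sharing), and that the hidden constants do not depend on $T, \mathcal{K}, \mathcal{D}, \mathcal{B}$. Once these are in hand, the assertion $\Delta_2\to 0$ whenever $\mathcal{D}\mathcal{K}\log\mathcal{K}\log(T\mathcal{B}) = o(T)$ and $\widetilde{\mathcal{D}}\widetilde{\mathcal{K}}\log\widetilde{\mathcal{K}}\log(T\widetilde{\mathcal{B}}) = o(T)$ is immediate, completing the role this proposition plays in the discussion of Section \ref{subsec3-3}.
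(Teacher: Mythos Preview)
Your proposal is correct and matches the paper's approach: the paper does not give a separate proof of this proposition but simply invokes \cite{bartlett2017nearlytight} for the pseudo-dimension bound and then substitutes into the $\Delta_2$ expression from Theorem~\ref{mainthm}, exactly as you outline. Your additional remark about the vector-valued output of $\mathcal{G}_1$ is a reasonable clarification that the paper leaves implicit.
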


By Proposition \ref{prop: delta1}, it is clear that $\Delta_2$ goes to 0 with appropriate size of ReLU network spaces, e.g. $\mathcal{D}\mathcal{K}\log\mathcal{K}\log(T\mathcal{B})$ and $\tilde{\mathcal{D}}\tilde{\mathcal{K}}\log\tilde{\mathcal{K}}\log(T\tilde{\mathcal{B}})$ are of smaller order of $T$. Moreover, as $\Delta_2\rightarrow 0$ when $T\rightarrow \infty$ regardless of network structure, we can conclude that the statistical error in Theorem \ref{mainthm} converge to 0.


Now we consider the approximation errors $\Delta_3$ and $\Delta_4$. The approximation power of DNN has been intensively studied in the literature under different conditions, such as smoothness assumptions. 
For instance, the early work by \citet{stone1982optimal} established the optimal minimax rate of convergence for estimating a $(\beta, C)$-smooth function. While more recently, \citet{yarotsky2017error} and \citet{lu2020deep} considered target functions with continuous $\beta$-th derivatives. \citet{jiao2021deep} assumed $\beta$-Hölder smooth functions with $\beta>1$. Moreover,  studies including \cite{shen2021deep}, \citet{schmidt2020nonparametric}, and \citet{bauer2019deep}, have sought to enhance the convergence rate by assuming that the target function possesses certain compositional structure. Here, we adopt Theorem 4.3 in \citet{shen2019deep} and show that the approximation error in Theorem \ref{mainthm} converge to 0 with a particular structure of neural networks.



\begin{prop}\label{prop: delta3}
	Let $\calG_1 := \calG_{\calD,\calW,\calK,\calB}$  be a ReLU network space with depth $\calD = 12\log T  + 14 + 2(p+m+1)$ and width $ \calW = 3^{p+m+4} \max\{(p+m+1)\lfloor (T ^{\frac{p+m+1}{2(3+p+m)}}/\log T)^{\frac{1}{p+m+1}}\rfloor,T ^{\frac{p+m+1}{2(3+p+m)}}/\log T + 1\}$. Further let  $\calH_1 :=\mathcal{H}_{\widetilde{\mathcal{D}},\widetilde{\mathcal{W}},\tilde{\mathcal{K}},\widetilde{\mathcal{B}}}$  be a ReLU network with depth $\widetilde{\calD} = 12\log T  + 14 + 2(2p+1)$ and width $ \widetilde{\calW} = 3^{2p+4} \max\{(2p+1)\lfloor (T ^{\frac{2p+1}{2(2p+3)}}/\log T)^{\frac{1}{2p+1}}\rfloor,T ^{\frac{2p+1}{2(2p+3)}}/\log T + 1\}$.
	Then as $T\rightarrow \infty$, we have 
	\bes
	&& \Delta_3 = \mathcal{O}(1)\cdot\mathbb{E}_{(X_t, \eta_t)_{t=0}^T} \left(\mathop{\sup}_{h} \mathcal{L}_T(\ghat,h) - \mathop{\sup}_{h\in \mathcal{H}_1} \mathcal{L}_T(\ghat,h)\right) \rightarrow 0,
	\cr && \Delta_4 = \mathcal{O}(1)\cdot \mathop{\inf}_{\bar{g}\in \mathcal{G}_1}\mathbb{L}_T(\bar{g})
	\rightarrow 0.
	\ees
\end{prop}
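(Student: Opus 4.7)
The plan is to apply the ReLU network approximation theorem (Theorem 4.3 of \citet{shen2019deep}) to control each approximation error separately, using Theorem \ref{thm0} to identify the target functions and the convexity of $f$ (assumption (\ref{lm0-0})) to translate function-level approximation into divergence bounds. The key observation is that the specific depth $\calD = 12\log T + 14 + 2(p+m+1)$ and width $\calW$ in the statement exactly match the architecture required in \citet{shen2019deep} to approximate a continuous function on a compact domain in $\mathbb{R}^{p+m}$, with an approximation rate that tends to zero; analogously, the architecture for $\mathcal{H}_1$ matches the $2p$-dimensional approximation target, since the discriminator takes inputs $(x,y) \in \mathbb{R}^{2p}$.

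For $\Delta_4 = \mathcal{O}(1) \cdot \inf_{\bar{g} \in \mathcal{G}_1}\mathbb{L}_T(\bar g)$, I would proceed as follows. Theorem \ref{thm0} guarantees a measurable target $g^*$ with $\mathbb{L}_T(g^*) = 0$. I would first truncate to a compact set $K_T \subset \mathbb{R}^{p+m}$ chosen so that $\mathbb{P}((X_T,\eta_T) \notin K_T) \to 0$, which is possible by the tightness of $X_T$ implied by Assumption \ref{a2}. On $K_T$ I would regularize $g^*$ to a continuous function $g^*_\varepsilon$ by a standard Lusin/mollification argument, then invoke Theorem 4.3 of \citet{shen2019deep} to obtain $\bar g \in \mathcal{G}_1$ with $\|\bar g - g^*_\varepsilon\|_{L^\infty(K_T)} \to 0$ at the rate determined by the architecture. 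Finally I would lift this uniform closeness to closeness of the densities $p_{X_T,\bar g(\eta_T,X_T)}$ and $p_{X_T,g^*(\eta_T,X_T)}$ in $L_1$, and then bound $D_f$ on densities close in $L_1$ using the local quadratic growth of $f$ afforded by (\ref{lm0-0}); the tail contribution from $K_T^c$ is controlled by the vanishing probability mass there.

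For $\Delta_3$, the unrestricted maximizer in the Fenchel variational form is $h^* = f'(q_T/p_T)$, where $p_T$ and $q_T$ are the densities of $(X_T,X_{T+1})$ and $(X_T,\hat g(\eta_T,X_T))$ respectively. I would approximate $h^*$ on a compact truncation set in $\mathbb{R}^{2p}$, again via Lusin/mollification followed by Theorem 4.3 of \citet{shen2019deep} in dimension $2p$, producing $\bar h \in \mathcal{H}_1$ close to $h^*$ in $L^\infty$. Substituting $\bar h$ into $\mathcal{L}_T(\hat g, \cdot)$ and using the boundedness and Lipschitz continuity of $f^*$ over the relevant range yields $\sup_{h} \mathcal{L}_T(\hat g,h) - \sup_{h \in \mathcal{H}_1}\mathcal{L}_T(\hat g,h) \to 0$, and taking expectations gives $\Delta_3 \to 0$.

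The main obstacle I expect is the compact-support requirement of the ReLU approximation theorem set against the unbounded supports of $X_t$ and $\eta_t$. This forces a growing truncation $K_T$ together with a quantitative tail argument, and demands that the constructed networks be extended (e.g., by clipping at the global bound $\mathcal{B}$) without disturbing the approximation rate on $K_T$. A secondary subtlety is that the noise-outsourced $g^*$ from Lemma \ref{lm1}, and the density ratio $q_T/p_T$ defining $h^*$, need not be continuous a priori; both require a preliminary mollification step before the polynomial ReLU approximation rate of \citet{shen2019deep} can be invoked. Once these reductions are in place, the growth of $\mathcal{D}$ and $\mathcal{W}$ in the proposition precisely offsets the dimensional factor in the approximation rate, and both $\Delta_3$ and $\Delta_4$ vanish as $T \to \infty$.
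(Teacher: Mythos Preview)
Your approach is essentially the same as the paper's: both invoke Theorem~4.3 of \citet{shen2019deep} with the stated depth/width to approximate the optimal $g^*$ (for $\Delta_4$) and the variational maximizer $h_{\hat g}=f'(q_T/p_T)$ (for $\Delta_3$) on a growing box, then pass the uniform approximation through $\mathbb{L}_T$ and $\mathcal{L}_T$ by continuity of $f^*$. The paper's argument is in fact terser than yours---it simply posits continuity of $g^*$ and $h_{\hat g}$ on $[-\log T,\log T]^{p+m+1}$ and $[-\log T,\log T]^{2p+1}$ and applies Shen et al.\ directly, without the Lusin/mollification or explicit tail-truncation steps you flag---so your additional care on those points is warranted but not a departure; your alternative route for $\Delta_4$ via $L_1$ density closeness and (\ref{lm0-0}) is also valid and slightly cleaner than the paper's use of the variational identity $\mathbb{L}_T(\bar g)=\mathcal{L}_T(\bar g,\bar h)$ with $\bar h=f'(p_{X_T,\bar g}/p_{X_T,X_{T+1}})$.
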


\section{Generalizations to lag-k time series} \label{sec:4}
\noindent In this section, we generalize the lag-1 time series studied in Section \ref{sec:2} and \ref{sec:3} to a lag-k setting.
Specifically, we consider a time series $\left\{X_t \in\mathbb{R}^p, t=1,2,\ldots \right\}$ that satisfies the following lag-k Markov assumption
\bel{lagk1}
X_t\vert X_{t-1},\ldots,X_0 \ \overset{\text{d}}{=} \ X_t\vert X_{t-1},\ldots,X_{t-k}.
\eel
Moreover, assume that $\left\{X_t \in\mathbb{R}^p, t=1,2,\ldots \right\}$  is conditionally invariant
\bel{lagk2}
p_{X_t\vert X_{t-1},\cdots,X_{t-k}}(x_k\vert x_{k-1},\cdots,x_0) = p_{X_k\vert X_{k-1},\cdots,X_0}(x_k\vert x_{k-1},\cdots,x_0), \ \ \forall t \ge k.
\eel
In other words, the conditional density function of $X_t\vert X_{t-1},\cdots, X_{t-k}$ does not depend on $t$.

Given a lag-k time series $\left\{X_t\right\}$, 
we aim to generate a sequence $(\Xhat_{T+1},\ldots, \Xhat_{T+S})$ that not only follows the same joint distribution as $(X_{T+1},\ldots, X_{T+S})$,  but also maintains the dependencies between $(\Xhat_{T+1},\ldots, \Xhat_{T+S})$ and $(X_{T-k+1},\ldots, X_T)$.
In other words, we aim to achieve
\bel{lagkjoint1}
(\Xhat_{T-k+1},\ldots,\Xhat_{T},\Xhat_{T+1},\ldots,\Xhat_{T+S})\overset{\text{d}}{=} (X_{T-k+1},\ldots,X_{T},X_{T+1},\ldots,X_{T+S}).
\eel
we show that such generation is possible by the following iterative generation
\bes
&&(\Xhat_{T-k+1},\cdots,\Xhat_{T}) = (X_{T-k+1},\cdots,X_T),
\cr && \Xhat_{T+s} = g(\eta_{T-k+s}, \Xhat_{T-k+s},\cdots,\Xhat_{T-1+s}),\ \ 1\le s\le S.
\ees
where $g$ is the target function to be estimated and $\eta_t$ is i.i.d. Gaussian vectors of dimension $m$. Moreover, we may also consider the $s$-step generation:
\bes
\Xtil_{T+s}= G\left(\eta_{T-k+1}, X_{T-k+1},\cdots,X_{T},s\right).
\ees
The following proposition suggests that, analogous to the lag-1 case, there exist a function $g$ for iterative generation to achieve the joint distribution matching. Furthermore, for the s-step generation, a function $G$ exists to attain the marginal distribution matching.

\begin{prop}\label{prop4}
	Let $\left\{X_t\right\}$ satisfies the lag-k Markov property (\ref{lagk1}) and conditional invariance condition (\ref{lagk2}). Let $(\Xhat_0^0,\cdots,\Xhat_{k-1}^0)\overset{\text{d}}{=}(X_0,\cdots,X_{k-1})$ and $\eta_0,\eta_1,\cdots$ be independent $m$-dimensional Gaussian vectors which are independent of $(\Xhat_0,\cdots,\Xhat_{k-1})$. Then for iterative generation, there exists a measurable function $g$ such that the sequence
	\bel{prop4-1}
	\left\{\Xhat_t^0: \Xhat_t^0 = g(\eta_{t-k}, \Xhat_{t-k}^0,\cdots,\Xhat_{t-1}^0)\right\}
	\eel
	satisfies that for any $s\ge1$,
	\bel{prop4-2}
	(\Xhat_0^0,\cdots, \Xhat_s^0) \overset{\text{d}}{=} (X_0,\cdots, X_s).
	\eel
	Moreover, for $s$-step generation, the sequence
	\bel{prop4-3}
	\left\{\Xtil_{t+k-1}^0: \Xtil_{t+k-1}^0 = G(\eta_0, X_0,\cdots,X_{k-1},t)\right\}
	\eel
	satisfies
	\bel{prop4-4}
	\Xtil_{t+k-1}^0 \overset{\text{d}}{=} X_{t+k-1}.
	\eel
\end{prop}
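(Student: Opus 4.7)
The plan is to extend the proof of Theorem \ref{thm0} to the lag-$k$ setting by applying the noise-outsourcing lemma (Lemma \ref{lm1}) to the pair formed by the $k$-tuple of past observations and the next observation, then propagating the joint-distribution match by induction using the lag-$k$ Markov and conditional invariance properties. Lemma \ref{lm1} remains applicable even when the ``$X$" argument is a random vector, so I would first apply it to the pair $\big((X_0,\ldots,X_{k-1}),\, X_k\big)$ together with an independent standard Gaussian $\eta_0$ to obtain a measurable function $g:\mathbb{R}^m\times\mathbb{R}^{kp}\to\mathbb{R}^p$ with
\[
\big(X_0,\ldots,X_{k-1},\, g(\eta_0,X_0,\ldots,X_{k-1})\big)\overset{\text{d}}{=}(X_0,\ldots,X_{k-1},X_k).
\]
Setting $\hat X_k^0 := g(\eta_0,\hat X_0^0,\ldots,\hat X_{k-1}^0)$ and using $(\hat X_0^0,\ldots,\hat X_{k-1}^0)\overset{\text{d}}{=}(X_0,\ldots,X_{k-1})$ with $\eta_0$ independent immediately yields the base case $(\hat X_0^0,\ldots,\hat X_k^0)\overset{\text{d}}{=}(X_0,\ldots,X_k)$.

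Next I would induct on $s$. Assuming $(\hat X_0^0,\ldots,\hat X_{s-1}^0)\overset{\text{d}}{=}(X_0,\ldots,X_{s-1})$ and that $\eta_{s-k}$ is independent of this vector, I would define $\hat X_s^0 := g(\eta_{s-k},\hat X_{s-k}^0,\ldots,\hat X_{s-1}^0)$. Conditional on the generated past, $\hat X_s^0$ depends only on $(\hat X_{s-k}^0,\ldots,\hat X_{s-1}^0)$ through $g$, and by the construction of $g$ this conditional distribution agrees with that of $X_k\mid(X_0,\ldots,X_{k-1})$. Two ingredients combine to close the induction: conditional invariance (\ref{lagk2}) identifies the transition kernel $X_k\mid(X_{k-1},\ldots,X_0)$ with $X_s\mid(X_{s-1},\ldots,X_{s-k})$ for every $s\ge k$, while the lag-$k$ Markov property (\ref{lagk1}) reduces conditioning on the full past $(X_0,\ldots,X_{s-1})$ to the last $k$ entries. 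Combining the induction hypothesis with these two facts gives $(\hat X_0^0,\ldots,\hat X_s^0)\overset{\text{d}}{=}(X_0,\ldots,X_s)$, establishing (\ref{prop4-2}).

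For the $s$-step generation, I would apply Lemma \ref{lm1} separately for each fixed $t\ge 1$ to the pair $\big((X_0,\ldots,X_{k-1}),\, X_{t+k-1}\big)$, producing for each $t$ a measurable function $G_t:\mathbb{R}^m\times\mathbb{R}^{kp}\to\mathbb{R}^p$ with $G_t(\eta_0,X_0,\ldots,X_{k-1})\overset{\text{d}}{=}X_{t+k-1}$ jointly with $(X_0,\ldots,X_{k-1})$; then $G(\eta,x_0,\ldots,x_{k-1},t):=G_t(\eta,x_0,\ldots,x_{k-1})$ is measurable in its continuous arguments for each integer $t$, and (\ref{prop4-4}) is immediate. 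The main obstacle is the inductive step for iterative generation: one must verify that a single $g$ constructed from the one-step conditional at time $k$ produces the correct conditional kernel at every subsequent time. Conditional invariance is what makes this possible, since it guarantees the transition kernel does not depend on $s$; without it the construction would need a separate $g_s$ for each step and the functional form of (\ref{prop4-1}) could not be maintained.
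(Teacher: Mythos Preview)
Your proposal is correct and follows exactly the approach the paper uses for the lag-1 case (Theorems \ref{thm0} and \ref{thm1}): apply the noise-outsourcing Lemma \ref{lm1} to the pair $\big((X_0,\ldots,X_{k-1}),X_k\big)$ to produce $g$, run the induction on $s$ using the lag-$k$ Markov property and conditional invariance, and for the $s$-step part invoke Lemma \ref{lm1} separately for each $t$. The paper does not include a separate proof of Proposition \ref{prop4} in its supplementary material, treating it as the evident generalization of those two results; your write-up supplies precisely that generalization.
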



Now we consider the estimation of $g$ and $G$ in lag-k time series. For any sequence $\left\{A_t\right\}$ and positive integer $u\le v$, denote $A_{[u,v]}$ as the set $(A_u, A_{u+1},\cdots,A_v)$.
Then we consider the following min-max problem for the estimation of s-step generation:
\bel{op3}
(\Ghat, \Hhat) &=& \arg \mathop{\min}_{G\in\mathcal{G}} \mathop{\max}_{H\in\mathcal{H}} \widetilde{\mathcal{L}}(G,H),
\cr \widetilde{\mathcal{L}}(G,H) &=& \frac1{\vert\Omega\vert} \sum\limits_{(t,s)\in\Omega} \big[H(X_{[t-k+1,t]},G(\eta_{t-k+1},X_{[t-k+1,t]},s),s) \cr && \ \ \ \ \ \ \ \ \ \ \ \ \ \ - f^*(H(X_{[t-k+1,t]},X_{t+s},s))\big],
\cr \Omega &=& [(t,s): t+h\le T, t\ge k, 1\le s\le S],
\eel
where $\mathcal{G}$, $\mathcal{H}$ are the spaces of continuous and bounded functions. As in the lag-1 case, the generator $g$ and discriminator $h$ in the lag-k iterative generation can be obtained by letting
\bes
\ghat(\cdot,\cdot)=\Ghat(\cdot,\cdot,1), \ \ \ \hhat(\cdot,\cdot)=\Hhat(\cdot,\cdot,1).
\ees

For lag-k time series, we impose the following condition  analogous to Assumption \ref{a2} in Section \ref{sec:3}.
\begin{asmp}\label{a5}
	The probability density funtion of $X_{[t,t+k-1]}$, denoted by $p_{t,k}$, converges in $L_1$, i.e., there exists a funtion $p_{\infty,k}$ such that:
	\bel{pdf converges}
	\int \left| p_{t,k}(x_1,\cdots,x_k)-p_{\infty,k} (x_1,\cdots,x_k)\right| dx_{[1,k]} \le \mathcal{O}(t^{-\alpha})
	\eel
	where $\alpha>0 $ is certain positive constant.
\end{asmp}
Given Assumption \ref{a5}, we can derive theoretical guarantees for the distribution matching of lag-k time series. 
Now let $\Xhat_T$ be the iteratively generated sequence, i.e.,
\begin{align}\label{lagk3}
	&\Xhat_{T-j} = X_{T-j}, \ \ j = 0 ,\cdots, k-1, \cr
	&\Xhat_{T+s} = \ghat(\eta_{T+s-k},\Xhat_{T+s-k},\cdots,\Xhat_{T+s-1}), \ \ s = 1,\cdots, S.
\end{align}
Further let $\Xtil_T$ be the s-step generated sequence, i.e., 
\begin{align}\label{lagk4}
	&\Xtil_{T-j} = X_{T-j}, \ \ j = 0,\cdots, k-1, \\
	&\Xtil_{T+s} = \Ghat(\eta_{T-k+1}, \Xtil_{T-k+1},\cdots,\Xtil_{T},s), \ \ s = 1,\cdots, S.
\end{align}
Then we have the following convergence theorem for iterative and s-step generated sequences. 
\begin{thm}\label{thm3}
	Let $\left\{X_t\right\}$ satisfies the lag-k Markov property (\ref{lagk1}) and conditional invariance condition (\ref{lagk2}).
	Suppose Assumption \ref{a5} holds. Let $\Ghat$ be the solution to the f-GAN problem (\ref{op3}) with f satisfying (\ref{lm0-0}).
	Then,  for the iterative generations $\Xhat_{T+s}$ in (\ref{lagk3}), we have
	\bel{thm3-1}
	\mathbb{E}_{(X_t,\eta_t)_{t=0}^T}\left\| p_{\Xhat_{[T-k+1,T+S]}} - p_{X_{[T-k+1,T+S]}}\right\|^2_{L_1} \le \underbrace{\Deltabar_1 + \Deltabar_2}_{\text{statistical err}}+ \underbrace{\Deltabar_3+ \Deltabar_4}_{\text{approximation err}},
	\eel
	and
	\bel{thm3-2}
	\mathbb{E}_{(X_t, \eta_t)_{t=0}^T} \frac1S\sum\limits_{s=1}^{S} \left\| p_{\Xhat_{[T-k+1,T]},\Xhat_{T+s}} - p_{X_{[T-k+1,T]},X_{T+s}} \right\|_{L_1}^2 \le \underbrace{\Deltabar_1 + \Deltabar_2}_{\text{statistical err}}+\underbrace{\Deltabar_3+ \Deltabar_4}_{\text{approximation err}},
	\eel
	where 
	\begin{align*}
		&\Deltabar_1 =\mathcal{O}( T^{-\frac{\alpha}{\alpha+1}}+T^{-2\alpha}), \\
		&\Deltabar_2 = \mathcal{O}\left(\sqrt{\frac{\text{Pdim}_{\mathcal{G}_1}\log (T\text{B}_{\mathcal{G}_1})}{T}}+\sqrt{\frac{\text{Pdim}_{\mathcal{H}_1}\log (T\text{B}_{\mathcal{H}_1})}{T}}\right),	\\
		&\Deltabar_3 = \mathcal{O}(1)\cdot\mathbb{E}_{(X_t, \eta_t)_{t=0}^T}(\mathop{\sup}_{h} \mathcal{L}_T(\ghat,h) - \mathop{\sup}_{h\in \mathcal{H}_1} \mathcal{L}_T(\ghat,h)),\\
		&\Deltabar_4 = \mathcal{O}(1)\cdot\mathop{\inf}_{\bar{g}\in \mathcal{G}_1}\mathbb{L}_T(\bar{g}).
	\end{align*}
	Moreover, for $s$-step generations $\Xtil_{T+s}$ in (\ref{lagk4}), we have
	\bel{thm3-3}
	\mathbb{E}_{(X_t, \eta_t)_{t=0}^T}\frac1S\sum\limits_{s=1}^{S} \left\|p_{\Xtil_{[T-k+1,T]},\Xtil_{T+s}} - p_{X_{[T-k+1,T]},X_{T+s}} \right\|_{L_1}^2 \le\underbrace{\breve{\Delta}_1+ \breve{\Delta}_2}_{\text{statistical err}} + \underbrace{\breve{\Delta}_3+ \breve{\Delta}_4}_{\text{approximation err}}
	\eel
	In particular, when $S=1$,
	\bel{thm3-4}
	\mathbb{E}_{(X_t, \eta_t)_{t=0}^T}\left\| p_{\Xtil_{[T-k+1,T+1]}} - p_{X_{[T-k+1,T+1]}} \right\|_{L_1}^2 \le  \underbrace{\breve{\Delta}_1+\Deltabar_2}_{\text{statistical err}}+\underbrace{\Deltabar_3+ \Deltabar_4}_{\text{approximation err}}.
	\eel
	where
	\begin{align*}
		&\breve{\Delta}_1 =  \mathcal{O}( T^{-\frac{\alpha}{\alpha+1}}),\\
		&\breve{\Delta}_2 = \mathcal{O}\left(\sqrt{\frac{\text{Pdim}_{\mathcal{G}}\log (T\text{B}_{\mathcal{G}})}{T}}+\sqrt{\frac{\text{Pdim}_{\mathcal{H}}\log (T\text{B}_{\mathcal{H}})}{T}}\right),\\
		&\breve{\Delta}_3 =\mathcal{O} (1)\cdot \mathbb{E}_{(X_t, \eta_t)_{t=0}^T}(\mathop{\sup}_{H} \dot{\mathcal{L}}_T(\Ghat,H) - \mathop{\sup}_{H\in \mathcal{H}} \dot{\mathcal{L}}_T(\Ghat,H)),\\
		&\breve{\Delta}_4 = \mathcal{O} (1)\cdot \mathop{\inf}_{\bar{G}\in \mathcal{G}}\dot{\mathbb{L}}_T(\bar{G}).
	\end{align*}
\end{thm}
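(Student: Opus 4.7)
The plan is to reduce the lag-k case to the lag-1 analysis already carried out in Theorems \ref{mainthm} and \ref{thm2} by lifting the process to the block state space. First I would introduce the auxiliary process $Y_t := X_{[t,t+k-1]}\in \mathbb{R}^{pk}$. Under (\ref{lagk1})-(\ref{lagk2}), the sequence $\{Y_t\}$ is itself a lag-1 Markov chain with conditionally invariant transition kernel $p_{Y_t\mid Y_{t-1}}$, because $Y_t$ is determined by $Y_{t-1}$ and the fresh coordinate $X_{t+k-1}$, whose law given $Y_{t-1}$ is time-invariant. Assumption \ref{a5} then furnishes the block-level analog of Assumption \ref{a2}, so that $\|p_{Y_t}-p_{\infty,k}\|_{L_1}=\mathcal{O}(t^{-\alpha})$. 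With this reduction, the min-max problem (\ref{op3}) becomes a lag-1 $f$-GAN in which the ``input coordinate'' is $Y_t$ and the ``next'' coordinate is $X_{t+s}$.

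Next I would reproduce the variational argument from Section \ref{sec:3}: writing $\mathbb{L}_t(g):=D_f(p_{Y_t,g(\eta_t,Y_t)}\|p_{Y_t,X_{t+k}})$ for iterative generation and $\dot{\mathbb{L}}_t(G):=S^{-1}\sum_{s=1}^S D_f(p_{Y_t,G(\eta_t,Y_t,s)}\|p_{Y_t,X_{t+s}})$ for s-step generation. The condition (\ref{lm0-0}) on $f$ yields a Pinsker-type inequality controlling the squared $L_1$ distance between joint densities by the corresponding $f$-divergence, exactly as in the lag-1 proofs. The error is then decomposed into four pieces: (i) the gap between the empirical average over valid $(t,s)\in\Omega$ and the divergence evaluated at the target time $T$, which is the block-version of Proposition \ref{prop7} and is controlled by Assumption \ref{a5}, giving $\Deltabar_1, \breve\Delta_1=\mathcal{O}(T^{-\alpha/(\alpha+1)})$; (ii) a uniform-deviation term handled by the Rademacher-complexity bound of Proposition \ref{prop3-1}, producing $\Deltabar_2, \breve\Delta_2$; (iii) an inner-maximization gap $\Deltabar_3,\breve\Delta_3$ comparing $\sup_{H\in\mathcal{H}}$ with $\sup_H$; and (iv) a function-class approximation gap $\Deltabar_4,\breve\Delta_4$ for the generator class. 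Each piece is treated precisely as in the lag-1 case, with $Y_t$ replacing the scalar state.

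The passage from the pairwise bound (\ref{thm3-2}) to the joint bound (\ref{thm3-1}) is the main obstacle and requires a block version of Proposition \ref{prop6}. Here I would telescope
\bes
p_{\Xhat_{[T-k+1,T+S]}} - p_{X_{[T-k+1,T+S]}} = \sum_{s=1}^{S} \bigl(p_{Y_{T-k+1}}\prod_{j=1}^{s}\ghat^{\ast,j} \prod_{j=s+1}^{S}p^{\ast,j} - p_{Y_{T-k+1}}\prod_{j=1}^{s-1}\ghat^{\ast,j}\prod_{j=s}^{S}p^{\ast,j}\bigr),
\ees
where $\ghat^{\ast,j}$ and $p^{\ast,j}$ denote the estimated and true one-step block kernels at stage $j$. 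Using $L_1$-non-expansiveness of Markov kernels, each summand reduces to a pairwise $L_1$ discrepancy of the form already bounded; the Cauchy-Schwarz inequality across the $S$ terms then introduces the extra $S^2$ factor, while the mismatch of the initial block law $p_{Y_{T-k+1}}$ with its stationary limit produces the additional $\mathcal{O}(T^{-2\alpha})$ appearing in $\Deltabar_1$, mirroring Remark \ref{rmk-d1}. A technical point within step (ii) is that consecutive blocks $Y_t,Y_{t+1}$ overlap in $k-1$ coordinates and are therefore strongly dependent; however, the dependent-data Rademacher technique of \cite{mcdonald2017rademacher} used in the lag-1 proof applies verbatim to $\{Y_t\}$, since the block chain inherits the same mixing behaviour, so no new concentration tool is required. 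The $S=1$ cases (\ref{thm3-4}) and the bounds for $\breve\Delta_1,\ldots,\breve\Delta_4$ follow as direct specializations of the s-step argument, completing the proof.
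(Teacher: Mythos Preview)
Your proposal is correct and matches the paper's implicit strategy: the supplementary material does not give a standalone proof of Theorem \ref{thm3}, and Section \ref{sec:4} is written as a direct lag-$k$ parallel of Section \ref{sec:3}, with Assumption \ref{a5} replacing Assumption \ref{a2} and the definitions (\ref{thm3-defn}) replacing (\ref{1stepLG})--(\ref{LGD}), so the intended argument is exactly the block-state reduction you outline. Two small notes: (i) your indexing $Y_t:=X_{[t,t+k-1]}$ is shifted from the paper's convention $X_{[t-k+1,t]}$, which is harmless; (ii) in the joint-to-pairwise step the paper (Proposition \ref{prop6}) uses the elementary inequality $(a+b)^2\le 2a^2+2b^2$ after the telescoping sum rather than Cauchy--Schwarz, producing the same $S^2$ factor and the $\mathcal{O}(T^{-2\alpha})$ residual---so your mechanism is right even if the named inequality differs.
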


\begin{remark}
	Analogous to the lag-1 case, the $\mathcal{L}_T$,  $\mathbb{L}_T$, $\dot{\mathcal{L}}_T$, and $\dot{\mathbb{L}}_T$ in the lag-k time series are defined as below:
	\bel{thm3-defn}
	\mathbb{L}_T(g) &=& D_f\left(p_{X_{[T-k+1,T]},g\left(\eta_{T-k+1},X_{[T-k+1,T]}\right)}\Vert p_{X_{[T-k+1,T+1]}}\right),
	\cr\mathcal{L}_T(g,h) &=& \mathbb{E}_{X_{[T-k+1,T]},\eta_{T-k+1}}h\left(X_{[T-k+1,T]},g(\eta_{T-k+1},X_{[T-k+1,T]})\right)
	\cr &&  - \mathbb{E}_{X_{[T-k+1,T+1]}} f^*\left(h(X_{[T-k+1,T+1]})\right),
	\cr  \dot{\mathbb{L}}_T(G) &=& \frac1S \sum_{s=1}^S D_f\left(p_{X_{[T-k+1,T]},G(\eta_{T-k+1},X_{[T-k+1,T]},s)}\Vert p_{X_{[T-k+1,T]}, X_{T+s}}\right),
	\cr \dot{\mathcal{L}}_T(G,H) &=& \frac1{S}\sum\limits_{s=1}^{S}\Big[\mathbb{E}_{X_{[T-k+1,T]},\eta_{T-k+1}}H\left(X_{[T-k+1,T]},G(\eta_{T-k+1},X_{[T-k+1,T]},s),s\right)
	\cr && - \mathbb{E}_{X_{[T-k+1,T]},X_{T+s}} f^*(H(X_{[T-k+1,T]},X_{T+s},s))\Big].
	\eel
	When $\mathcal{H}$ and $\mathcal{G}$ are approximated by appropriate deep neural networks, the $\Deltabar_1$ to $\Deltabar_4$ and $\Deltabreve_1$ to $\Deltabreve_4$ will all converge to 0. Consequently, the joint distribution matching for the iterative generation and pairwise distribution matching for the s-step generation could be guaranteed in lag-k time series.
\end{remark}
\section{Further generalizations to panel data}\label{sec:5}
\noindent In this section, we extend our analysis for image time series to a panel data setting. In particular, we consider a scenario with $n$ subjects, and for each subject $i=1,2,\cdots, n$, we observe a sequence of images $\left\{X_{i,t}, t=1,2,\ldots, T_i\right\}$. Here we allow the time series length $T_i$ for each subject to be different.  Clearly,  this type of setting is frequently encountered when analyzing medical image data. Our objective is to generate images for each subject at future time points.

In the panel data setting, we assume that $\left\{X_{i,t}, t=1,2,\ldots, T_i\right\}$ satisfies the following Markov condition for all subject
\bel{multi1}
X_{i,t}\vert X_{i,t-1},\cdots, X_{i,0} \overset{\text{d}}{=} X_{i,t}\vert X_{i,t-1}, \ \   i=[n], \ t = [T_i].
\eel
We further assume the following invariance condition 
\bel{multi2}
p_{X_{i,t}\vert X_{i,t-1}}(x\vert y) = p_{X_{1,1}\vert X_{1,0}}(x\vert y), \ \   i=[n], \ t = [T_i].
\eel
In other words, we assume the same conditional distribution for different subjects $i$ and time point $t$.


Similar to previous sections, we aim to find a common function $g$ such that for all subjects $i=1,2,\cdots, n$, the generated sequence
\bel{sec5-def joint}
\Xhat_{i,T_i} = X_{i,T_i}, \ \ \ \Xhat_{i,T_i+s} = g(\eta_{T_i+s-1},\Xhat_{i,T_i+s-1}), \ \ 1\le s \le S
\eel
achieves distribution matching
\bel{sec5-joint}
(\Xhat_{i,T_i},\Xhat_{i,T_i+1},\cdots, \Xhat_{i,T_i+S}) \sim (X_{i,T_i},X_{i,T_i+1},\cdots,X_{i,T_i+S}).
\eel
By Theorem \ref{thm0}, such a function $g$ clearly exist. To estimate $g$, we consider the following min-max problem.
\bel{op5}
&& (\widehat{g},\widehat{h}) = \arg \mathop{\min}_{g\in\mathcal{G}} \mathop{\max}_{h\in\mathcal{H}} \widehat{\mathcal{L}}(g,h),
\cr && \widehat{\mathcal{L}}(g,h) = \frac1{n} \sum\limits_{i=1}^n \frac1{T_i}\sum\limits_{t=0}^{T_i-1}\Big[h(X_{i,t},g(\eta_t,X_{i,t}))-f^*(h(X_{i,t},X_{i,t+1}))\Big],
\eel
where as before, $\mathcal{G}, \mathcal{H}$ are spaces of continuous and uniformly bounded functions. To prove the convergence of the generated sequence, we consider two different settings: 1) $T_i$ approaches infinity, while $n$ may either go to infinity or be finite; 2) $T_i$ is finite, while $n$ approaches infinity.

%
%
\subsection{Convergence analysis for $T_i\to \infty$}
\noindent In this subsection, we consider the case that $\mathop{\min}_{1\le i\le n} \left\{T_i\right\} \to \infty$, while $n$ may either go to infinity or be finite. We consider the following sequences
\bel{multi3}
\widehat{X}_{i,T_i} = X_{i,T_i}, \ \ \widehat{X}_{i,T_i+s} = \widehat{g}(\eta_{T_i+s-1}, \widehat{X}_{i, T_i+s-1}), \ \ i = [n],\  s = [S].
\eel

Now we are ready to present the convergence theorem for the generated sequences.
\begin{thm}\label{thm13}
	Suppose $\left\{X_{i,t} \right\}$ satisfies the Markov property (\ref{multi1}) and conditional invariance condition (\ref{multi2}).
	Suppose $\left\{X_{i,t}, \ t=[T_i]\right\}$ satisfies Assumption \ref{a2} for each subject $i$.   Let $\ghat$ be the solution to the f-GAN problem (\ref{op5}) with f satisfying (\ref{lm0-0}). Then,
	\bel{thm13-1}
	\mathbb{E}_{\{\eta_t, X_{i,t}\}}\frac1n\sum_{i=1}^n\left\| p_{\widehat{X}_{i,T_i},\cdots,\widehat{X}_{i,T_i+S}} - p_{X_{i,T_i},\cdots,X_{i, T_i+S}} \right\|_{L_1}^2 \le \underbrace{\dot{\Delta}_1 + \dot{\Delta}_2}_{\text{statistical error}} + \underbrace{\dot{\Delta}_3 + \dot{\Delta}_4}_{\text{approximation error}}
	\eel
	where
	\bes
	&&\dot{\Delta}_1= \mathcal{O}\left(\frac1n \sum\limits_{i=1}^n\left[T_i^{-\frac\alpha{\alpha+1}} + T_i^{-2\alpha}\right]\right),\cr
	&&\dot{\Delta}_2 = \mathcal{O}\left(\frac1n \sum\limits_{i=1}^n \left[\sqrt{\frac{\text{Pdim}_{\mathcal{G}}\log (T_i\text{B}_{\mathcal{G}})}{T_i}}+\sqrt{\frac{\text{Pdim}_{\mathcal{H}}\log (T_i\text{B}_{\mathcal{H}})}{T_i}}\right]\right),
	\cr &&\dot{\Delta}_3 = \mathcal{O}(1)\cdot\mathbb{E}(\mathop{\sup}_{h} \mathcal{L}_{(n)}(\widehat{g},h)- \mathop{\sup}_{h\in\mathcal{H}}\mathcal{L}_{(n)}(\widehat{g},h)),
	\cr &&\dot{\Delta}_4 = \mathcal{O}(1)\cdot\mathop{\inf}_{\bar{g}\in\mathcal{G}} \mathbb{L}_{(n)}(\bar{g}).
	\ees
	Here  $\mathcal{L}_{(n)}(\cdot,\cdot)$ and $\mathbb{L}_{(n)}(\cdot,\cdot)$ are defined as
	\bes
	&&\mathcal{L}_{(n)}(g,h) = \frac1n \sum\limits_{i=1}^n \big[\mathbb{E}_{\eta_{T_i},X_{i,T_i}} h(X_{i,T_i}, g(\eta_{T_i},X_{i,T_i})) - \mathbb{E}_{X_{i,T_i},X_{i,T_i+1}} f^*(h(X_{i,T_i}, X_{i,T_i+1}))\big]
	\cr && \mathbb{L}_{(n)}(g) = \frac1n \sum\limits_{i=1}^n D_f(p_{X_{i,T_i},g(\eta_{T_i},X_{i,T_i})}\Vert p_{X_{i,T_i},X_{i,T_i+1}}).
	\ees
\end{thm}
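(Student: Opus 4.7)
My plan is to follow the proof architecture of Theorem \ref{mainthm} on a per-subject basis and then average, exploiting the cross-subject independence to combine bounds. First I would apply the joint-to-pairwise reduction of Proposition \ref{prop6} subject by subject: for each $i$, one has
\[
\E\bigl\|p_{\widehat X_{i,T_i},\ldots,\widehat X_{i,T_i+S}}-p_{X_{i,T_i},\ldots,X_{i,T_i+S}}\bigr\|_{L_1}^2
\le 2S^2\,\E\bigl\|p_{\widehat X_{i,T_i},\widehat X_{i,T_i+1}}-p_{X_{i,T_i},X_{i,T_i+1}}\bigr\|_{L_1}^2 + \mathcal{O}(T_i^{-2\alpha}),
\]
which after averaging over $i$ yields the $\mathcal{O}(\tfrac{1}{n}\sum_i T_i^{-2\alpha})$ contribution in $\dot\Delta_1$ and reduces the problem to bounding the average pairwise error.

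Next, for each subject I would relate the squared $L_1$ distance to the $f$-divergence $D_f(p_{\widehat X_{i,T_i},\widehat g(\eta,X_{i,T_i})}\|p_{X_{i,T_i},X_{i,T_i+1}})$ via the convexity bound (\ref{lm0-0}), which gives a Pinsker-type inequality $\|p-q\|_{L_1}^2 \lesssim D_f(q\|p)$. Using the variational form and the definitions of $\mathcal{L}_{(n)}$ and $\mathbb{L}_{(n)}$, the averaged pairwise $f$-divergence $\mathbb{L}_{(n)}(\widehat g)$ can be decomposed as
\[
\mathbb{L}_{(n)}(\widehat g) \le \underbrace{\bigl[\mathbb{L}_{(n)}(\widehat g)-\sup_{h\in\mathcal H}\mathcal L_{(n)}(\widehat g,h)\bigr]}_{\text{discriminator-class gap}\,\to\,\dot\Delta_3} + \underbrace{\bigl[\sup_{h\in\mathcal H}\mathcal L_{(n)}(\widehat g,h) - \sup_{h\in\mathcal H}\widehat{\mathcal L}(\widehat g,h)\bigr]}_{\text{uniform concentration}} + \widehat{\mathcal L}(\widehat g,\widehat h)
\]
and then, using the min-max optimality of $(\widehat g,\widehat h)$, $\widehat{\mathcal L}(\widehat g,\widehat h)\le \widehat{\mathcal L}(\bar g,h^*_{\bar g})$ for any $\bar g\in\mathcal{G}$; a second uniform concentration step plus the Fenchel duality identity $\sup_h \mathcal L_{(n)}(\bar g,h)=\mathbb L_{(n)}(\bar g)$ at the population level yields $\mathcal{O}(\inf_{\bar g}\mathbb L_{(n)}(\bar g))\,=\,\dot\Delta_4$.

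The main obstacle is the uniform concentration step producing $\dot\Delta_2$: because samples within a single subject are highly correlated time-series observations while subjects are independent, the Rademacher complexity of the panel empirical process $\widehat{\mathcal L}(g,h)-\mathcal L_{(n)}(g,h)=\tfrac{1}{n}\sum_i\bigl(\tfrac{1}{T_i}\sum_{t=0}^{T_i-1}b_{i,t}(g,h)\bigr)$ must be handled carefully. My plan is to exchange the expectation and the outer supremum, and treat each subject-level process $\tfrac{1}{T_i}\sum_{t}b_{i,t}(g,h)$ using the weak-dependence Rademacher technique of \cite{mcdonald2017rademacher} as invoked in Remark \ref{rmk-d2} and Proposition \ref{prop3-1}, yielding a per-subject bound of order $\sqrt{\mathrm{Pdim}_{\mathcal G}\log(T_i B_{\mathcal G})/T_i}+\sqrt{\mathrm{Pdim}_{\mathcal H}\log(T_i B_{\mathcal H})/T_i}$. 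Averaging these independent (across $i$) empirical processes, together with a standard symmetrization, gives the $\dot\Delta_2$ bound stated.

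Finally, I would collect terms: the pairwise-to-joint reduction contributes $\mathcal O(\tfrac{1}{n}\sum_iT_i^{-2\alpha})$, the analog of Proposition \ref{prop7} applied subject-wise (bounding the drift between the stationary limit and the empirical average via Assumption \ref{a2} for each trajectory) contributes $\mathcal O(\tfrac{1}{n}\sum_i T_i^{-\alpha/(\alpha+1)})$; together these comprise $\dot\Delta_1$. Combined with $\dot\Delta_2$ from the Rademacher step and $\dot\Delta_3,\dot\Delta_4$ from the variational decomposition, the four-term bound (\ref{thm13-1}) follows. The key technical subtlety to verify is that the cross-subject averaging does not degrade the Rademacher bound (independence lets us average rather than worst-case), while the within-subject dependence is absorbed into the per-$T_i$ factor exactly as in the single-subject proof.
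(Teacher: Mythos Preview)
Your overall architecture—apply Proposition \ref{prop6} and the Pinsker-type Lemma (equation (\ref{lm0-0})) per subject, decompose $\mathbb{L}_{(n)}(\widehat g)$ into discriminator-class gap, uniform deviation, and generator-class infimum, and control the uniform deviation via the dependent-sample Rademacher argument of Proposition \ref{prop3-1} together with the drift bound of Proposition \ref{prop7}—is correct and is precisely the adaptation of the single-series proof the paper intends (no separate proof of Theorem \ref{thm13} is given; it is meant to follow Theorem \ref{mainthm} subject by subject).

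One point needs correction, however. You repeatedly invoke cross-subject independence, but the theorem does not assume it, and the paper explicitly remarks after Theorem \ref{thm13} that independence between subjects is unnecessary here. The concentration step requires no cross-subject symmetrization: a plain triangle inequality
\[
\E\sup_{g\in\mathcal G,\,h\in\mathcal H}\Bigl|\mathcal L_{(n)}(g,h)-\widehat{\mathcal L}(g,h)\Bigr|
\;\le\; \frac1n\sum_{i=1}^n \E\sup_{g\in\mathcal G,\,h\in\mathcal H}\Bigl|\,\text{(subject-$i$ population)}-\text{(subject-$i$ empirical)}\,\Bigr|
\]
reduces the problem to $n$ single-subject uniform-deviation terms, each of which is bounded by the within-subject Rademacher argument exactly as in the proof of Theorem \ref{thm2}, yielding the per-$T_i$ rate. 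Their average is $\dot\Delta_2$. Because the target bound is simply the average of the individual-subject rates (not an $n^{-1/2}$-improved pooled rate), no probabilistic coupling across subjects is ever used. Drop the independence hypothesis and your plan goes through verbatim; this is precisely what distinguishes Theorem \ref{thm13} from Theorem \ref{thm14}, where the roles of $n$ and $T$ are reversed and cross-subject independence becomes essential.
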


Whether $n$ is finite or approaches infinity, Theorem \ref{thm13} demonstrates the convergence of the generated sequence when $T_i\rightarrow \infty$. We shall note that the usual independence assumption between subjects is not necessary in Theorem \ref{thm13}. This implies that convergence can be guaranteed even when the observations are dependent.


\subsection{Convergence analysis for $n\to\infty$ and $T$ is finite }
\noindent In this subsection, we consider the case that $n\to \infty$, while $T_i$ is finite. Without loss of generality, we assume that $T_1 = T_2 = \cdots = T_n \triangleq T$. In addition, the following assumption is needed in our analysis.
\begin{asmp}\label{a6}
	For all $i =1,\ldots, n$, the starting point $X_{i,0}$ follows the same distribution as $X_0$, i.e.,
	\bel{a5-1}
	X_{1,0} \overset{\text{d}}{=} \cdots \overset{\text{d}}{=} X_{n,0} \overset{\text{d}}{=} X_0.
	\eel
\end{asmp}

By combining Assumption \ref{a6} with the Markov and conditional invariance conditions, we could have that the sequences $(X_{i,0},\cdots, X_{i,T})$ for all $i=1,\ldots, n$ follow the same joint distribution. Consequently, we can reach the following convergence theorem.

\begin{thm}\label{thm14}
	Suppose $\left\{X_{i,t} \right\}$ satisfies Assumption \ref{a6}, the Markov property (\ref{multi1}) and conditional invariance condition (\ref{multi2}). Further assume	$\int \frac{p_{X_{T+s}}^2(x)}{p_{X_{T-1}}(x)} dx < \infty$ for all $s=[S]$. 
	Then, if the sequences $(X_{i,0},\cdots, X_{i,T})$  are independent across samples $i=1,\ldots, n$, we have for all $i$,
	\bel{thm14-0}
	\mathbb{E} \left\| p_{\Xhat_{i,T},\cdots,\Xhat_{i,T+S}} - p_{X_T,\cdots, X_{T+S}}\right\|_{L_1}^2 \le \underbrace{\ddot{\Delta}_1}_{\text{statistical error}} + \underbrace{\ddot{\Delta}_2 + \ddot{\Delta}_3}_{\text{approximation error}},
	\eel
	where
	\bes
	&& \ddot{\Delta}_1 = \mathcal{O}\left(\sqrt{\frac{\text{Pdim}_{\mathcal{G}}\log (n\text{B}_{\mathcal{G}})}{n}}+\sqrt{\frac{\text{Pdim}_{\mathcal{H}}\log (n\text{B}_{\mathcal{H}})}{n}}\right),
	\cr &&\ddot{\Delta}_2 = \mathcal{O}(1)\cdot\mathbb{E}(\mathop{\sup}_{h} \dot{\mathcal{L}}_{(T)}(\widehat{g},h)- \mathop{\sup}_{h\in\mathcal{H}}\dot{\mathcal{L}}_{(T)}(\widehat{g},h)),
	\cr &&\ddot{\Delta}_3 = \mathcal{O}(1)\cdot \mathop{\inf}_{\bar{g}\in\mathcal{G}} \dot{\mathbb{L}}_{(T)}(\bar{g}).
	\ees
	Here  $\dot{\mathcal{L}}_{(T)}(\cdot,\cdot)$ and $\dot{\mathbb{L}}_{(T)}(\cdot,\cdot)$ are defined as
	\bes
	&&\dot{\mathcal{L}}_{(T)}(g,h)=\frac1T \sum\limits_{t=0}^{T-1} \big[\mathbb{E}_{\eta_t,X_{i,t}}h(X_{i,t},g(\eta_t,X_{i,t}))-\mathbb{E}_{X_{i,t},X_{i,t+1}}f^*(h(X_{i,t},X_{i,t+1}))\big],
	\cr &&\dot{\mathbb{L}}_{(T)}(g) = \frac1T\sum\limits_{t=0}^{T-1} D_f(p_{X_{i,t},g(\eta_t,X_{i,t})}\Vert p_{X_{i,t},X_{i,t+1}}).
	\ees
	Moreover,
	\bel{thm14-1}
	\mathbb{E}\frac1{T}\sum\limits_{t=0}^{T-1} \left\| p_{X_{i,t}, \widehat{g}(\eta_t, X_{i,t})} - p_{X_{i,t},X_{i,t+1}}\right\|_{L_1}^2 \le \underbrace{\ddot{\Delta}_1}_{\text{statistical error}} + \underbrace{\ddot{\Delta}_2 + \ddot{\Delta}_3}_{\text{approximation error}}.
	\eel
\end{thm}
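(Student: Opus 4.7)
The plan is to mirror the proof of Theorem \ref{mainthm} while exploiting the i.i.d.\ structure across the $n$ subjects (guaranteed by Assumption \ref{a6} together with (\ref{multi1})--(\ref{multi2})) in place of the correlated-sample Rademacher machinery required there. First, I verify that the trajectories $(X_{i,0},\ldots,X_{i,T})$ are genuinely i.i.d.\ across $i$: Assumption \ref{a6} equalizes the initial marginals, the common conditional kernel in (\ref{multi2}) forces the full finite-dimensional laws to coincide, and the independence hypothesis closes the argument. Consequently, $\hat{\mathcal{L}}(g,h)$ in (\ref{op5}) is a sample mean of $n$ i.i.d.\ random variables with expectation $\dot{\mathcal{L}}_{(T)}(g,h)$, which is the crucial structural input.

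Next, I would apply the standard $f$-GAN oracle decomposition: $\dot{\mathbb{L}}_{(T)}(\hat{g})$ splits into the variational gap $\dot{\mathbb{L}}_{(T)}(\hat{g}) - \sup_h \dot{\mathcal{L}}_{(T)}(\hat{g},h)$ (which is zero), the discriminator-class restriction $\sup_h \dot{\mathcal{L}}_{(T)}(\hat{g},h) - \sup_{h\in\mathcal{H}}\dot{\mathcal{L}}_{(T)}(\hat{g},h)$ (equal to $\ddot{\Delta}_2$), two empirical-versus-population swaps controlled by $\sup_{g\in\mathcal{G}, h\in\mathcal{H}}|\hat{\mathcal{L}}(g,h)-\dot{\mathcal{L}}_{(T)}(g,h)|$, and the generator approximation floor $\inf_{\bar{g}\in\mathcal{G}}\dot{\mathbb{L}}_{(T)}(\bar{g})$ (equal to $\ddot{\Delta}_3$). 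Thanks to the i.i.d.\ structure across subjects, the uniform deviation is bounded via standard symmetrization and a Dudley entropy integral in terms of $\text{Pdim}_{\mathcal{G}}$ and $\text{Pdim}_{\mathcal{H}}$, producing the $\sqrt{\text{Pdim}\log(n B)/n}$ rate recorded in $\ddot{\Delta}_1$. Notably, no $T^{-\alpha}$ remainder appears here, unlike in Theorems \ref{mainthm} and \ref{thm13}, since we never need to swap a training marginal with $p_{X_T}$ at the level of the empirical average.

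The third step is to convert the resulting $f$-divergence bound into an $L_1^2$ bound. Condition (\ref{lm0-0}) implies the Pinsker-type inequality $D_f(q\|p) \ge c_{a,b}\|p-q\|_{L_1}^2$ for a constant $c_{a,b}>0$, so averaging over $t=0,\ldots,T-1$ and taking expectation yields (\ref{thm14-1}) immediately. For the joint claim (\ref{thm14-0}), I would telescope the difference of product kernels along the generation horizon, writing
\[
p_{\hat{X}_{i,T},\ldots,\hat{X}_{i,T+S}} - p_{X_T,\ldots,X_{T+S}} = p_{X_T}\sum_{r=1}^S\Bigl[\prod_{s<r}\hat{k}\Bigr](\hat{k}-k)\Bigl[\prod_{s>r}k\Bigr],
\]
where $k(\cdot\mid y) = p_{X_1\mid X_0}(\cdot\mid y)$ and $\hat{k}$ is the kernel induced by $\hat{g}$. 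Taking $L_1$ norms and integrating out the trailing coordinates reduces the $r$-th term to $\int p_{\hat{X}_{i,T+r-1}}(y)\|\hat{k}(\cdot\mid y)-k(\cdot\mid y)\|_{L_1}\,dy$.

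The principal obstacle is transferring each of these $S$ marginal integrals, which are against the \emph{generation} marginal $p_{\hat{X}_{i,T+r-1}}$, back to a quantity the training loss actually controls, namely an expectation against $p_{X_{i,T-1}}$. I would apply Cauchy--Schwarz with the density-ratio weight $p_{\hat{X}_{i,T+r-1}}/p_{X_{i,T-1}}$, producing the factor $\sqrt{\int p_{\hat{X}_{i,T+r-1}}^2/p_{X_{i,T-1}}}$; the moment bound $\int p_{X_{T+s}}^2/p_{X_{T-1}}<\infty$ is precisely what controls this ratio, once a short triangle argument replaces the density of $\hat{X}_{i,T+r-1}$ by that of $X_{T+r-1}$ up to a lower-order correction fed by the pairwise bound already established. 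Summing over $r$, squaring, and applying Cauchy--Schwarz one last time reduces the joint $L_1^2$ error to a constant multiple of the pairwise $L_1^2$-average, yielding (\ref{thm14-0}) with the same three terms $\ddot{\Delta}_1+\ddot{\Delta}_2+\ddot{\Delta}_3$.
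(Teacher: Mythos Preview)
Your first three steps—extracting the i.i.d.\ structure across subjects, the $f$-GAN oracle decomposition yielding $\mathbb{E}\,\dot{\mathbb{L}}_{(T)}(\widehat g)\le \ddot\Delta_1+\ddot\Delta_2+\ddot\Delta_3$, and the Pinsker-type inequality from (\ref{lm0-0}) to obtain (\ref{thm14-1})—are correct and coincide with the paper's argument.

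The gap is in your telescoping for (\ref{thm14-0}). By writing the $r$-th increment as $\bigl[\prod_{s<r}\hat k\bigr](\hat k-k)\bigl[\prod_{s>r}k\bigr]$ you place the \emph{generated} kernels on the left, so after integrating out the trailing true kernels the $r$-th summand is an integral against the \emph{generated} marginal $p_{\hat X_{i,T+r-1}}$. Your Cauchy--Schwarz then needs the factor $\bigl(\int p_{\hat X_{i,T+r-1}}^2/p_{X_{T-1}}\bigr)^{1/2}$, and this is \emph{not} controlled by the hypothesis $\int p_{X_{T+s}}^2/p_{X_{T-1}}<\infty$: an $L_1$ bound on $p_{\hat X}-p_X$ (which is all your ``short triangle argument'' can supply) says nothing about $\int(p_{\hat X}-p_X)^2/p_{X_{T-1}}$, since $p_{\hat X}$ may concentrate where $p_{X_{T-1}}$ is arbitrarily small. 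The paper avoids this entirely by telescoping in the opposite order (as in the recursion of Proposition~\ref{prop6}), i.e.\ $\sum_r\bigl[\prod_{s<r}k\bigr](\hat k-k)\bigl[\prod_{s>r}\hat k\bigr]$: now the trailing $\hat k$'s integrate to $1$ and the prefix of true kernels collapses to the \emph{true} marginal $p_{X_{T+r-1}}$, so the moment hypothesis applies directly. Cauchy--Schwarz against $p_{X_{T-1}}$ then leaves $\int p_{X_{T-1}}(x)\|\hat k(\cdot\mid x)-k(\cdot\mid x)\|_{L_1}^2\,dx$, which by Lemma~\ref{lm8} is bounded by $(2/a)\,D_f\bigl(p_{X_{T-1},\widehat g(\eta,X_{T-1})}\,\big\|\,p_{X_{T-1},X_T}\bigr)$; this is one of the $T$ nonnegative summands in $\dot{\mathbb{L}}_{(T)}(\widehat g)$, hence at most $T\cdot\dot{\mathbb{L}}_{(T)}(\widehat g)$, and since $T$, $S$, $M_0$ are fixed constants here the prefactor is absorbed into the $\mathcal{O}(1)$ in $\ddot\Delta_2,\ddot\Delta_3$.
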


\begin{remark}
	The independence assumption is necessary for convergence in Theorem \ref{thm14}, whereas it is not required in Theorem \ref{thm13}. In a panel data setting where the time series length $T$ is finite, we cannot rely on a sufficiently long time series to achieve convergence. In particular, the Proposition \ref{prop6} could no longer be employed to control the difference between joint and pairwise distribution bounds. Thus the proof for Theorem \ref{thm14} differs significantly from previous sections. 
\end{remark}

\section{Simulation studies}\label{sec:simulation}
\noindent In this section, we conduct comprehensive simulation studies to assess the performance of our generations. We begin in  Section \ref{sim1} to consider the generation of a single image time series, then in Section \ref{sim2} generalize to the panel data scenario.


\subsection{Study I: Single Time Series}\label{sim1}
\noindent We consider matrix valued time series to mimic the setting of real image data. Specifically, we consider the following three cases:

\begin{itemize}
	\item[] Case 1. Lag-1 Linear 
	\bes
	X_{t+1} = \phi_1  X_{t}  + \phi_e E_{t+1},
	\ees
	\item[] Case 2. Lag-1 Nonlinear
	\bes
	X_{t+1} = \phi_1 \sin X_{t} ^ \T + \phi_e E_{t+1},
	\ees
	\item[] Case 3. Lag-3 Nonlinear
	\bes
	X_{t+1} =  \phi_1 \cos (X_{t}^\T X_{t-2} X_{t}^\T)+ \phi_2 \sqrt{\max\{0, X_{t-1}^\T\}} 
	+  \phi_e E_{t+1}.
	\ees
\end{itemize}
Here $X_{t}\in\mathbb{R}^{p_1\times p_2}$ and $E_{t+1}\in\mathbb{R}^{p_1\times p_2}$ represent target image and independent noise matrices, respectively, with the image size fixed to be $(p_1,p_2) = (32,32)$. In all three cases, we let the noise matrix $E_{t+1}$ consists of i.i.d. standard normal entries. Moreover, the initialization $X_0$ in Cases 1 and 2 and $X_0$, $X_1$, and $X_2$ in Case 3 are also taken to the matrices with i.i.d. standard normal entries.
It is worth noting that under Case 1 setting,  $X_t$ is column-wise independent, and each column of $X_t$ converges to $\calN(0, \Sigma_\infty)$, where $\Sigma_\infty$ satisfies
$\Sigma_\infty  = \phi_1 \Sigma_\infty \phi_1^\T + \phi_e \phi_e^\T$.
In this simulation, we let $\phi_1\in\mathbb{R}^{32\times 32}$ be a normalized Gaussian matrix with the largest eigenvalue modulus less than 1, $\phi_e\in\mathbb{R}^{32\times 32}$ is a fixed matrix with block shape patterns. Given $\phi_1$ and $\phi_e$, then $\Sigma_\infty$ could be solved easily.  We plot the $\phi_1$, $\phi_e$ and $\Sigma_\infty$ in the supplementary material. Moreover, in Case 2, we set $\phi_1$, $\phi_e\in\mathbb{R}^{32\times 32}$ in a similar manner as Case 1, but certainly, the normal convergence would no longer hold. While in Case 3, we let both $\phi_1$ and $\phi_2\in\mathbb{R}^{32\times 32}$ be normalized Gaussian matrices,  and fix $\phi_e\in\mathbb{R}^{32\times 32}$ as before.

We consider the time series with two different lengths, $T=1000$ and $T=5000$ for training. We set the horizon $S=3$, i.e., generate images in a maximum of $S=3$ steps. We aim to generate a total of 
$T_{new}=500$ image points in a ``rolling forecasting'' style. Specifically, for any $s=1,\ldots, D$, we let the $s$-step generation (denoted as ``s-step GTS'') be
\bes
\Xtil_{T+t_{new}}^{(j)}(s)=\Ghat\left(X_{T+t_{new}-s}, \eta^{(j)}_{T+t_{new}-s}, s\right), 
\ees
where $t_{new}\le T_{new}$ and superscript $j$ indicting the j-th generation. Here $\Ghat$ is estimated using 10,000 randomly selected pairs  $(X_{t}, X_{t+s})$ from the training data. While for iteration generation (denoted as ``iter GTS''), we let
\bes
\Xhat_{T+t_{new}}^{(j)}(s)=\ghat^{(s)}\left(X_{T+t_{new}-s}, [\eta^{(j)}_{T+t_{new}-s},\ldots,\eta^{(j)}_{T+t_{new}-1}]\right),  
\ees
where $\ghat^{(s)}$ is the $s$-times composition of the function $\ghat$. 

The $\Ghat$ for s-step generation (and $\ghat$ for iterative generation) are estimated for KL divergence (i.e., $f(x)=x\log x$) using neural networks in the simulation. Specifically, in Case 1 and 2,
the input $X_t$ initially goes through two fully connected layers, each with ReLU activation functions. Afterward, it is combined with the random noise vector $\eta$. This combined input is then passed through a single fully connected layer. The discriminator has two separate processing branches that embed $X_t$ and $X_{t+1}$ into low-dimensional vectors, respectively. These vectors are concatenated and further processed to produce an output score. The details of network structure can be found in Table \ref{table:nn}. While for Case 3,  in the generator network, $X_{t}$, $X_{t-1}$, and $X_{t-2}$ are each processed independently using three separate fully-connected layers. Afterward, they are combined with a random noise vector and passed through another fully-connected layer before producing the output. The discriminator in this case follows the same process as in Cases 1 and 2.

Different from supervised learning, generative learning does not have universally applicable metrics for evaluating the quality of generated samples \citep{theis2015note,borji2022pros}. Assessing the visual quality of produced images often depends on expert domain knowledge. Meanwhile, the GANs literature has seen significant efforts in understanding and developing evaluation metrics for generative performance. Several quantitative metrics have been introduced, such as Inception Score \citep{salimans2016improved}, Frechet Inception Distance \citep{heusel2017gans}, and Maximum Mean Discrepancy \citep{binkowski2018demystifying}, among others.

\begin{table}
	\centering
	\caption{\label{table:nn} The architecture of generator and discriminator for case 1 and 2.}
	\setlength{\tabcolsep}{0.7mm}{
		\begin{tabular}{*{4}{c}}
			\hline
			&Layer & Type & 
			\\ 
			&1	& fully connected layer (in dims = 1024, out dims = 256) & \\
			&2 & ReLU& \\   
			generator		&3 & fully connected layer (in dims = 256, out dims = 128) \\
			&4 &  ReLU& \\ 
			&5 & concatenate with random vector $\eta$&\\  \bigskip
			& 6 & fully connected layer (in dims = 148, out dims = 1024)
			\\
			discriminator	& \multicolumn{2}{c}{$\left.\begin{matrix}
					X_{t} \xrightarrow{\text{fc+ReLU}} 64 \xrightarrow{\text{fc}}  x_{t} \in \bbR^{64}\\ 
					X_{t+1} \xrightarrow{\text{fc+ReLU }}  x_{t+1} \in \bbR^{64}
				\end{matrix}\right\} 
				\text{concatenate}(x_{t}, x_{t+1})
				\xrightarrow{\text{fc+ReLU}} 64 \xrightarrow{\text{fc }}  \text{score} $ }& \\
			\hline
	\end{tabular} }
\end{table}%

\begin{table}
	\centering
	\caption{\label{table:n1}The NRMSE of mean estimation in Study I under different settings. The best and second best results are marked by {\color{pinegreen}\bf green} and {\bf bold} respectively.}
	\setlength{\tabcolsep}{4.2mm}{
		\begin{tabular}{*{7}{c}}
			\hline
			$T$	&Cases	& Methods & $s = 1$ & $s = 2$  & $s = 3$ &
			\\ 
			&& OLS & $\textcolor{pinegreen}{\bf{0.013}}$ (0.002) &  $\textcolor{pinegreen}{\bf{0.017}}$  (0.003) & $\textcolor{pinegreen}{\bf{0.023}}$   (0.004) & \\
			&case 1& Naive Baseline & 1.563 (0.039) & 1.932 (0.043) & 1.408 (0.098) & \\
			&& iter GTS & $\bf{0.800}$ (0.038) & 0.891 (0.050) & 0.985 (0.064) & \\ \smallskip
			&& s-step GTS & $\bf{0.800}$  (0.038) & $\bf{0.843 }$ (0.047) & $\bf{0.966}$ (0.062) & \\
			&& OLS & 0.973 (0.060) & 1.000 (0.021) & 0.995 (0.008) & \\
			1000 &case 2& Naive Baseline & 1.462 (0.129) & 1.455 (0.152) & 1.373 (0.178) & \\
			&& iter GTS & $\textcolor{pinegreen}{\bf{0.606}}$  (0.053) & $\bf{0.664}$ (0.064) & $\bf{0.702}$ (0.075) & \\ \smallskip
			&& s-step GTS & $\textcolor{pinegreen}{\bf{0.606}}$  (0.053) &  $\textcolor{pinegreen}{\bf{0.609}}$   (0.057) &  $\textcolor{pinegreen}{\bf{0.615}}$  (0.061) & \\

			&& OLS & 0.608 (0.021) & 0.609 (0.021) & 0.609 (0.020) & \\
			&case 3& Naive Baseline & 0.845 (0.034) & 0.843 (0.034) & 0.844 (0.033) & \\
			&& iter GTS & $\textcolor{pinegreen}{\bf{0.598}}$  (0.020) & $\textcolor{pinegreen}{\bf{0.595}}$ (0.020) & $\textcolor{pinegreen}{\bf{0.598}}$ (0.020) & \\ \bigskip
			&& s-step GTS & $\textcolor{pinegreen}{\bf{0.598}}$  (0.020) & $\textcolor{pinegreen}{\bf{0.595}}$ (0.020) & $\bf{0.602}$ (0.020) & \\
			&& OLS & $\textcolor{pinegreen}{\bf{0.007}}$  (0.001) &  $\textcolor{pinegreen}{\bf{0.010}}$ (0.002) & $\textcolor{pinegreen}{\bf{0.013}}$  (0.002) & \\
			&case 1& Naive Baseline & 1.563 (0.040) & 1.932 (0.043) & 1.408 (0.099) & \\
			&& iter GTS & $\bf{0.615}$ (0.098) & 0.717 (0.163) & 0.851 (0.226) & \\ \smallskip
			&& s-step GTS & $\bf{0.615}$ (0.098) & $\bf{0.707}$ (0.167) & $\bf{0.749}$ (0.079) & \\
			
			&& OLS & 0.973 (0.060) & 1.001 (0.021) & 0.995 (0.007) & \\
			5000 & case 2& Naive Baseline & 1.468 (0.132) & 1.452 (0.157) & 1.377 (0.181) & \\
			&& iter GTS & $\textcolor{pinegreen}{\bf{0.470}}$  (0.046) & $\bf{0.532}$ (0.056) & $\bf{0.576}$ (0.066) & \\ \smallskip
			&& s-step GTS & $\textcolor{pinegreen}{\bf{0.470}}$ (0.046) & $\textcolor{pinegreen}{\bf{0.470}}$ (0.048) & $\textcolor{pinegreen}{\bf{0.500}}$ (0.054) & \\
			&& OLS & 0.608 (0.020) & 0.610 (0.021) & 0.609 (0.020) & \\
			&case 3& Naive Baseline & 0.845 (0.032) & 0.846 (0.032) & 0.850 (0.033) & \\
			&& iter GTS & $\textcolor{pinegreen}{\bf{0.578}}$  (0.020) & $\bf{0.574}$ (0.020) & $\bf{0.595}$ (0.020) & \\
			&& s-step GTS & $\textcolor{pinegreen}{\bf{0.578}}$ (0.020) & $\textcolor{pinegreen}{\bf{0.566}}$  (0.020) & $\textcolor{pinegreen}{\bf{0.592}}$  (0.021) & \\
			\hline
		\end{tabular}
	}
\end{table}
Nonetheless, achieving a consensus regarding the evaluation of generative models remains an unresolved issue. In our approach, we compute the mean of the generated samples and present the results as the normalized root mean squared error (NRMSE) of the mean estimation. Specifically, for a given step $s$, let 
$\Xtil_{T+t_{new}}(s)=(1/J)\sum_{j=1}^J\Xtil_{T+t_{new}}^{(j)}(s)$ and $\Xhat_{T+t_{new}}(s)=(1/J)\sum_{j=1}^J\Xhat_{T+t_{new}}^{(j)}(s)$ be the estimated mean of the s-step and iterative generated samples, respectively. The NRMSE of the iterative generation is defined as
\bes
\text{NRMSE}(s) = \|\hX_{T+t_{new}}(s)  - \bbE X_{T+t_{new}}\|_F / \|\bbE X_{T+t_{new}}\|_F,
\ees
where  $\|\cdot\|_{F}$ denotes the Frobenius norm. The NRMSE of the s-step generation can be defined similarly.

The performance of our iterative and s-step generation are compared with two benchmark approaches. We first consider a naive baseline in which the prediction for $X_{T+t_{new}}$ is taken from the observation $s$-steps ahead for a  given $s=1,\ldots, S$, meaning that $\Xhat_{T+t_{new}}=X_{T+t_{new}-s}$. In addition, we consider a simple linear estimator obtained using Ordinary Least Squares (OLS). Specifically, the linear coefficients are estimated with a correctly specified order of lag, i.e., $\widehat{\phi}_1 = \argmin_{\phi} \sum_{t = 1}^{T}\| X_{t+1} - \phi X_{t}\|_F$ for Cases 1 and 2, and $(\widehat{\phi}_1, \widehat{\phi}_2, \widehat{\phi}_3) = \argmin_{\phi_1, \phi_2, \phi_3} \sum_{t = 1}^{T}\| X_{t+1} - \phi_1 X_{t} - \phi_2 X_{t-1} - \phi_3 X_{t-2}\|_F$ for Case 3. Clearly, in Case 1, OLS is a suitable choice as the model is linear. However, for Cases 2 and 3, OLS is mis-specified. 

We repeat the simulation for 100 times and report in Table \ref{table:n1}  the mean and standard deviation of the NRMSE for $s=1,2,3$ using different approaches. It is clear that our s-step and iterative generations exhibit competitive performance across almost all settings, particularly in the nonlinear Cases 2 and 3. In Case 1, the original model is linear, and as expected, OLS achieves the minimum NRMSE. Moreover, as $s$ increases, the problem becomes more challenging. However, both s-step and iterative generation maintain robust performance across different $s$. It is important to note that when $s=1$, the iterative and s-step generation methods are equivalent. For $s=2,3$, the s-step generation generally exhibits a slightly smaller NRMSE, though the difference is not significant.



\subsection{Study II: Multiple Time Series}\label{sim2}
\noindent
In this subsection, we consider a panel data setting with multiple time series. We consider two different sample sizes, $n=200, 500$ while fixing the time series length $T=20$.  We set the horizon $S=3$ as before, i.e., generate images in a maximum of $S=3$ steps for each subject. 

\begin{table}
	\centering
	\caption{\label{table:multi_n}The NRMSE of mean estimation in Study II under different settings. The best and second best results are marked by {\color{pinegreen}\bf green} and {\bf bold} respectively.}
	\setlength{\tabcolsep}{4.2mm}{
		\begin{tabular}{*{7}{c}}
			\hline
			$(n,T)$	&Cases	& Methods & $s = 1$ & $s = 2$  & $s = 3$ &
			\\ 
			&& OLS & $\textcolor{pinegreen}{\bf{0.037}}$  (0.001) & $\textcolor{pinegreen}{\bf{0.070}}$ (0.002) & $\textcolor{pinegreen}{\bf{0.096}}$  (0.003) & \\
			&case 1& Naive Baseline & 1.563 (0.039) & 1.925 (0.043) & 1.380 (0.097) & \\
			&& iter GTS & $\bf{0.729 }$ (0.036) & 0.813 (0.046) & 0.893 (0.056) & \\ \smallskip
			&& s-step GTS & $\bf{0.729 }$ (0.036) & $\bf{0.780}$ (0.044) & $\bf{0.885}$ (0.056) & \\
			&& OLS & 0.985 (0.029) & 1.000 (0.005) & 0.999 (0.001) & \\
			$(200,20)$&case 2& Naive Baseline & 1.459 (0.107) & 1.478 (0.127) & 1.415 (0.156) & \\
			&& iter GTS & $\textcolor{pinegreen}{\bf{0.583}}$ (0.049) & $\textcolor{pinegreen}{\bf{0.647}}$ (0.060) & $\textcolor{pinegreen}{\bf{0.690}}$  (0.071) & \\ \smallskip
			&& s-step GTS & $\textcolor{pinegreen}{\bf{0.583}}$ (0.049) & $\bf{0.689 }$ (0.068) &  $\bf{0.710}$ (0.077) & \\
			&& OLS & 0.617 (0.018) & 0.620 (0.018) & 0.625 (0.015) & \\
			&case 3& Naive Baseline & 0.847 (0.032) & 0.845 (0.033) & 0.846 (0.034) & \\
			&& iter GTS &  $\textcolor{pinegreen}{\bf{0.593}}$ (0.019) & $\textcolor{pinegreen}{\bf{0.590}}$ (0.020) & $\textcolor{pinegreen}{\bf{0.594}}$ (0.020) & \\ \bigskip
			&& s-step GTS & $\textcolor{pinegreen}{\bf{0.593}}$ (0.019) & $\bf{0.592}$ (0.020) & $\bf{0.597}$ (0.020) & \\
			
			&& OLS &   $\textcolor{pinegreen}{\bf{0.037}}$  (0.001) & $\textcolor{pinegreen}{\bf{0.069}}$ (0.002) & $\textcolor{pinegreen}{\bf{0.096}}$ (0.003) & \\
			&case 1& Naive Baseline & 1.564 (0.039) & 1.925 (0.043) & 1.379 (0.098) & \\
			&& iter GTS & $\bf{0.611}$ (0.040) & 0.667 (0.053) & $\bf{0.755}$ (0.072) & \\ \smallskip
			&& s-step GTS & $\bf{0.611}$ (0.040) &$\bf{0.647}$ (0.066) & 0.763 (0.056) & \\
			
			&& OLS & 0.985 (0.029) & 1.000 (0.005) & 0.999 (0.001) & \\
			$(500,20)$&case 2& Naive Baseline & 1.458 (0.109) & 1.478 (0.128) & 1.414 (0.156) & \\
			&& iter GTS & $\textcolor{pinegreen}{\bf{0.511}}$ (0.050) & $\textcolor{pinegreen}{\bf{0.578}}$  (0.060) & $\textcolor{pinegreen}{\bf{0.625}}$ (0.070) & \\ \smallskip
			&& s-step GTS &$\textcolor{pinegreen}{\bf{0.511}}$  (0.050) & $\bf{0.639}$ (0.069) & $\bf{0.682}$ (0.080) & \\
			&& OLS & 0.617 (0.018) & 0.619 (0.018) & 0.625 (0.015) & \\
			&case 3& Naive Baseline & 0.847 (0.032) & 0.845 (0.033) & 0.846 (0.034) & \\
			&& iter GTS & $\textcolor{pinegreen}{\bf{0.579}}$ (0.019) &$\textcolor{pinegreen}{\bf{0.577}}$ (0.020) & $\textcolor{pinegreen}{\bf{0.592}}$ (0.020) & \\
			&& s-step GTS & $\textcolor{pinegreen}{\bf{0.579}}$ (0.019) & $\bf{0.579}$ (0.020) & $\bf{0.593}$ (0.021) & \\
			\hline
		\end{tabular}
	}
\end{table}

Analogous to the single time series,  we repeat the simulation for 100 times and report in Table \ref{table:multi_n}  the mean and standard deviation of the NRMSE for $s=1,2,3$ using different approaches. Table \ref{table:multi_n}  shows a similar pattern as Table \ref{table:n1}. Both iterative and s-step generated images achieve the minimum NRMSE in Case 2 and 3. While under the linear Case 1, the OLS continues to achieve the lowest NRMSE. One notable difference in Table \ref{table:multi_n} is that the iterative generation achieves a lower NRMSE compared to the s-step generation in this case. One potential explanation for this is the increase in sample size. In comparison to the single time series setting, a sample size of $n=200$ or $500$ allows the iterative approach to obtain a better $g$ estimation, which in turn enhances the image generation process.

\section{The ADNI study}\label{sec:real}
\noindent
Driven by the goal of understanding the brain aging process, we in this section study real brain MRI data from the Alzheimer's Disease Neuroimaging Initiative (ADNI). 
Introduced by \cite{petersen2010alzheimer}, ADNI aims to investigate the progression of Alzheimer's disease (AD) by collecting sequential MRI scans of subjects classified as cognitively normal (CN), mildly cognitively impaired (MCI), and those with AD. 

Our study focuses on analyzing the brain's progression during the MCI stage. We  include a total of 565 participants from the MCI group in our analysis, each having a sequence of T1-weighted MRI scans with lengths varying between 3 and 9. We approach this as an imaging time series generation problem with multiple samples. However, a significant challenge with the dataset is the short length of each sample ($T_i\le 9$), a common issue in brain imaging analysis. 
Given the short length of these samples, we do not generate new images beyond a specific point $T$, as it would be difficult to evaluate their quality. Instead, we divide the dataset into a training set consisting of 450 samples and a testing set comprising 115 samples. We then train the generator $G$ using the training set and generate image sequences for the testing set given the starting point $X_0$. 
As in the simulation study, we also consider a naive baseline in which the prediction for $X_{i,s}$ is taken from the observation $s$-steps ahead, meaning that $\Xhat_{i, s}=X_{i, 0}$. It is worth mentioning that OLS is not suitable for MRI analysis and, as such, is not included in this context.


In our study, all brain T1-weighted MRI scans are processed through a standard pipeline which begins with a spatial adaptive non-local means (SANLM) denoising filter \citep{manjon2010adaptive}, then followed by resampling, bias-correction, affine-registration and unified segmentation \citep{ashburner2005unified}, skull-stripping and cerebellum removing. Each MRI is then locally intensity corrected and spatially normalized into the Montreal Neurological Institute (MNI) atlas space \citep{ashburner2007fast}.
These procedures result in processed images of size $169\times 205 \times 169$.
We further rescale the intensities of the resulting images to a range of $[-1,1]$. We select the central axial slice from each MRI and crop it to a size of $144 \times 192$ by removing the zero-valued voxels.

Figure \ref{fig:real-sequence} plots the original, iterative and s-step generated MRI sequence along with their difference to $X_0$ for one subject in the test set.  As shown by the original images, the brain changes gradually as age increases. This pattern is clearly captured by both iterative and s-step generations. 
To further assess the generated MRI images, we plot the starting image $X_0$, true image after s step (i.e., $X_{s}$), iteratively and $s$-step generated images (i.e., $\Xhat_{s}$ and $\Xtil_{s}$) for three subjects in Figure \ref{fig:real-subjects}.

\begin{figure}
	\begin{overpic}[width=1\textwidth]{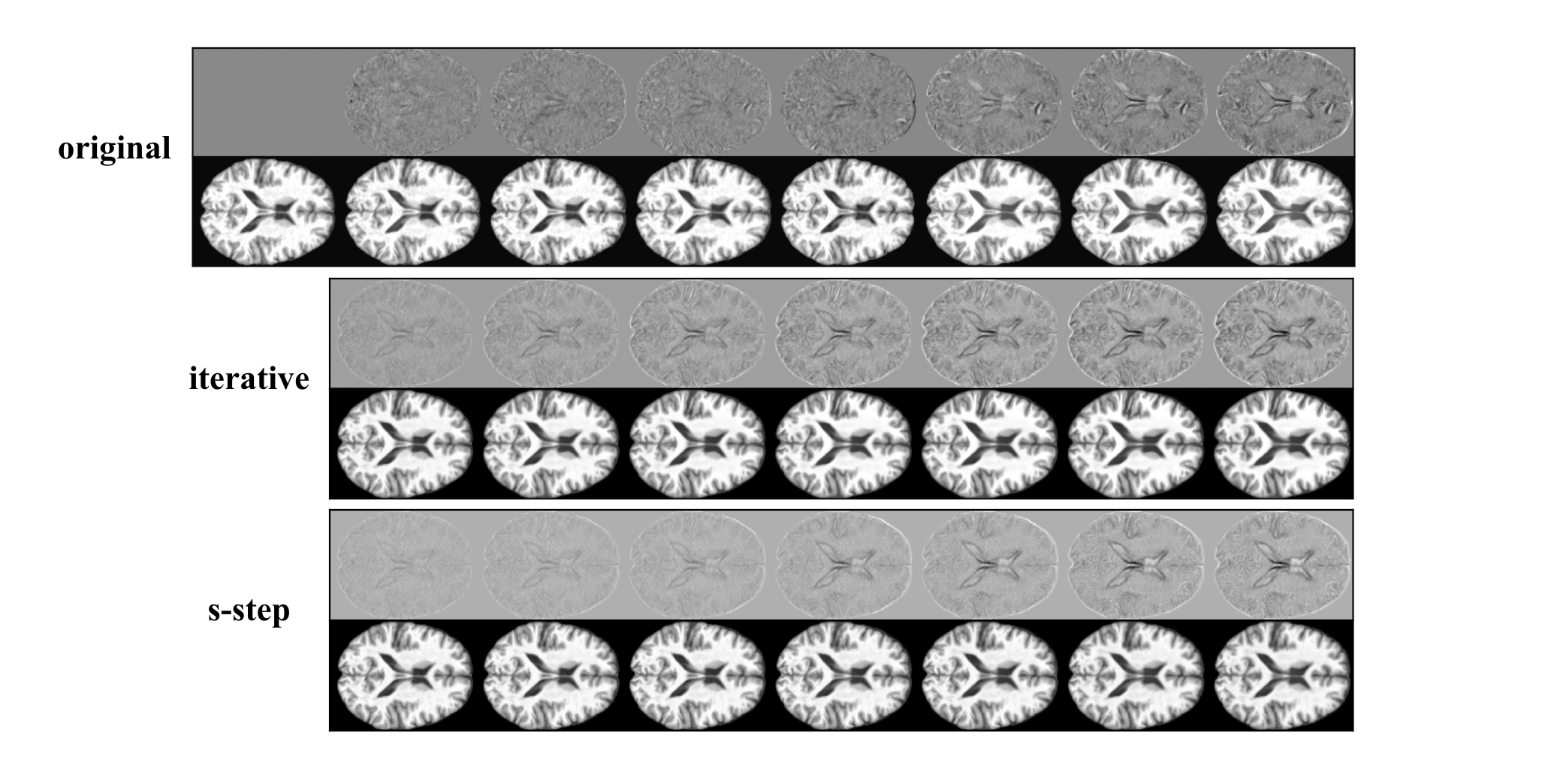}
		\put (88,42){$X_s - X_0$}
		\put (88,36){$X_s$}
		
		\put (88,27){$\hX_s - X_0$}	
		\put (88,21){$\hX_s$}
		
		\put (88,12){$\Xtil_s - X_0$}
		\put (88,6){$\Xtil_s$}
		
	\end{overpic}
	\centering
	\caption{The original, iterative and s-step generated MRI sequence $s=0,1,\ldots, 7$ along with their difference to $X_0$ for one subject.} 
\label{fig:real-sequence}
\end{figure}

Although the differences are subtle, we can observe that the image $X_{s}$ and the generated images, $\Xhat_{s}$ and $\Xtil_{s}$, are quite similar, but they all deviate from $X_0$ in several crucial regions. We highlight four of these regions in Figure \ref{fig:real-subjects}, including: a) cortical sulci, b) ventricles, c) edge of ventricles, and d) anterior inter-hemispheric fissure. 
More specifically, we can observe (from subjects 1, 2, and 3, region a) that the cortical sulci widen as age increases. The widening of cortical sulci may be associated with white matter degradation \citep{drayer1988imaging,walhovd2005effects}. This phenomenon is also observed in patients with Alzheimer's Disease \citep{migliaccio2012white}.
Additionally, the brain ventricles expand from time $0$ to $s$ as suggested by subjects 1 and 3, region b. The enlargement of ventricles during the aging process is one of the most striking features in structural brain scans across the lifespan \citep{macdonald2021mri}. Moreover, we notice that the edge of the ventricles becomes softer (darker region of subject 1, region c), and there is an increased presence of low signal areas adjacent to the ventricles (subject 2, regions b and c). From a clinical perspective, this observation suggests the existence of periventricular interstitial edema, which is linked to reduced ependyma activity and brain white matter atrophy \citep{todd2018ventricular}. Lastly, the anterior interhemispheric fissure deepens with aging, as demonstrated in subject 1, region d. In conclusion, the generated samples can potentially aid clinical analyses in identifying age-related brain issues.

\begin{figure}
\centering
\begin{overpic}[width=1\textwidth]{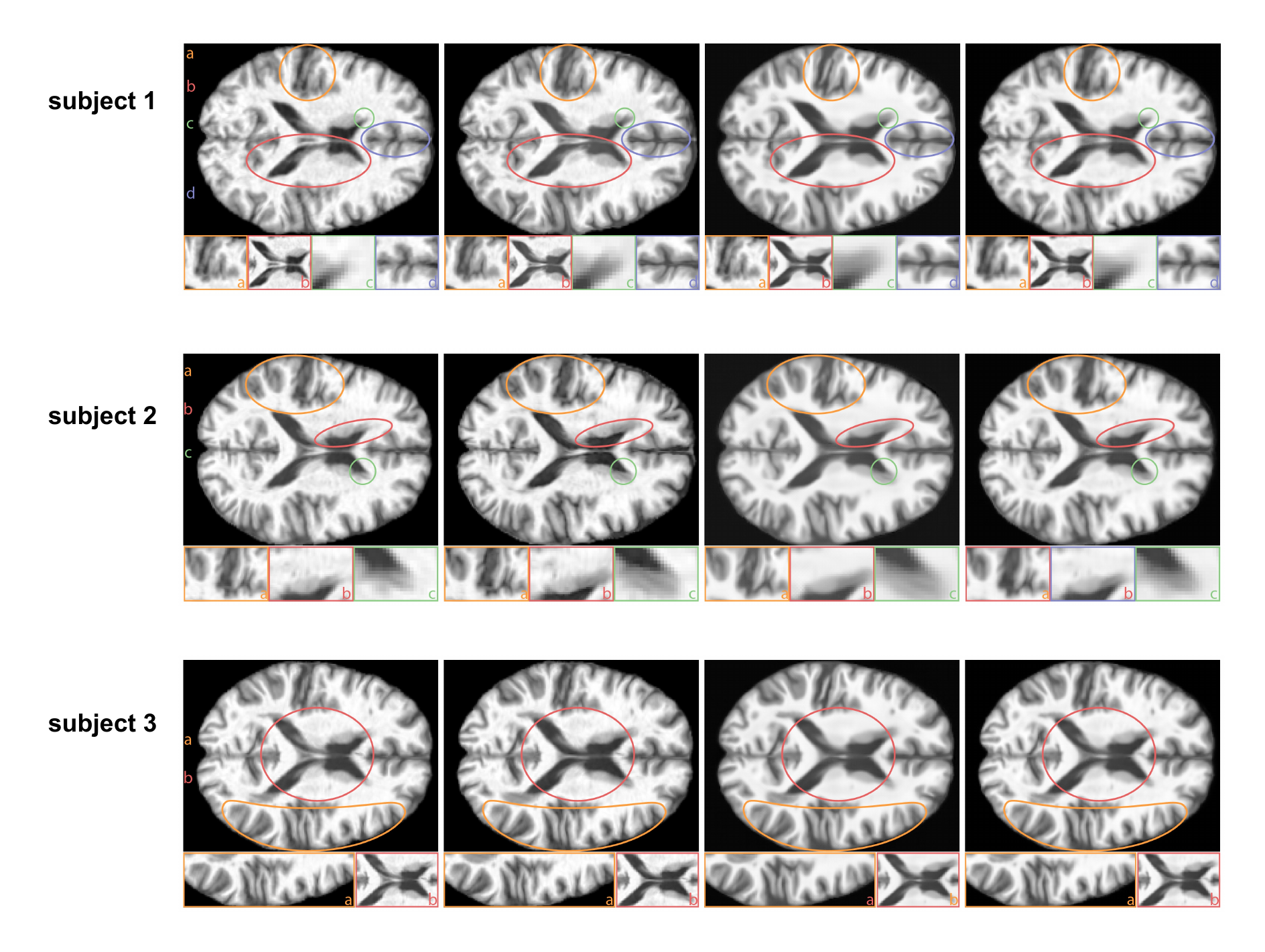}
	\put (18,50){$X_0$ (age 71)}	
	\put (38,50){$X_{s}$ (age 80)}
	\put (61,50){iterative}
	\put (82,50){s-step}
	\put (18,25.5){$X_0$ (age 72)}	
	\put (38,25.5){$X_{s}$ (age 78)}
	\put (61,25.5){iterative}
	\put (82,25.5){s-step}
	\put (18,1.5){$X_0$ (age 72)}	
	\put (38,1.5){$X_{s}$ (age 77)}
	\put (61,1.5){iterative}
	\put (82,1.5){s-step}
\end{overpic}
\caption{
	An illustration of the generated samples for three subjects in the testing group. For each subject, we present from left to right: starting image $X_{0}$, target $X_{s}$, and two generated samples $\hX_{s}$ and $\Xtil_{s}$ by iterative and s-step generation respectively. Moreover, we highlight (with different colors) four different regions that $X_{s}$ (along with $\Xhat_{s}$ and $\Xtil_{s}$) are most different from $X_0$, including:   {\bf {\color{MyYellow}{a)}} cortical sulci, {\color{MyRed}{b)}} ventricles, {\color{MyGreen} {c)}}
		edge of ventricles, {\color{MyPurple}d)} anterior interhemispheric fissure}.}
\label{fig:real-subjects}
\end{figure}

As discussed before, there is a lack of universally application metric for evaluating the quality of generated images. In this study, we consider three metrics to measure the difference between the target $X_{s}$ and our generations $\hX_{s}$: structural similarity index measure (SSIM), peak signal-to-noise ratio (PSNR), along with the previously introduced NRMSE. 
More specifically, SSIM calculates the similarity score between two images by comparing their luminance, contrast, and structure. Given two images $X$ and $Y$, it is defined as
\bes
\text{SSIM} (X,Y) = \frac{(2\mu_x \mu_y + c_1)(2\sigma_{xy}+c_2)}{(\mu_x^2+\mu_y^2+c_1)(\sigma_x^2+\sigma_y^2+c_2)},
\ees
where $(\mu_x,\mu_y)$, $(\sigma_x^2,\sigma_y^2)$ are the mean and variance of pixel values in $X$ and $Y$ respectively.  The $\sigma_{xy}$ is the covariance of $X$ and $Y$ and $c_1$, $c_2$ are constants, to be specified in the supplementary material.	
PSNR is a widely used engineering term for measuring the reconstruction quality for images subject to lossy compression. It is typically defined using the mean squared error:
\bes
\text{PSNR}(X,Y) = 10 \log_{10}\left( c^2/\text{MSE}(X,Y)\right),
\ees
where $c$ is the maximum pixel value among $X,Y$ and $\text{MSE}(X,Y) =  \|X -Y\|_F^2/(D_1D_2)$.
Clearly, better image generation performance is indicated by higher values of SSIM and PSNR, as well as smaller values of NRMSE.

In Figure \ref{fig:real-table}, we present for each $s$ the mean SSIM, PSNR, and NRMSE over all subjects in the testing set.
The results clearly indicate that both s-step and iterative generations outperform the benchmark in all three metrics across almost all the $s=1,\ldots, 9$, providing strong evidence of the high quality of our generated images. As $s$ increases, the generation problem becomes more challenging. Consequently, we observe a decrease in both SSIM and PSNR for both iterative and s-step generations, indicating a decline in generation quality. On the other hand, the NRMSE for both approaches increase, suggesting a higher level of error in the generated images. When further comparing the iterative and s-step generation, the s-step generation shows a dominating performance in this study. The major reason is due to the limited sample size. With fewer than 500 subjects in training and the observed sequence for each subject being less than 9, a direct s-step generation approach proves advantageous compared to the iterative generation method.

\begin{figure}
\centering
\includegraphics[width=1\columnwidth]{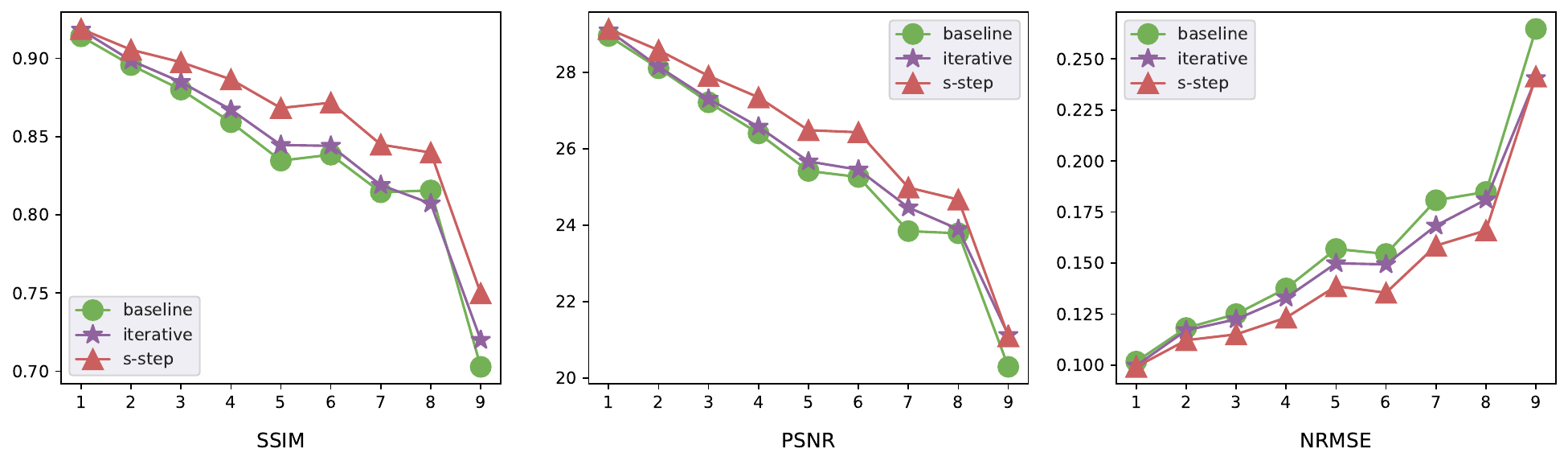}

\caption{The mean SSIM, PSNR, and NRMSE over all testing subjects in the ADNI data analysis for $s=1,\ldots, 9$.
}
\label{fig:real-table}
\end{figure}

In summary, this study further validates the effectiveness of our approach in generating image sequences. Incorporating the generated image sequences as data augmentation \citep{chen2022generative} could further enhance the performance of downstream tasks, such as Alzheimer's disease detection \citep{xia2022adversarial}. As mentioned before, AD is the most prevalent neurodegenerative disorder, progressively leading to irreversible neuronal damage.
The early diagnosis of AD and its syndromes, such as MCI, is of significant importance. We believe that the proposed image time series learning offers valuable assistance in understanding and identifying AD, as well as other aging-related brain issues.


\newpage
\bibliography{reference}
\appendix
\appendixpage
\addappheadtotoc
\noindent In the supplementary material, we provide  proofs in the following order: Theorem \ref{thm0}, Theorem \ref{thm1}, Proposition \ref{prop3}, Example \ref{prop4new}, Proposition \ref{prop6}, Proposition \ref{prop7}, Proposition \ref{prop3-1}, Theorem \ref{thm2}, Theorem \ref{mainthm}, Theorem  \ref{thm14}, Proposition \ref{prop: delta3}. Moreover, we include three additional lemmas. Furthermore, we present more details on the numerical implementations. 
\section{Proofs}\label{appendix:proofs}
\subsection{Proof of Theorem \ref{thm0}}
\begin{proof}
	By Lemma \ref{lm1}, we can find a measurable function $g$ and random variable $\Xhat_1^0$ such that $\Xhat_1^0 = g(\eta_0,\Xhat_0^0)$ and $(X_0,X_1)\overset{\text{d}}{=} (\Xhat_0^0,\Xhat_1^0)$. Then define $\Xhat_t^0 = g(\eta_{t-1}, \Xhat_{t-1}^0)$ for $t \ge 2$, we will use mathematical induction to prove that $(X_0,X_1,\cdots,X_s) \overset{\text{d}}{=} (\Xhat_0^0,\Xhat_1^0,\cdots,\Xhat_s^0)$ for $s\ge 1$.\\
	Because the construction of $g$ makes $(X_0,X_1)\overset{\text{d}}{=} (\Xhat_0^0,\Xhat_1^0)$, the conclusion is automatically held when $n=1$.\\
	For $s>1$, suppose $(X_0,X_1,\cdots,X_{s-1}) \overset{\text{d}}{=} (\Xhat_0^0,\Xhat_1^0,\cdots,\Xhat_{s-1}^0)$, we will prove that $(X_0,X_1,\cdots,X_s) \overset{\text{d}}{=} (\Xhat_0^0,\Xhat_1^0,\cdots,\Xhat_s^0)$ below.\\
	Note that $(X_0,X_1)\overset{\text{d}}{=} (\Xhat_0^0,\Xhat_1^0)$, then $\Xhat_1^0\vert \Xhat_0^0 = x\overset{\text{d}}{=} X_1\vert X_0 = x$. Which means $X_1\vert X_0 = x \overset{\text{d}}{=} g(\eta_0, x)$. \\
	Because $\eta_{s-1}$ is independent of $\Xhat_0^0$ and $\left\{\eta_i\right\}_{i=0}^{s-2}$, $\eta_{s-1}$ is independent of $\left\{\Xhat_i^0\right\}_{i=0}^{s-2}$, then:
	$$
	\Xhat_s^0\vert \Xhat_{s-1}^0=x_{s-1},\cdots,\Xhat_0^0=x_0 = g(\eta_{s-1}, x_{s-1})
	$$
	Because $\eta_{s-1} \overset{\text{d}}{=} \eta_0$, combining the given conditions, we have:
	\begin{align*}
		g(\eta_{s-1}, x_{s-1}) &\overset{\text{d}}{=} g(\eta_0, x_{s-1}) \\
		&= \Xhat_1^0 \vert \Xhat_0^0 = x_{s-1} \\
		&\overset{\text{d}}{=} X_1\vert X_0 = x_{s-1} \\
		&\overset{\text{d}}{=} X_s\vert X_{s-1} = x_{s-1} \\
		&\overset{\text{d}}{=} X_s\vert X_{s-1} =x_{s-1},\cdots,X_0=x_0
	\end{align*}
	which shows that: 
	$$\Xhat_s^0\vert \Xhat_{s-1}^0=x_{s-1},\cdots,\Xhat_0^0=x_0 \overset{\text{d}}{=} X_s\vert X_{s-1} =x_{s-1},\cdots,X_0=x_0
	$$
	Combining $(X_0,X_1,\cdots,X_{s-1}) \overset{\text{d}}{=} (\Xhat_0^0,\Xhat_1^0,\cdots,\Xhat_{s-1}^0)$, therefore: 
	$$
	(X_0,X_1,\cdots,X_s) \overset{\text{d}}{=} (\Xhat_0^0,\Xhat_1^0,\cdots,\Xhat_s^0)
	$$
\end{proof}

\subsection{Proof of Theorem \ref{thm1}}
\begin{proof}
	We define the measurable function $G$ such that:
	\bes
	G(\cdot,\cdot,1) = g(\cdot,\cdot)
	\ees
	For $t\ge 2$, we apply Lemma \ref{lm1} to $X_0,X_t$ and $\eta_{t}$, there exist a measurable function $G_t$ such that:
	\bes
	X_t \overset{\text{d}}{=}G_t(\eta_{t}, X_0)
	\ees
	Let $G(\cdot,\cdot, t) = G_t(\cdot,\cdot)$, $\Xtil_t^0 = G(\eta_t, X_0, t)$, then:
	\bes
	\Xtil_t^0 \overset{\text{d}}{=} X_t
	\ees
	Therefore, such function $G$ satisfies the condition.
	%
\end{proof}


\subsection{Proof of Proposition \ref{prop3}}
\begin{proof}

	For $G\in\mathcal{G}, H\in\mathcal{H}$ and $1\le s\le S$, define:
	\begin{align*}
		&G_s(\cdot,\cdot) := G(\cdot,\cdot,s) \\
		&H_s(\cdot,\cdot) := H(\cdot,\cdot,s)
	\end{align*}
	
	Then (\ref{LhatGD}) can be rewritten as:
	\bel{LGD1new}
	\tilde{\mathcal{L}}(G,H) = \frac{1} {|\Omega|}\sum\limits_{(t,s)\in \Omega}[H_s(X_{t},G_s(\eta_{t},X_{t})) - f^*(H_s(X_{t},X_{t+s}))]
	\eel
	For $1\le s\le S$, let:
	\bes
	\tilde{\mathcal{L}}_s(G_s,H_s) = \frac1{\vert\Omega\vert}\sum\limits_{t=0}^{T-s}[H_s(X_{t},G_s(\eta_{t},X_{t})) - f^*(H_s(X_{t},X_{t+s}))]
	\ees
	then,
	\bes
	\tilde{\mathcal{L}}(G,H) = \sum\limits_{s=1}^{S}\tilde{\mathcal{L}}_s(G_s,H_s)
	\ees
	According to the above equation, the optimization problem (\ref{LhatGD}) can be decomposed as the following $S$ optimization problems:
	\bel{op3}
	(\Ghat_s,\Hhat_s) = \arg\mathop{\min}_{G_s\in\mathcal{G}_s} \mathop{\max}_{H_s\in \mathcal{H}_s} \tilde{\mathcal{L}}_s(G_s,H_s) \ , \ 1\le s\le S
	\eel
	In particular, let $s = 1$ and note that $\tilde{\mathcal{L}}_1(G_1,H_1) =\frac{T}{\vert\Omega\vert}\widehat{\mathcal{L}}(G_1,H_1)$, we have:
	\bes
	(\Ghat_1,\Hhat_1) = (\ghat,\hhat)
	\ees
\end{proof}
\subsection{Proof of Example \ref{prop4new}}
\begin{proof}
	Obviously, for $t\ge 0$, $X_t$ follows gaussian distribution with mean $\nu_t = 0$. Suppose $\Sigma_t$ is the covariance matirx of $X_t$, then the covariance matrices sequence satisfies the following recurrence relationship:
	\bel{Sigmat}
	\Sigma_{t+1} = \phi_2 \Sigma_t \phi_2^T + \phi_1 \phi_1^T
	\eel
	In addition, $\Sigma_t$ converges to a symmetric matrix $\Sigma$, whose explicit expression is:
	\bes
	\Sigma =  \sum\limits_{t=0}^\infty \phi_2^t \phi_1 \phi_1^T (\phi_2^T)^t
	\ees
	In (\ref{Sigmat}), let $t\to\infty$, we have:
	\bes
	\Sigma = \phi_2 \Sigma \phi_2^T + \phi_1 \phi_1^T
	\ees
	then,
	\bes
	\Sigma_{t+1}-\Sigma_t = \phi_2(\Sigma_t - \Sigma) \phi_2^T
	\ees
	By iteration:
	\bes
	\Sigma_t - \Sigma = \phi_2^t(\Sigma_0- \Sigma)(\phi_2^T)^t
	\ees
	
	Let $\sigma = \sigma_{\text{max}}(\phi_2)$. Because $\phi_2$ is symmetric, there exists an orthogonal matrix $Q= (q_{ij})_{p\times p}$, such that:
	\bes
	\phi_2 = Q U Q^T
	\ees
	where $U = \diag\left\{u_1, u_2,\cdots, u_p\right\}$ with $\vert u_i\vert \le \sigma$, $1\le i \le p$.
	
	Denote $M_0 = Q^T(\Sigma_0-\Sigma)Q$, then:
	\bes
	\Sigma_t - \Sigma = Q^T U^t M_0 U^t Q
	\ees
	let $M_0 = (m_{ij})_{p\times p}$, then:
	\bel{St-S}
	\vert (\Sigma_t - \Sigma)_{ij}\vert = \vert\sum\limits_{k,l=1}^p q_{ki}q_{lj} m_{kl}u_k^t u_l^t \vert \le K \sigma^{2t}
	\eel
	where $K = p^2 \max\left\{\vert q_{ij}\vert\right\}^2 \cdot \max\left\{\vert m_{ij}\vert\right\}$ is a constant.
	
	Let $p(x) = (2\pi)^{-p/2} (\vert\Sigma\vert)^{-1/2} \exp\left\{-\frac{1}{2}x^T \Sigma^{-1} x\right\}$ be the density function of $N(0, \Sigma)$, then:
	\begin{align*}
		&\vert p_{X_t}(x)-p(x)\vert \\
		=& (2\pi)^{-p/2} \vert\Sigma_t\vert^{-1/2} \exp\left\{-\frac{1}{2}x^T \Sigma_t^{-1} x\right\} - (2\pi)^{-p/2} \vert\Sigma\vert^{-1/2} \exp\left\{-\frac{1}{2}x^T \Sigma^{-1} x\right\}\vert \\
		\le& (2\pi)^{-p/2} \vert \vert \Sigma_t\vert^{-1/2} - \vert \Sigma\vert^{-1/2}\vert \exp\left\{-\frac{1}{2}x^T \Sigma_t^{-1} x\right\} \\&+
		(2\pi)^{-p/2} (\vert\Sigma\vert)^{-1/2} \vert \exp\left\{-\frac{1}{2}x^T \Sigma_t^{-1} x\right\}- \exp\left\{-\frac{1}{2}x^T \Sigma^{-1} x\right\}\vert
	\end{align*}
	
	From (\ref{St-S}), there exists a constant $K_1$, such that:
	\bes
	\vert \vert\Sigma_t\vert-\vert \Sigma\vert\vert = \vert \vert \Sigma + (\Sigma_t-\Sigma)\vert - \vert \Sigma\vert \vert \le K_1 \sigma^{2t}
	\ees
	then:
	\begin{align*}
		&\int (2\pi)^{-p/2} \vert \vert \Sigma_t\vert^{-1/2} - \vert \Sigma\vert^{-1/2}\vert \exp\left\{-\frac{1}{2}x^T \Sigma_t^{-1} x\right\} dx\\
		=& \vert \vert \Sigma_t\vert^{-1/2} - \vert \Sigma\vert^{-1/2}\vert \cdot \vert \Sigma_t \vert^{1/2} \\
		=&  \vert\vert \Sigma_t\vert^{1/2} - \vert \Sigma\vert^{1/2}\vert \cdot \vert \Sigma \vert^{-1/2} \\
		\le&  \vert\vert\Sigma_t\vert^{1/2} - \vert \Sigma\vert^{1/2}\vert\cdot  \vert\vert\Sigma_t\vert^{1/2} + \vert \Sigma\vert^{1/2}\vert\cdot \vert \Sigma \vert^{-1}\\
		=&\vert \vert\Sigma_t\vert-\vert \Sigma\vert\vert \cdot \vert \Sigma \vert^{-1} \\
		\le& K_1 \sigma^{2t}\vert \Sigma \vert^{-1}
	\end{align*}
	For $x\in \mathbb{R}^p$, if $x^T(\Sigma_t^{-1}-\Sigma^{-1})x \ge 0$, then:
	\bes
	&&\vert \exp(-\frac{1}{2}x^T \Sigma_t^{-1} x)- \exp\left\{-\frac{1}{2}x^T \Sigma^{-1} x\right\}\vert \cr &=& \exp\left\{-\frac{1}{2}x^T \Sigma^{-1} x\right\}\cdot \vert 1 - \exp\left\{-\frac{1}{2}x^T(\Sigma_t^{-1}-\Sigma^{-1})x\right\}\vert 
	\cr &\le& \frac{1}{2}x^T(\Sigma_t^{-1}-\Sigma^{-1})x \cdot\exp\left\{-\frac{1}{2}x^T \Sigma^{-1} x\right\}
	\ees
	if $x^T(\Sigma_t^{-1}-\Sigma^{-1})x < 0$:
	\bes
	&&\vert \exp\left\{-\frac{1}{2}x^T \Sigma_t^{-1} x\right\}- \exp\left\{-\frac{1}{2}x^T \Sigma^{-1} x\right\}\vert 
	\cr &=& \exp\left\{-\frac{1}{2}x^T \Sigma_t^{-1} x\right\}\cdot \vert 1 - \exp\left\{-\frac{1}{2}x^T(\Sigma^{-1}-\Sigma_t^{-1})x\right\}\vert 
	\cr &\le& \frac{1}{2}x^T(\Sigma^{-1}-\Sigma_t^{-1})x \cdot \exp\left\{-\frac{1}{2}x^T \Sigma_t^{-1} x\right\}
	\ees
	therefore,
	\bes
	\vert \exp\left\{-\frac{1}{2}x^T \Sigma_t^{-1} x\right\}- \exp\left\{-\frac{1}{2}x^T \Sigma^{-1} x\right\}\vert &\le& \frac{1}{2}\vert x^T(\Sigma_t^{-1}-\Sigma^{-1})x\vert \cdot(\exp\left\{-\frac{1}{2}x^T \Sigma^{-1} x\right\}
	\cr &&\ \ +\exp\left\{-\frac{1}{2}x^T \Sigma_t^{-1} x\right\})
	\ees
	Since $\Sigma_t^{-1}-\Sigma^{-1} = \Sigma_t^{-1}(\Sigma-\Sigma_t)\Sigma^{-1}$, by (\ref{St-S}), there exists a constant $K_2$ such that:
	\bes
	(\Sigma_t^{-1}-\Sigma^{-1})_{ij} \le K_2 \sigma^{2t}
	\ees
	hence, for $x\in \mathbb{R}^p$,
	\begin{align*}
		\vert x^T(\Sigma_t^{-1}-\Sigma^{-1})x\vert &\le \sigma_{\text{max}}(\Sigma_t^{-1}-\Sigma^{-1})\cdot x^Tx \\
		&= \mathop{\sup}_{\vert z\vert=1}\vert (\Sigma_t^{-1}-\Sigma^{-1})z\vert_2 \cdot x^Tx \\
		&\le \sqrt{p}K_2 \sigma^{2t} x^Tx
	\end{align*}
	therefore:
	\begin{align*}
		&\int (2\pi)^{-p/2} \vert\Sigma\vert^{-1/2} \vert \exp\left\{-\frac{1}{2}x^T \Sigma_t^{-1} x\right\}- \exp\left\{-\frac{1}{2}x^T \Sigma^{-1} x\right\}\vert dx\\
		\le& \int (2\pi)^{-p/2} \vert\Sigma\vert^{-1/2} \frac{\sqrt{p}}{2}K_2 \sigma^{2t} x^Tx (\exp\left\{-\frac{1}{2}x^T \Sigma^{-1} x\right\}+\exp\left\{-\frac{1}{2}x^T \Sigma_t^{-1} x\right\})dx \\
		=&\int (2\pi)^{-p/2} \vert\Sigma\vert^{-1/2} \frac{\sqrt{p}}{2}K_2 \sigma^{2t} x^Tx \exp\left\{-\frac{1}{2}x^T \Sigma^{-1} x\right\}dx \\
		&+ \int (2\pi)^{-p/2} \vert\Sigma\vert^{-1/2} \frac{\sqrt{p}}{2}K_2 \sigma^{2t} x^Tx \exp\left\{-\frac{1}{2}x^T \Sigma_t^{-1} x\right\}dx \\
		=&\int (2\pi)^{-p/2} \vert\Sigma\vert^{-1/2}\frac{\sqrt{p}}{2}K_2 \sigma^{2t} z^T\Sigma z\exp\left\{-\frac{1}{2}z^Tz\right\}\vert\Sigma\vert^{1/2}dz\\
		&+ \int (2\pi)^{-p/2} \vert\Sigma\vert^{-1/2}\frac{\sqrt{p}}{2}K_2 \sigma^{2t} z^T\Sigma_t z\exp\left\{-\frac{1}{2}z^Tz\right\}\vert\Sigma_t\vert^{1/2}dz\\
		\le&\int (2\pi)^{-p/2} \vert\Sigma\vert^{-1/2}\frac{\sqrt{p}}{2}K_2 \sigma^{2t}(\vert\Sigma\vert^{1/2}\sigma_{\text{max}}(\Sigma) + \vert\Sigma_t\vert^{1/2}\sigma_{\text{max}}(\Sigma_t)) z^Tz \exp\left\{-\frac{1}{2}z^Tz\right\}dz\\
		=&p\cdot\vert\Sigma\vert^{-1/2}\frac{\sqrt{p}}{2}K_2 \sigma^{2t}(\vert\Sigma\vert^{1/2}\sigma_{\text{max}}(\Sigma) + \vert\Sigma_t\vert^{1/2}\sigma_{\text{max}}(\Sigma_t))\\
		\le&K_3 \sigma^{2t}
	\end{align*}
	where $K_3$ is a constant.
	
	Combining the above results, we have:
	\begin{align*}
		\int \vert p_{X_t}(x)-p(x)\vert dx \le K_1\sigma^{2t}\vert \Sigma\vert^{-1} + K_3 \sigma^{2t} = \mathcal{O}(\sigma^{2t})
	\end{align*}
	
\end{proof}

\subsection{Proof of Proposition \ref{prop6}}
\begin{proof}
	We recursively prove the following equation for $1\le s \le S$:
	\bel{Rec}
	\Vert p_{\Xhat_T,\cdots,\Xhat_{T+s}} - p_{X_T,\cdots,X_{T+s}}\Vert_{L_1} \le \int \sum\limits_{r=0}^{s-1} p_{T+r}(x)\cdot \Vert p(\cdot\vert x)-q(\cdot\vert x) \Vert_{L_1} dx
	\eel
	For $s = 1$, note that:
	\bes
	p_{\Xhat_T,\Xhat_{T+1}}(x,y) - p_{X_T,X_{T+1}}(x,y) = p_T(x)[q(y\vert x) - p(y\vert x)]
	\ees
	then:
	\bes
	\Vert p_{\Xhat_T,\Xhat_{T+1}} - p_{X_T,X_{T+1}}\Vert_{L_1} &=& \int \vert p_{\Xhat_T,\Xhat_{T+1}}(x,y) - p_{X_T,X_{T+1}}(x,y)\vert dxdy
	\cr & =&\int p_T(x) \Vert p(\cdot\vert x)-q(\cdot\vert x) \Vert_{L_1} dx
	\ees
	therefore, the equation (\ref{Rec}) holds for $s =1$.
	
	For $1<k\le S$, assuming that (\ref{Rec}) holds for $s = k - 1$, now we consider the case of $s = k$.
	Note that for $1\le s\le S$:
	\begin{align*}
		&p_{\Xhat_T,\cdots,\Xhat_{T+s}}(x_0, \cdots, x_s) = p_T(x_0)\cdot\prod\limits_{i=1}^{s} q(x_i\vert x_{i-1}) \\
		&p_{X_T,\cdots,X_{T+s}}(x_0, \cdots, x_s) = p_T(x_0)\cdot\prod\limits_{i=1}^{s} p(x_i\vert x_{i-1})
	\end{align*}
	then:
	\begin{align*}
		&\Vert p_{\Xhat_T,\cdots,\Xhat_{T+k}} - p_{X_T,\cdots,X_{T+k}}\Vert_{L_1} \\
		=& \int p_T(x_0)\vert \prod\limits_{i=1}^{k} q(x_i\vert x_{i-1}) - \prod\limits_{i=1}^{k} p(x_i\vert x_{i-1}) \vert d x_{0:k} \\
		\le& \int p_T(x_0)\cdot\vert \prod\limits_{i=1}^{k-1} q(x_i\vert x_{i-1}) - \prod\limits_{i=1}^{k-1} p(x_i\vert x_{i-1}) \vert \cdot q(x_k\vert x_{k-1}) d x_{0:k}\\
		&\ + \int p_T(x_0)\cdot\prod\limits_{i=1}^{k-1} p(x_i\vert x_{i-1})\cdot \vert q(x_k\vert x_{k-1}) - p(x_k\vert x_{k-1})\vert d x_{0:k} \\
		=& \int p_T(x_0)\cdot\vert \prod\limits_{i=1}^{k-1} q(x_i\vert x_{i-1}) - \prod\limits_{i=1}^{k-1} p(x_i\vert x_{i-1}) \vert d x_{0:k-1} \\
		&\ + \int p_{X_T,\cdots,X_{T+k-1}}(x_0,\cdots,x_{k-1}) \cdot \vert q(x_k\vert x_{k-1}) - p(x_k\vert x_{k-1})\vert d x_{0:k} \\
		=&\Vert p_{\Xhat_T,\cdots,\Xhat_{T+k-1}} - p_{X_T,\cdots,X_{T+k-1}}\Vert_{L_1} \\
		&\ + \int p_{T+k-1}(x_{k-1})\cdot \vert q(x_k\vert x_{k-1}) - p(x_k\vert x_{k-1})\vert d x_{k}dx_{k-1} \\
		=& \Vert p_{\Xtil_T,\cdots,\Xtil_{T+k-1}} - p_{X_T,\cdots,X_{T+k-1}}\Vert_{L_1} \\
		&\ + \int p_{T+k-1}(x_{k-1})\cdot \Vert p(\cdot\vert x_{k-1}) - q(\cdot\vert x_{k-1})\Vert_{L_1} dx_{k-1}
	\end{align*}	
	According to the hypothesis of induction,
	\bes
	\Vert p_{\Xhat_T,\cdots,\Xhat_{T+k-1}} - p_{X_T,\cdots,X_{T+k-1}}\Vert_{L_1} \le \int \sum\limits_{r=0}^{k-2} p_{T+r}(x)\cdot \Vert p(\cdot\vert x)-q(\cdot\vert x) \Vert_{L_1} dx
	\ees
	hence, we have:
	\bes
	&&\Vert p_{\Xhat_T,\cdots,\Xhat_{T+k}} - p_{X_T,\cdots,X_{T+k}}\Vert_{L_1} \cr&\le& \Vert p_{\Xhat_T,\cdots,\Xhat_{T+k-1}} - p_{X_T,\cdots,X_{T+k-1}}\Vert_{L_1} 
	\cr && \ \ \ + \int p_{T+k-1}(x_{k-1})\cdot \Vert p(\cdot\vert x_{k-1}) - q(\cdot\vert x_{k-1})\Vert_{L_1} dx_{k-1}
	\cr  &\le& \sum\limits_{r=0}^{k-2} p_{T+r}(x)\cdot \Vert p(\cdot\vert x)-q(\cdot\vert x) \Vert_{L_1} dx 
	\cr && \ \ \ + \int p_{T+k-1}(x_{k-1})\cdot \Vert p(\cdot\vert x_{k-1}) - q(\cdot\vert x_{k-1})\Vert_{L_1} dx_{k-1}
	\cr  &= &\sum\limits_{r=0}^{k-1} p_{T+r}(x)\cdot \Vert p(\cdot\vert x)-q(\cdot\vert x) \Vert_{L_1} dx
	\ees
	then (\ref{Rec}) holds for $s = k$.
	
	Therefore, the inequality (\ref{Rec}) holds for all $1\le s\le S$.
	In particular, taking $s = S$, by Assumption \ref{a2}, we have:
	\bes
	&&\Vert p_{\Xhat_T,\cdots,\Xhat_{T+S}} - p_{X_T,\cdots,X_{T+S}}\Vert_{L_1} 
	\cr&\le& \int \sum\limits_{r=0}^{S-1} p_{T+r}(x)\cdot \Vert p(\cdot\vert x)-q(\cdot\vert x) \Vert_{L_1} dx
	\cr &\le& S\int p_{T}(x)\cdot \Vert p(\cdot\vert x)-q(\cdot\vert x) \Vert_{L_1} dx + \int \sum\limits_{r=0}^{S-1} \vert p_{T+r}(x)-p_T(x)\vert \cdot \Vert p(\cdot\vert x)-q(\cdot\vert x) \Vert_{L_1} dx
	\cr &\le& S\int p_{T}(x)\cdot \Vert p(\cdot\vert x)-q(\cdot\vert x) \Vert_{L_1} dx+ 2\sum\limits_{r=0}^{S-1}\Vert p_{T+r}-p_T\Vert_{L_1}
	\cr &\le &S\int p_{T}(x)\cdot \Vert p(\cdot\vert x)-q(\cdot\vert x) \Vert_{L_1} dx + \mathcal{O}(T^{-\alpha})
	\cr &=& S\Vert p_{\Xhat_T,\Xhat_{T+1}} - p_{X_T,X_{T+1}}\Vert_{L_1} +  \mathcal{O}(T^{-\alpha})
	\ees
	Therefore,
	\bes
	&&\mathbb{E}_{(X_t,\eta_t)_{t=0}^T}\left\| p_{\Xhat_T,\cdots,\Xhat_{T+S}} - p_{X_T,\cdots,X_{T+S}}\right\|_{L_1}^2
	\cr&  \le & \mathbb{E}_{(X_t,\eta_t)_{t=0}^T}\big[2S^2\Vert p_{\Xhat_T,\Xhat_{T+1}} - p_{X_T,X_{T+1}}\Vert_{L_1}^2 +  \mathcal{O}(T^{-2\alpha})\big]
	\cr & = & 2S^2\mathbb{E}_{(X_t,\eta_t)_{t=0}^T}\Vert p_{\Xhat_T,\Xhat_{T+1}} - p_{X_T,X_{T+1}}\Vert_{L_1}^2 + \mathcal{O}(T^{-2\alpha})
	\ees
\end{proof}

\subsection{Proof of Proposition \ref{prop7}}
\begin{proof}
	Since $G\in\mathcal{G},H\in\mathcal{H}$ are bounded functions, there exist a constant $M\ge 0$ such that:
	\bes
	\vert H(X_{t},G(\eta_{t},X_{t},s),s) - f^*(H(X_{t},X_{t+s},s)) \vert \le M,\ \ t\ge 0, \ \ 1\le s \le S
	\ees
	By Lemma \ref{lm7.5}, for $1\le s \le S$ and $t_1,t_2 >0$:
	\bes
	\Vert p_{X_{t_1}, X_{t_1+s}} - p_{X_{t_2}, X_{t_2+s}}\Vert_{L_1} \le \mathcal{O}(\mathop{\min} \left\{t_1,t_2\right\}^{-\alpha})
	\ees
	
	Then, for $t_1,t_2 > 0$:
	\begin{align*}
		&\vert \mathbb{E} [H(X_{t_1},G(\eta_{t_1},X_{t_1},s),s) - f^*(H(X_{t_1},X_{t_1+s},s))] \\
		&\ \  -\mathbb{E} [H(X_{t_2},G(\eta_{t_2},X_{t_2},s),s) - f^*(H(X_{t_2},X_{t_2+s},s))] \vert  \\
		= &\vert\mathbb{E}_{\eta\sim N(0,I_m)} \int (p_{X_{t_1}, X_{t_1+s}}(x,y) - p_{X_{t_2}, X_{t_2+s}}(x,y)) [H(x,G(\eta,x,s),s) - f^*(H(x,y,s))]dxdy\vert\\
		\le & M \cdot \Vert p_{X_{t_1}, X_{t_1+s}} - p_{X_{t_2}, X_{t_2+s}} \Vert_{L_1} \\
		\le &M\mathcal{O}(\mathop{\min} \left\{t_1,t_2\right\}^{-\alpha})
	\end{align*}

	Therefore, let $T_0 \le T-S$ be an arbitrary integer, we have:
	\begin{align*}
		&\mathop{\sup}_{G\in\mathcal{G},H\in\mathcal{H}}\vert d_s(G,H) \vert\\ 
		= &\mathop{\sup}_{G\in\mathcal{G},H\in\mathcal{H}} \vert \frac1{T-s+1} \sum\limits_{t=0}^{T-s} \mathbb{E}[H(X_{t},G(\eta_{t},X_{t},s),s) - f^*(H(X_{t},X_{t+s},s))] \\
		&\ \ \ \ \ \ \ \ \ \ \ \ \ \ \ \ \ \ \ - \mathbb{E}[ H(X_T,G(\eta_T,X_T,s),s) - f^*(H(X_T,X_{T+s},s))] \vert \\
		\le &\mathop{\sup}_{G\in\mathcal{G},H\in\mathcal{H}} \vert \frac1{T-s+1} \sum\limits_{t=0}^{T_0-1}(\mathbb{E}[H(X_{t},G(\eta_{t},X_{t},s),s) -f^*(H(X_{t},X_{t+s},s))]  \\
		&\ \ \ \ \ \ \ \ \ \ \ \ \ \ \ \ \ \ \  - \mathbb{E}[ H(X_T,G(\eta_T,X_T,s),s) - f^*(H(X_T,X_{T+s},s))])\vert \\
		&+ \mathop{\sup}_{G\in\mathcal{G},H\in\mathcal{H}} \vert \frac1{T-s+1} \sum\limits_{t=T_0}^{T-s}(\mathbb{E}[H(X_{t},G(\eta_{t},X_{t},s),s) -f^*(H(X_{t},X_{t+s},s))] \\
		&\ \ \ \ \ \ \ \ \ \ \ \ \ \ \ \ \ \ \  - \mathbb{E}[ H(X_T,G(\eta_T,X_T,s),s) - f^*(H(X_T,X_{T+s},s))])\vert\\
		\le &\frac{2MT_0}{T-s+1} + \frac{M(T-s-T_0+1)}{T-s+1}\mathcal{O}(T_0^{-\alpha}) \\
		\le &\frac{2MT_0}{T-s+1} + M\mathcal{O}(T_0^{-\alpha})
	\end{align*}
	
	Let $T_0= \lceil T^{\frac{1}{\alpha+1}} \rceil$, then:
	\bes
	\mathop{\sup}_{G\in\mathcal{G},H\in\mathcal{H}}\vert d_s(G,H) \vert \le \frac{2M(T^{\frac1{\alpha+1}}+1)}{T-s+1} + M\mathcal{O}(T^{-\frac\alpha{\alpha+1}}) = \mathcal{O}(T^{-\frac\alpha{\alpha+1}})
	\ees
\end{proof}

\subsection{Proof of Proposition \ref{prop3-1}}
\begin{proof}
	Proposition \ref{prop3-1} directly follows from Theorem 8 in \cite{mcdonald2017rademacher}.
\end{proof}

\subsection{Proof of Theorem \ref{thm2}}
\begin{proof}
	By Lemma \ref{lm8}:
	\begin{align*}
		&\mathbb{E}_{(X_t,\eta_t)_{t=0}^T}\frac1{S}\sum\limits_{s=1}^{S}\Vert p_{\Xtil_T,\Xtil_{T+s}} - p_{X_T,X_{T+s}} \Vert_{L_1}^2 \\ \le& \mathbb{E}_{(X_t,\eta_t)_{t=0}^T}\frac1{S}\sum\limits_{s=1}^{S}\frac{2}{a}D_{f}(p_{\Xtil_T,\widehat{G}(\eta_T,\Xtil_T,s)}\Vert p_{X_T,X_{T+s}}) \\
		=&\frac{2}{a}\mathbb{E}_{(X_t,\eta_t)_{t=0}^T}\dot{\mathbb{L}}_T(\widehat{G})
	\end{align*}

	After some calculations, we can decompose $\mathbb{E}_{(X_t,\eta_t)_{t=0}^T}\dot{\mathbb{L}}_T(\widehat{G})$ as:
	\bes
	\mathbb{E}_{(X_t,\eta_t)_{t=0}^T}\dot{\mathbb{L}}_T(\widehat{G}) \le \tilde{\Delta}_3 + \tilde{\Delta}_4 + \Delta_5
	\ees
	Where
	\begin{align*}
		&\tilde{\Delta}_3 = \mathbb{E}_{(X_t,\eta_t)_{t=0}^T}\mathop{\sup}_H \dot{\mathcal{L}}_T(\widehat{G},H) - \mathop{\sup}_{H\in\mathcal{H}}\dot{\mathcal{L}}_T(\widehat{G},H) \\
		&\tilde{\Delta}_4 = \mathop{\inf}_{\bar{G}\in\mathcal{G}}\dot{\mathbb{L}}_T(\bar{G}) \\
		&\Delta_5 = 2\mathbb{E}_{(X_t,\eta_t)_{t=0}^T}\mathop{\sup}_{G\in\mathcal{G},H\in\mathcal{H}} \vert \dot{\mathcal{L}}_T(G,H) - \widetilde{\mathcal{L}}(G,H)\vert
	\end{align*}
	
	Then we only need to prove that:
	\bes
	\Delta_5 = 2\mathbb{E}_{(X_t,\eta_t)_{t=0}^T}\mathop{\sup}_{G\in\mathcal{G},H\in\mathcal{H}} \vert \dot{\mathcal{L}}_T(G,H) - \tilde{\mathcal{L}}(G,H)\vert \le  \tilde{\Delta}_1 + \widetilde{\Delta}_2 
	\ees
	where
	\bes
	&& \tilde{\Delta}_1 = \mathcal{O}(T^{-\frac\alpha{\alpha+1}})
	\cr &&\tilde{\Delta}_2 = \mathcal{O}\left(\sqrt{\frac{\text{Pdim}_{\mathcal{G}}\log (T\text{B}_{\mathcal{G}})}{T}}+\sqrt{\frac{\text{Pdim}_{\mathcal{H}}\log (T\text{B}_{\mathcal{H}})}{T}}\right) 
	\ees
	%

	%
		%
	Let:
	\begin{align*}
		&\dot{\mathcal{L}}_{T,s}(G,H) = \mathbb{E}_{X_T,\eta_T} H(X_T,G(\eta_T,X_T,s),s) - \mathbb{E}_{X_T,X_{T+h}} f^*(H(X_T,X_{T+s},s))\\
		&\widetilde{\mathcal{L}}_{s}(G,H) = \frac{1} {|\Omega|}\sum\limits_{t=0}^{T-s} [H(X_{t},G(\eta_{t},X_{t}, s), s) - f^*(H(X_{t},X_{t+s},s))]\\
		&b_{t}^s(G,H) = H(X_{t},G(\eta_{t},X_{t},s),s) - f^*(H(X_{t},X_{t+s},s)) \\
		&\ \ \ \ \ \ \ \ \ \ \ \ \ \ - \mathbb{E}[H(X_{t},G(\eta_{t},X_{t},s),s) - f^*(H(X_{t},X_{t+s},s))] \\
		&b_s(G,H) = \frac{\vert\Omega\vert}{T-s+1}[\widetilde{\mathcal{L}}_{s}(G,H) - \mathbb{E}\widetilde{\mathcal{L}}_{s}(G,H)] \\
		&d_s(G,H) = \dot{\mathcal{L}}_{T,s}(G,H) - \frac{\vert\Omega\vert}{T-s+1}\mathbb{E}\widetilde{\mathcal{L}}_{s}(G,H) 
	\end{align*}
	then $b_s(G,H) = \frac1{T-s+1} \sum\limits_{t=0}^{T-s}b_{t}^s(G,H)$, and:
	\bel{delta decompose}
	\Delta_5 \le \frac2{S}\sum\limits_{s=1}^{S}\mathbb{E}\mathop{\sup}_{G\in\mathcal{G},H\in\mathcal{H}}\vert b_s(G,H)\vert + \frac2{S}\sum\limits_{s=1}^{S}\mathop{\sup}_{G\in\mathcal{G},H\in\mathcal{H}}\vert d_s(G,H)\vert 
	\cr+ 2\sum\limits_{s=1}^{s} \vert 1 - \frac{\vert\Omega\vert}{S(T-s+1)} \vert\cdot \mathbb{E}\mathop{\sup}_{G\in\mathcal{G},H\in\mathcal{H}}\vert \widetilde{\mathcal{L}}_{s}(G,H)\vert
	\eel
	
	Let ${\epsilon_t}$ be the Rademacher random variables. For $1\le s\le S$, denote the Rademacher complexity in $\mathcal{G}\times\mathcal{H}$ as:
	\bes
	\mathcal{R}_s(\mathcal{G}\times\mathcal{H}) = \mathbb{E}\mathop{\sup}_{G\in\mathcal{G},H\in\mathcal{H}} \vert \frac2{T-s+1} \sum\limits_{t=0}^{T-s} \epsilon_tb_{t}^s(G,H)\vert
	\ees
	
	By Proposition \ref{prop3-1}, we have:
	\bel{Rh}
	\mathbb{E}\mathop{\sup}_{G\in\mathcal{G},H\in\mathcal{H}} \vert b_s(G,H)\vert \le \mathcal{R}_s(\mathcal{G}\times\mathcal{H})
	\eel
	
	Fix $(X_t, \eta_t)_{t=0}^{T-1}$, define empirical metric $d$ in $\mathcal{G}\times\mathcal{H}$ such that:
	\bes 
	d((G,H),(\tilde{G},\tilde{H})) = \mathop{\sup}_{0\le t\le T-s}\vert b_{t}^s(G,H)-b_{t}^s(\tilde{G},\tilde{H})\vert
	\ees
	Let $\mathcal{G}_\delta \times \mathcal{H}_\delta$ be the $\delta$-net of $(\mathcal{G}\times\mathcal{H},d)$, $C(\delta,\mathcal{G}\times\mathcal{H},d)$ be the covering number of $\delta$-net, then by Lemma \ref{lm6}:
	\begin{align*}
		\mathcal{R}_s(\mathcal{G}\times\mathcal{H}) &=\mathbb{E}_{\left(X_t,\eta_t\right)_{t=0}^{T}} \mathbb{E}_{\left(\epsilon_t\right)_{t=0}^{T-s}}\mathop{\sup}_{G\in\mathcal{G},H\in\mathcal{H}} \vert \frac2{T-s+1} \sum\limits_{t=0}^{T-s} \epsilon_tb_{t}^s(G,H)\vert \\
		&\le 2\delta + \mathbb{E}_{\left(X_t,\eta_t\right)_{t=0}^{T}} \mathbb{E}_{\left(\epsilon_t\right)_{t=0}^{T-s}}\mathop{\sup}_{(G,H)\in \mathcal{G}_\delta \times \mathcal{H}_\delta} \vert \frac2{T-s+1} \sum\limits_{t=0}^{T-s} \epsilon_tb_{t}^s(G,H)\vert \\
		&\le 2\delta + \frac{2}{T-s+1}\mathbb{E}_{\left(X_t,\eta_t\right)_{t=0}^{T}} (2\log (2C(\delta,\mathcal{G}\times\mathcal{H},d)) \cdot \sup_{(G,H)\in \mathcal{G}_\delta \times \mathcal{H}_\delta} \sum\limits_{t=0}^{T-s} (b_{t}^s(G,H))^2)^{1/2} \\
		&\le 2\delta + \frac{4}{T-s+1}\mathbb{E}_{\left(X_t,\eta_t\right)_{t=0}^{T}} (\log C(\delta,\mathcal{G}\times\mathcal{H},d) \cdot \sup_{(G,H)\in \mathcal{G}_\delta \times \mathcal{H}_\delta} \sum\limits_{t=0}^{T-s} (b_{t}^s(G,H))^2)^{1/2}
	\end{align*}
	
	By assumption, $b_{t}^s(G,H)$ can be bounded by a constant $C_0$, we have:
	\begin{align*}
		\mathcal{R}_s(\mathcal{G}\times\mathcal{H}) &\le 2\delta + \frac4{T-s+1} \mathbb{E}_{\left(X_t,\eta_t\right)_{t=0}^{T}} ((T-s+1)C_0^2\log C(\delta,\mathcal{G}\times\mathcal{H},d))^{1/2} \\
		&\le 2\delta + \frac{4C_0}{\sqrt{T-s+1}} \mathbb{E}_{\left(X_t,\eta_t\right)_{t=0}^{T}} (\log C(\delta,\mathcal{G},d) + \log C(\delta,\mathcal{H},d))^{1/2}
	\end{align*}
	
	By Theorem 12.2 in \cite{anthony_bartlett_1999}, the covering number can be bounded as:
	\begin{align*}
		&\log C(\delta,\mathcal{G},d) \le \text{Pdim}_\mathcal{G} \log \frac{2eT\text{B}_\mathcal{G}}{\delta \text{Pdim}_\mathcal{G}}  \\
		&\log C(\delta,\mathcal{H},d) \le \text{Pdim}_\mathcal{H} \log \frac{2eT\text{B}_{\mathcal{H}}}{\delta \text{Pdim}_\mathcal{H}}
	\end{align*} 
	
	The $\text{Pdim}_\mathcal{G}$ shown above is the Pseudo dimension of $G$. 
	
	Let $\delta = T^{-1}$, therefore,
	\bes
	&& \ \ \ \mathcal{R}_s(\mathcal{G}\times\mathcal{H}) 
	\cr&& \le 2T^{-1} + \frac{4C_0}{\sqrt{T-s+1}}(\text{Pdim}_\mathcal{G} \log \frac{2eT^{2}\text{B}_{\mathcal{G}}}{ \text{Pdim}_\mathcal{G}}+\text{Pdim}_\mathcal{H} \log \frac{2eT^{2}\text{B}_{\mathcal{H}}}{ \text{Pdim}_\mathcal{H}})^{1/2}
	\cr&& \le 2T^{-1} + \frac{4C_0}{\sqrt{T-s+1}}(\text{Pdim}_\mathcal{G} \log \frac{2eT^{2}\text{B}_{\mathcal{G}}}{ \text{Pdim}_\mathcal{G}})^{1/2} + \frac{4C_0}{\sqrt{T-s+1}}(\text{Pdim}_\mathcal{H} \log \frac{2eT^{2}\text{B}_{\mathcal{H}}}{ \text{Pdim}_\mathcal{H}})^{1/2}
	\cr&& = \mathcal{O}((\frac{\text{Pdim}_\mathcal{G}\log (T\text{B}_{\mathcal{G}})}{T})^{1/2}) + \mathcal{O}((\frac{\text{Pdim}_\mathcal{H}\log (T\text{B}_{\mathcal{H}})}{T})^{1/2})
	\ees
	
	

	
	By Proposition \ref{prop7}, we have:
	\bes
	\mathop{\sup}_{G\in\mathcal{G},H\in\mathcal{H}}\vert d_s(G,H) \vert \le \ \mathcal{O}(T^{-\frac\alpha{\alpha+1}})
	\ees

	Finally, note that $\vert\Omega\vert = \frac{S(2T-S+1)}{2}$, then:
	\bes
	\vert 1 - \frac{\vert\Omega\vert}{S(T-s+1)} \vert \mathbb{E}\mathop{\sup}_{G\in\mathcal{G},H\in\mathcal{H}}\vert \widetilde{\mathcal{L}}_{s}(G,H)\vert &\le& \vert 1 - \frac{\vert\Omega\vert}{S(T-s+1)} \vert \cdot \frac{(T-s+1)M}{\vert\Omega\vert}
	\cr & = &\frac{\vert S-2s+1\vert}{2(T-s+1)} \cdot \frac{2(T-s+1)M}{S(2T-S+1)} 
	\cr & = &\mathcal{O}(T^{-1})
	\ees
	
	therefore,
	\bes
	\Delta_5 &\le& \mathcal{O}\left(\sqrt{\frac{\text{Pdim}_{\mathcal{G}}\log (T\text{B}_{\mathcal{G}})}{T}}+\sqrt{\frac{\text{Pdim}_{\mathcal{H}}\log (T\text{B}_{\mathcal{H}})}{T}}\right) + \mathcal{O}(T^{-\frac\alpha{\alpha+1}}) + \mathcal{O}(T^{-1})
	\cr &= & \mathcal{O}\left(\sqrt{\frac{\text{Pdim}_{\mathcal{G}}\log (T\text{B}_{\mathcal{G}})}{T}}+\sqrt{\frac{\text{Pdim}_{\mathcal{H}}\log (T\text{B}_{\mathcal{H}})}{T}}\right) + \mathcal{O}(T^{-\frac\alpha{\alpha+1}})
	\cr &=& \tilde{\Delta}_1 + \tilde{\Delta}_2 
	\ees
\end{proof}

\subsection{Proof of Theorem \ref{mainthm}}
\begin{proof}
	By Proposition \ref{prop6} and Theorem \ref{thm2}, we have:
	\bes
	&&\mathbb{E}_{(X_t,\eta_t)_{t=0}^T}\left\| p_{\Xhat_T,\cdots,\Xhat_{T+S}} - p_{X_T,\cdots,X_{T+S}}\right\|_{L_1}^2 \cr&\le& 2S^2\mathbb{E}_{(X_t,\eta_t)_{t=0}^T}\Vert p_{\Xhat_T,\Xhat_{T+1}} - p_{X_T,X_{T+1}}\Vert_{L_1}^2 + \mathcal{O}(T^{-2\alpha})
	\cr &\le& \tilde{\Delta}_1 + \Delta_2 + \Delta_3 + \Delta_4 + \mathcal{O}(T^{-2\alpha})
	\cr &=& \Delta_1 + \Delta_2 + \Delta_3 + \Delta_4
	\ees
	where,
	\bes
	&&\Delta_1 =  \mathcal{O}( T^{-\frac{\alpha}{\alpha+1}}+T^{-2\alpha})
	\cr && \tilde{\Delta}_1 = \mathcal{O}(T^{-\frac{\alpha}{\alpha+1}})
	\cr&&\Delta_2 = \mathcal{O}\left(\sqrt{\frac{\text{Pdim}_{\mathcal{G}_1}\log (T\text{B}_{\mathcal{G}_1})}{T}}+\sqrt{\frac{\text{Pdim}_{\mathcal{H}_1}\log (T\text{B}_{\mathcal{H}_1})}{T}}\right)
	\cr&&\Delta_3 =\frac{4S^2}{a}\mathbb{E}_{(X_t, \eta_t)_{t=0}^T} (\mathop{\sup}_{h} \mathcal{L}_T(\ghat,h) - \mathop{\sup}_{h\in \mathcal{H}_1} \mathcal{L}_T(\ghat,h))
	\cr&&\Delta_4 = \frac{4S^2}{a}\mathop{\inf}_{\bar{g}\in \mathcal{G}_1}\mathbb{L}_T(\bar{g})
	\ees

\end{proof}

\subsection{Proof of Theorem \ref{thm14}}
\begin{proof}
	Similar to the proof of Theorem \ref{thm2}, we can get that:
	\bes
	\mathbb{E}\frac1{T}\sum\limits_{t=0}^{T-1} \left\| p_{X_{i,t}, \widehat{g}(\eta_t, X_{i,t})} - p_{X_{i,t},X_{i,t+1}}\right\|_{L_1}^2 \le \ddot{\Delta}_1 + \ddot{\Delta}_2 + \ddot{\Delta}_3
	\ees
	where,
	\bes
	&& \ddot{\Delta}_1 = \mathcal{O}\left(\sqrt{\frac{\text{Pdim}_{\mathcal{G}}\log (n\text{B}_{\mathcal{G}})}{n}}+\sqrt{\frac{\text{Pdim}_{\mathcal{H}}\log (n\text{B}_{\mathcal{H}})}{n}}\right)
	\cr &&\ddot{\Delta}_2 = \mathcal{O}(1)\cdot\mathbb{E}(\mathop{\sup}_{h} \dot{\mathcal{L}}_{(T)}(\widehat{g},h)- \mathop{\sup}_{h\in\mathcal{H}}\dot{\mathcal{L}}_{(T)}(\widehat{g},h))
	\cr &&\ddot{\Delta}_3 = \mathcal{O}(1)\cdot \mathop{\inf}_{\bar{g}\in\mathcal{G}} \dot{\mathbb{L}}_{(T)}(\bar{g})
	\ees
	
	Let,
	\bes
	&&\ghat(\eta_0, x)\sim p(\cdot\vert x)
	\cr &&X_1\vert (X_0=x) \sim q(\cdot\vert x)
	\ees
	Let $M_0 = \mathop{\sup}_{0\le s\le S-1} \int \frac{p_{T+s}^2(x)}{p_{T-1}(x)}dx < \infty$, by the proof of Proposition \ref{prop6}, we have:
	\bes
	&& \Vert p_{\Xhat_T,\cdots,\Xhat_{T+S}} - p_{X_T,\cdots, X_{T+S}}\Vert_{L_1}^2
	\cr&=& (\int \vert p_{\Xhat_T,\cdots,\Xhat_{T+S}}(x_0,\cdots,x_S) - p_{X_T,\cdots, X_{T+S}}(x_0,\cdots,x_S)\vert dx_{0:S})^2
	\cr&\le&(\int \sum\limits_{r=0}^{S-1} p_{T+r}(x)\cdot \Vert p(\cdot\vert x)-q(\cdot\vert x)\Vert_{L_1}dx)^2
	\cr&\le& \int \frac{(\sum\limits_{r=0}^{S-1} p_{T+r}(x))^2}{p_{T-1}(x)}dx \cdot \int p_{T-1}(x)\Vert p(\cdot\vert x)-q(\cdot\vert x)\Vert_{L_1}^2dx
	\cr &\le& S^2M_0\int p_{T-1}(x)\cdot D_f(p(\cdot\vert x)\Vert q(\cdot\vert x))dx
	\cr&=& S^2M_0D_f(p_{X_{T-1},\ghat(\eta_{T-1},X_{T-1})}\Vert p_{X_{T-1},X_T})
	\cr&\le & S^2M_0T(\ddot{\Delta}_1+\ddot{\Delta}_2 + \ddot{\Delta}_3)
	\cr&=& \ddot{\Delta}_1+\ddot{\Delta}_2 + \ddot{\Delta}_3
	\ees
\end{proof}

\subsection{Proof of Proposition \ref{prop: delta3}}
\begin{proof}
	We first denote $h_{\hat{g}} := \arg \sup\limits_{h} \calL(\hat{g},h)$ and  $h_{\hat{g}} $ is continuous on  $E_2 = [-\log T, \log T]^{2p+1}$ by assumption. Let $E = E_2$, $L = \log T$ and $N = T ^{\frac{2p+1}{2(2p + 3)}}/\log T$ in the  Theorem 4.3 in \cite{shen2019deep}, there exists a ReLU network  $\hat{h}_\phi \in \calH_1$  with depth $\widetilde{\calD} = 12\log T  + 14 + 2(2p+1)$ and width $ \widetilde{\calW} = 3^{2p+4} \max\{(2p+1)\lfloor (T ^{\frac{2p+1}{2(2p+3)}}/\log T)^{\frac{1}{2p+1}}\rfloor,T ^{\frac{2p+1}{2(2p+3)}}/\log T + 1\}$, such that
	\bes
	\|\hat{h}_\phi - h_{\hat{g}}\|_{L^{\infty}(E_2)} \leq 19 \sqrt{2p+1} w_{h_{\hat{g}}}^{E_2} (2\log T\cdot  T^{\frac{-1}{2p+3}})
	\ees
	where  $w_{h_{\hat{g}}}^{E_2}$ is the modulus of $h_{\hat{g}}$ as defined in \cite{shen2019deep}. Then by continuity of $\mathcal{L}$, 
	\bes
	\Delta_3 = \mathop{\sup}_{h} \mathcal{L}(\ghat,h) - \mathop{\sup}_{h\in \mathcal{H}_1} \mathcal{L}(\ghat,h) \leq  \mathcal{L}(\ghat,h_{\hat{g}}) -  \mathcal{L}(\ghat,\hat{h}_\phi ) \rightarrow 0
	\ees
	Similarly, let  $g^* := \arg\inf\limits_{g} \bbL(g)$ be continous function on  $E_1 = [-\log T, \log T]^{p+m+1}$. Setting  $E = E_1$, $L = \log T$ and $N = T ^{\frac{p+m+1}{2(3+p+m)}}/\log T$ in the  Theorem 4.3 in \cite{shen2019deep}, there exists a ReLU network $\bar{g} \in \calG_1$  with  depth $\calD = 12\log T  + 14 + 2(p+m+1)$ and width $ \calW = 3^{p+m+4} \max\{(p+m+1)\lfloor (T ^{\frac{p+m+1}{2(3+p+m)}}/\log T)^{\frac{1}{p+m+1}}\rfloor,T ^{\frac{p+m+1}{2(3+p+m)}}/\log T + 1\}$,  such that  
	\bes
	\|\bar{g} - g^*\|_{L^{\infty}(E_1)} \leq 19 \sqrt{p+m +1} w_{g^*}^{E_1} (2\log T\cdot T^{\frac{-1}{p+m+3}})
	\ees
	where $w_{g^*}^{E_1}$ is the modulus of $g^*$ as defined in \cite{shen2019deep}. Let $\bar{h} = f'(\frac{p_{X_T, \bar{g}(\eta, X_T)}}{p_{X_T, X_{T+1}}})$ and $h^* = f'(\frac{p_{X_T, g^*(\eta, X_T)}}{p_{X_T, X_{T+1}}})$, by the $f'$ is continuous  and integrable in  $L_1$  , we have $\|\bar{h} - h^*\| \rightarrow 0$ as $T \rightarrow \infty$. 
	By the definition,  $\bbL(\cdot)$  can be rephrase as  following
	\bes
	\mathop{\inf}_{\bar{g}\in \mathcal{G}_1} \mathbb{L}(\bar{g}) = \mathbb{E}_{X_T,\eta_T} \bar{h}(X_T,\bar{g}(\eta_T,X_T))  &- \mathbb{E}_{X_T,X_{T+1}} f^*(\bar{h}(X_T,X_{T+1})).
	\ees
	Therefore, by the continuity of $f^*$ (since $f$ is a differentiable convex function), we have
	\bes
	\Delta_4 = \mathop{\inf}_{\bar{g}\in \mathcal{G}_1} \mathbb{L}(\bar{g}) 
	\rightarrow 0.
	\ees
	
\end{proof}

\subsection{Additional lemmas}
%
%

\begin{lem}\label{lm6}
	Let $\epsilon_i(1\le i\le m)$ be the Rademacher random variables. For any $A\in \mathbb{R}^m$, let $R = \sup_{a\in A} (\sum\limits_{i=1}^m a_i^2)^{1/2}$. Then:
	\bes
	\mathcal{R}(A) = \mathbb{E}_{(\epsilon_i)_{i=1}^m}\mathop{\sup}_{a\in A} \vert \frac1m \sum\limits_{i=1}^m\epsilon_i a_i \vert \le \frac{R\sqrt{2\log (2\vert A\vert)}}{m}
	\ees 
\end{lem}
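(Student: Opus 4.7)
The plan is to prove this via the classical Massart finite class lemma (Chernoff/exponential moment) argument. First I would symmetrize the absolute value by noting that
\[
\mathop{\sup}_{a\in A}\Bigl|\sum_{i=1}^m \epsilon_i a_i\Bigr| = \mathop{\sup}_{a\in A\cup(-A)} \sum_{i=1}^m \epsilon_i a_i,
\]
so that on the right-hand side we are taking a supremum (rather than an absolute supremum) over a finite set of cardinality at most $2|A|$. Note that for every $a\in A\cup(-A)$ we still have $\|a\|_2\le R$.

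Next I would apply the standard exponential moment trick: for any $\lambda>0$, Jensen's inequality gives
\[
\exp\Bigl(\lambda\, \mathbb{E}\mathop{\sup}_{a\in A\cup(-A)}\sum_{i=1}^m \epsilon_i a_i\Bigr) \le \mathbb{E}\exp\Bigl(\lambda\mathop{\sup}_{a\in A\cup(-A)}\sum_{i=1}^m \epsilon_i a_i\Bigr) \le \sum_{a\in A\cup(-A)} \mathbb{E}\exp\Bigl(\lambda\sum_{i=1}^m \epsilon_i a_i\Bigr).
\]
Then I would invoke Hoeffding's lemma for Rademacher variables, which gives $\mathbb{E}\exp(\lambda\epsilon_i a_i)\le \exp(\lambda^2 a_i^2/2)$ and hence, by independence, $\mathbb{E}\exp(\lambda\sum_i \epsilon_i a_i)\le \exp(\lambda^2\|a\|_2^2/2)\le \exp(\lambda^2 R^2/2)$. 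Combining these two bounds yields
\[
\mathbb{E}\mathop{\sup}_{a\in A}\Bigl|\sum_{i=1}^m \epsilon_i a_i\Bigr| \le \frac{\log(2|A|)}{\lambda} + \frac{\lambda R^2}{2}.
\]

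Finally I would optimize over $\lambda$ by choosing $\lambda = \sqrt{2\log(2|A|)}/R$, which balances the two terms and produces the bound $R\sqrt{2\log(2|A|)}$. Dividing through by $m$ yields the stated inequality for $\mathcal{R}(A)$. There is no real obstacle here: every step is a textbook manipulation, and the only mild subtlety is remembering to pass from the absolute supremum to a supremum over $A\cup(-A)$ so the cardinality inflates to $2|A|$, which is precisely why the logarithmic factor in the conclusion is $\log(2|A|)$ rather than $\log|A|$.
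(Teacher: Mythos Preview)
Your proposal is correct and follows essentially the same approach as the paper: symmetrize to $A\cup(-A)$, apply Jensen to the exponential moment, use Hoeffding's lemma for each Rademacher term, and optimize over the free parameter to obtain $R\sqrt{2\log(2|A|)}$ before dividing by $m$. The paper's proof is identical in structure and detail, differing only in notation.
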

\begin{proof}
	Let $B = A \cup (-A)$, we only need to prove that:
	\bes
	\mathbb{E}_{(\epsilon_i)_{i=1}^m} [\mathop{\sup}_{b\in B} \frac1m \sum\limits_{i=1}^m\epsilon_i b_i] \le \frac{R\sqrt{2\log \vert B\vert}}{m}
	\ees
	For arbitrary $s$, by Jensen's inequality:
	\begin{align*}
		\exp (s\mathbb{E}_{(\epsilon_i)_{i=1}^m} [\mathop{\sup}_{b\in B}  \sum\limits_{i=1}^m\epsilon_i b_i]) &\le \mathbb{E}_{(\epsilon_i)_{i=1}^m} [\exp\left\{s\mathop{\sup}_{b\in B}  \sum\limits_{i=1}^m\epsilon_i b_i\right\}] \\
		&\le \sum\limits_{b\in B}\mathbb{E}_{(\epsilon_i)_{i=1}^m}[\exp\left\{s\sum\limits_{i=1}^m\epsilon_i b_i\right\}] \\
		&= \sum\limits_{b\in B} \prod\limits_{i=1}^m \mathbb{E}_{(\epsilon_i)_{i=1}^m} \exp\left\{s\epsilon_i b_i\right\}
	\end{align*}
	
	Because $E[\epsilon_i b_i] = 0$, and $\epsilon_i b_i \in [-\vert b_i \vert, \vert b_i \vert]$, then applying Hoeffding's inequality, we have:
	\begin{align*}
		\exp \left\{s\mathbb{E}_{(\epsilon_i)_{i=1}^m} [\mathop{\sup}_{b\in B}  \sum\limits_{i=1}^m\epsilon_i b_i]\right\} &\le \sum\limits_{b\in B} \prod\limits_{i=1}^m \mathbb{E}_{(\epsilon_i)_{i=1}^m} \exp\left\{s\epsilon_i b_i\right\} \\
		&\le  \sum\limits_{b\in B} \prod\limits_{i=1}^m \exp\left\{\frac{s^2(2\vert b_i\vert^2)}{8}\right\} \\
		&= \sum\limits_{b\in B} \exp\left\{\frac{s^2}{2}\sum\limits_{i=1}^m b_i^2\right\} \\
		&\le \vert B \vert \exp\left\{\frac{s^2R^2}{2}\right\}
	\end{align*}
	Therefore,
	\bes
	\mathbb{E}_{(\epsilon_i)_{i=1}^m} [\mathop{\sup}_{b\in B}  \sum\limits_{i=1}^m\epsilon_i b_i] \le \frac{\log \vert B\vert}{s} + \frac{sR^2}{2}
	\ees
	Let $s = \frac{\sqrt{2\log \vert B\vert}}{R}$, we have:
	\bes
	\mathbb{E}_{(\epsilon_i)_{i=1}^m} [\mathop{\sup}_{b\in B}  \sum\limits_{i=1}^m\epsilon_i b_i] \le R\sqrt{2\log \vert B\vert}
	\ees
\end{proof}

%

\begin{lem}\label{lm7.5}
	Suppose Assumption \ref{a2} holds. Then for $1\le s\le S$ and $t_1, t_2 > 0$, the joint density function of $(X_{t_1},X_{t_1+s})$ and $(X_{t_2},X_{t_2+s})$ satisfy:
	\bel{pairwise pdf converges}\Vert p_{X_{t_1},X_{t_1+s}} - p_{X_{t_2},X_{t_2+s}}\Vert_{L_1} \le \mathcal{O}(\mathop{\min}\left\{t_1, t_2\right\}^{-\alpha})
	\eel
\end{lem}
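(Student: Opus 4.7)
The plan is to reduce the pairwise $L_1$ distance to the marginal $L_1$ distance, which is directly controlled by Assumption \ref{a2}. The key observation is that under the Markov and conditional invariance assumptions of the paper, the $s$-step transition kernel $p_{X_{t+s}\mid X_t}(y\mid x)$ does not depend on $t$. Indeed, by the Markov property the $s$-step conditional factors as an integral of products of one-step transitions $p_{X_{r+1}\mid X_r}$, and by conditional invariance each such one-step transition equals $p_{X_1\mid X_0}$. Hence there is a common kernel $q_s(y\mid x):=p_{X_s\mid X_0}(y\mid x)$ such that
\begin{equation*}
p_{X_t,X_{t+s}}(x,y)=p_t(x)\,q_s(y\mid x)\quad\text{for every }t\ge 0.
\end{equation*}

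Using this factorization, I would write
\begin{equation*}
p_{X_{t_1},X_{t_1+s}}(x,y)-p_{X_{t_2},X_{t_2+s}}(x,y)=\bigl(p_{t_1}(x)-p_{t_2}(x)\bigr)\,q_s(y\mid x),
\end{equation*}
and then integrate out $y$ first (since $\int q_s(y\mid x)\,dy=1$ for every $x$) to obtain
\begin{equation*}
\bigl\|p_{X_{t_1},X_{t_1+s}}-p_{X_{t_2},X_{t_2+s}}\bigr\|_{L_1}=\bigl\|p_{t_1}-p_{t_2}\bigr\|_{L_1}.
\end{equation*}

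The last step is a triangle inequality against the limiting marginal $p_\infty$ from Assumption \ref{a2}:
\begin{equation*}
\|p_{t_1}-p_{t_2}\|_{L_1}\le \|p_{t_1}-p_\infty\|_{L_1}+\|p_{t_2}-p_\infty\|_{L_1}\le\mathcal{O}(t_1^{-\alpha})+\mathcal{O}(t_2^{-\alpha})=\mathcal{O}(\min\{t_1,t_2\}^{-\alpha}),
\end{equation*}
which gives the claim.

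There is no real obstacle here; the only subtle point to spell out carefully is the time-homogeneity of $q_s$, which requires chaining conditional invariance across $s$ one-step kernels under the Markov property. Everything else is Fubini and a triangle inequality, so the proof amounts to a clean reduction from the joint to the marginal case.
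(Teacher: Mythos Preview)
Your proposal is correct and follows essentially the same route as the paper: establish time-homogeneity of the $s$-step transition kernel via Markov plus conditional invariance, factor the joint density as $p_t(x)\,q_s(y\mid x)$, integrate out $y$ to reduce the pairwise $L_1$ distance to $\|p_{t_1}-p_{t_2}\|_{L_1}$, and then apply the triangle inequality against $p_\infty$ from Assumption~\ref{a2}. The paper's proof is identical up to notation (it writes $p_s(\cdot\mid x)$ where you write $q_s(\cdot\mid x)$).
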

\begin{proof}
	Let $p_s(\cdot\vert x)$ be the conditional density function of $X_s\vert(X_0=x)$.
	
	For $1\le s\le S$ and $t\ge 0$, the conditional density function of $X_{t+s}\vert X_t$ satisfies:
	\begin{align*}
		p_{X_{t+s}\vert X_t}(x_s\vert x_0) &= \int p_{X_{t+s},X_{t+s-1},\cdots, X_{t+1}\vert X_t}(x_s,x_{s-1},\cdots,x_1\vert x_0) dx_{s-1}\cdots dx_1\\
		&= \int \prod\limits_{i=1}^{s}p_1(x_i\vert x_{i-1}) dx_{s-1}\cdots dx_1 \\
		&= p_s(x_s\vert x_0)
	\end{align*}
	
	Then, for $1\le s\le S$ and $t_1, t_2 > 0$, we have:
	\begin{align*}
		\Vert p_{X_{t_1},X_{t_1+s}} - p_{X_{t_2},X_{t_2+s}}\Vert_{L_1} & = \int \vert p_{t_1}(x)\cdot p_s(y\vert x) - p_{t_2}(x)\cdot p_s(y\vert x)\vert dy dx \\
		&= \int \vert p_{t_1}(x)- p_{t_2}(x)\vert \cdot p_s(y\vert x) dy dx \\
		&= \int \vert p_{t_1}(x) - p_{t_2}(x)\vert dx \\
		&= \Vert p_
		{t_1}-p_{t_2}\Vert_{L_1}
	\end{align*}
	
	Combining Assumption \ref{a2}, we have:
	\bes\Vert p_{X_{t_1},X_{t_1+s}} - p_{X_{t_2},X_{t_2+s}}\Vert_{L_1} &=&\Vert p_{t_1}-p_{t_2}\Vert_{L_1} 
	\cr & \le& \Vert p_{t_1}-p_\infty\Vert_{L_1} + \Vert p_{t_2}-p_\infty \Vert_{L_1}
	\cr &\le& \mathcal{O}(t_1^{-\alpha})+\mathcal{O}(t_2^{-\alpha})
	\cr & =& \mathcal{O}(\mathop{\min}\left\{t_1, t_2\right\}^{-\alpha})
	\ees
\end{proof}

\begin{lem}\label{lm8}
	Suppose convex function $f$ satisfies $f(1)=0$. If equation (\ref{lm0-0}) holds, then for any density functions $p$ and $q$, we have:
	\bel{lm8-1}
	D_f(p\|q) \ge \frac{a}2 \left\|p-q \right\|_{L_1}^2
	\eel
\end{lem}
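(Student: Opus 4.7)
The plan is to proceed by the standard three-step route used for Csisz\'ar–Pinsker-type inequalities, adapted to the curvature condition (\ref{lm0-0}).

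First I would recenter the divergence. Writing $r(z)=p(z)/q(z)$ and $\psi(r)=f(r)-f'(1)(r-1)$, the linear term integrates to zero because $\int(p-q)\,dz=0$, so
\begin{equation*}
D_f(p\|q)=\int q(z)\,\psi(r(z))\,dz.
\end{equation*}
Since $f(1)=0$ and $\psi(1)=\psi'(1)=0$, the convexity of $f$ gives $\psi\ge 0$ pointwise; the point is to get a quantitative lower bound on $\psi$.

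The heart of the argument is a pointwise inequality
\begin{equation*}
\psi(r)\ \ge\ \frac{a\,(r-1)^2}{2\bigl(1+b(r-1)\bigr)},\qquad r\ge 0.
\end{equation*}
I would prove this by introducing the auxiliary function $h(r)=\psi(r)-\tfrac{a(r-1)^2}{2(1+b(r-1))}$ and differentiating twice. A direct computation shows that $\tfrac{d^2}{dr^2}\bigl[\tfrac{(r-1)^2}{2(1+b(r-1))}\bigr]=\tfrac{1}{(1+b(r-1))^3}$, so by the hypothesis (\ref{lm0-0}), $h''(r)\ge 0$. Combined with $h(1)=h'(1)=0$ and $1+b(r-1)>0$ on $r\ge 0$ (since $0<b<1$), this forces $h\ge 0$. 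This is the main technical step; the main obstacle is carrying out the second-derivative computation of the comparison function cleanly and verifying positivity of the denominator.

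With the pointwise bound in hand, substituting $r=p/q$ and multiplying by $q$ yields
\begin{equation*}
D_f(p\|q)\ \ge\ \frac{a}{2}\int\frac{(p-q)^2}{(1-b)q+bp}\,dz.
\end{equation*}
Finally I would close the proof with a Cauchy--Schwarz inequality: write $|p-q|=\tfrac{|p-q|}{\sqrt{(1-b)q+bp}}\cdot\sqrt{(1-b)q+bp}$, square and integrate to obtain
\begin{equation*}
\|p-q\|_{L_1}^2\ \le\ \int\frac{(p-q)^2}{(1-b)q+bp}\,dz\cdot\int\bigl((1-b)q+bp\bigr)\,dz,
\end{equation*}
and the last factor equals $1$ since $p$ and $q$ are densities. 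Combining gives (\ref{lm8-1}). Notably, specializing to KL ($a=1,b=1/3$) and $\chi^2$ ($a=1/4,b=1/2$) recovers the classical Pinsker and $\chi^2$--$L_1$ inequalities as sanity checks.
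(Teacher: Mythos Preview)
Your proposal is correct and follows essentially the same route as the paper's proof: recenter by subtracting the tangent at $1$, use the curvature hypothesis (\ref{lm0-0}) together with the identity $\tfrac{d^2}{dx^2}\!\big[\tfrac{x^2}{2(1+bx)}\big]=\tfrac{1}{(1+bx)^3}$ to obtain the pointwise bound $f(x+1)-f'(1)x\ge \tfrac{a}{2}\tfrac{x^2}{1+bx}$, integrate against $q$, and close with Cauchy--Schwarz using $\int q(1+br)\,dz=1$. The only cosmetic difference is that the paper works with the shifted variable $r=p/q-1$ whereas you keep $r=p/q$; your use of $f'(1)$ for the linear term is the correct choice (the paper's ``$L=f'(0)$'' appears to be a typo for $f'(1)$, which is harmless since the linear term integrates to zero anyway).
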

\begin{proof}
	Let $L = f'(0)$. By equation (\ref{lm0-0}), we can easily get the following inequality:
	\bel{lm8pf-1}
	f(x+1) \ge \frac{a}2 \frac{x^2}{1+bx} + Lx, \ \ x\ge -1
	\eel
	Let $r(x) = p(x)/q(x)-1 \ge -1$, then the $f$ divergence of $p$ from $q$ can be expressed as:
	\bes
	D_f(p\Vert q) = \int q(x) f(r(x)+1) dx
	\ees
	Note that,
	\bel{lm8pf-2}
	\int r(x)q(x)dx = \int (p(x)- q(x))dx = 0
	\eel
	combining with inequality (\ref{lm8pf-1}), we have,
	\bes
	D_f(p\Vert q) &=& \int q(x) f(r(x)+1) dx
	\cr &\ge& \int q(x) (\frac{a}2 \frac{r(x)^2}{1+br(x)} + Lr(x)) dx
	\cr &=& \frac{a}2 \int q(x)  \frac{r(x)^2}{1+br(x)} dx
	\ees
	By equation (\ref{lm8pf-2}),
	\bes
	\int q(x)(1+br(x))dx = \int q(x)dx + b\int q(x)r(x)dx = 1
	\ees
	Since $0<b<1$, $1+br(x) \ge 1-b >0$ holds for all $x$. Therefore, according to Cauchy's inequality, we have:
	\bes
	D_f(p\Vert q) &\ge& \frac{a}2 \int q(x)  \frac{r(x)^2}{1+br(x)} dx
	\cr &=& \frac{a}2 \int q(x)  \frac{r(x)^2}{1+br(x)} dx \cdot \int q(x)(1+br(x))dx 
	\cr &\ge& \frac{a}2 (\int q(x) \vert r(x) \vert dx)^2
	\cr &=& \frac{a}2 (\int \vert p(x)-q(x) \vert dx)^2
	\cr &=& \frac{a}2 \Vert p-q \Vert_{L_1}^2
	\ees
\end{proof}

\section{Implementations}\label{appendix:implement}
\subsection{Simulations}
\noindent
We present here the visualization of $\phi_1$, $\phi_e$, and $\Sigma_{\infty}$ of Case 1 in the simulation study.
\begin{figure}[H]
	\centering
	\includegraphics[width=1\columnwidth]{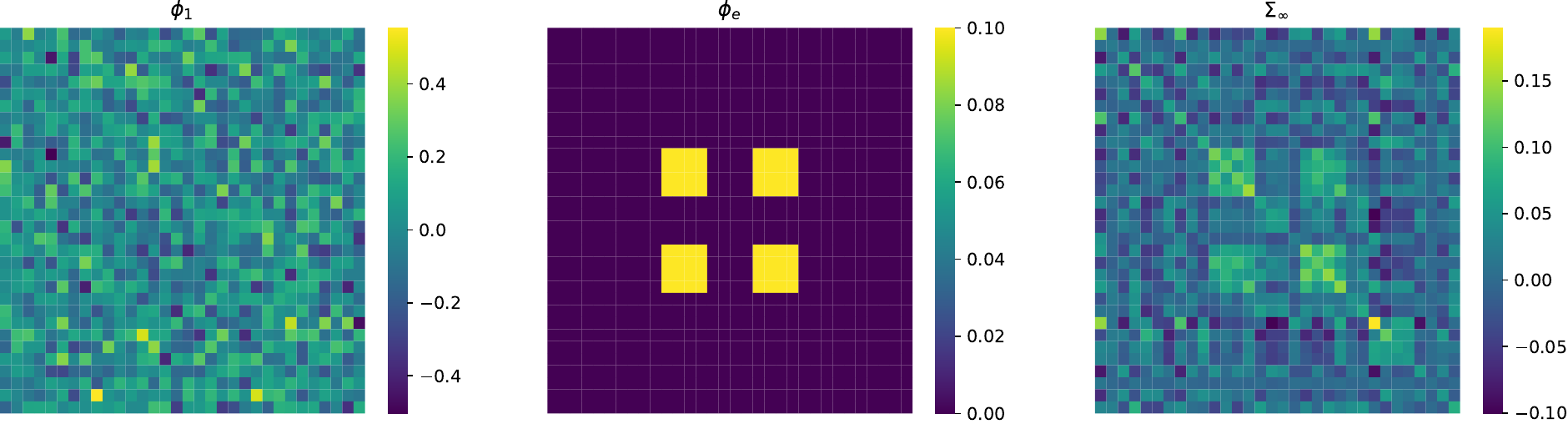}
	\caption{The visualization of $\phi_1$, $\phi_e$, and $\Sigma_{\infty}$ of Case 1 in the simulation study.}
	\label{fig:simulation-fig1}
\end{figure}
\subsection{The ADNI study}\noindent
Given two images $X$ and $Y$, structural similarity index measure (SSIM) is defined as
\bes
\text{SSIM} (X,Y) = \frac{(2\mu_x \mu_y + c_1)(2\sigma_{xy}+c_2)}{(\mu_x^2+\mu_y^2+c_1)(\sigma_x^2+\sigma_y^2+c_2)},
\ees
where $(\mu_x,\mu_y)$, $(\sigma_x^2,\sigma_y^2)$ are the mean and variance of pixel values in $X$ and $Y$ respectively.  The $\sigma_{xy}$ is the covariance of $X$ and $Y$. The $c_1 = (0.01R)^2, c_2  = (0.03R)^2$ where $R$ denotes the range of pixel values in the image. In the computing of $\text{SSIM} (\hX_{s},X_{s})$, $R$ refers to the pixel range of $X_{s}$.

Below  are the specifics of the Generator and the Discriminator used in the ADNI study.\\
\noindent\textbf{Generator: } The Generator $G \in \calG$ consists of two component: the Encoder $E_G$ and the  Decoder $D_G$. Initially, a 2D slice $X_t$ is fed into the $E_G$, which generate an embedding vector of size 130, denoted as $E_G(X_t) \in \bbR^{130}$. Next we concatenate $E_G(X_t)$ with age difference vector $\bs$ and use it as the input of $D_G$. The output of  $D_G$ is a  generated image $\hX_{t+s}$ with the same dimension of $X_t$. The structure of  Encoder $E_G$  and Decoder $D_G$ are adopted from residual U-net proposed by  \cite{zhang2018road}.\\
\textbf{Discriminator: } The Discriminator $H \in \calH$ consists of  a encoder part ($E_H$) and a critic part ($C_H$). At the encoder part, we obtain two latent features: $E_H(X_t)$ and $E_H(X_{t+s})$. Then we consider the combination of  $E_H(X_t)$, $E_H(X_{t+s})$ and $\bs$ as the input of the critic $C_H$, which produce a confident score. The encoder components $E_H(X_t)$ and $E_H(X_{t+s})$ resemble  the encoder in the  generator, while the critic part is adopted from \cite{zhang2019self}.

The number of training epochs was set to 500.  We use  AdamW optimizer  \cite{loshchilov2017decoupled} for both networks with  a learning rate and weight decay of $1\times 10^{-4}$. The size of mini-batches is set to 12. 
\end{document}